\def\BState{\State\hskip-\ALG@thistlm}
\definecolor{darkred}{RGB}{150,0,0}
\definecolor{darkgreen}{RGB}{0,150,0}
\definecolor{darkblue}{RGB}{0,0,200}
\newtheorem{theorem}{Theorem}[section]
\newtheorem{assumption}{Assumption}
\newtheorem{lemma}[theorem]{Lemma}
\newtheorem{corollary}[theorem]{Corollary}
\newtheorem{definition}[theorem]{Definition}
\newcommand{\tr}[1]{{\text{tr}(#1)}}
\newcommand{\eps}{\varepsilon}
\newcommand{\dquad}{\quad,\quad}
\newcommand{\distas}{\overset{\text{i.i.d.}}{\sim}}
\newcommand{\beq}{\begin{equation}}
\newcommand{\eeq}{\end{equation}}
\newcommand{\var}[1]{{{\text{\bf{var}}}}[#1]}
\newcommand{\zm}[1]{{{\text{\bf{zm}}}}(#1)}
\newcommand{\nn}{\nonumber}
\newcommand{\la}{\lambda}
\newcommand{\A}{{\mtx{A}}}
\newcommand{\Ah}{{\mtx{\hat{A}}}}
\newcommand{\Bh}{{\mtx{\hat{B}}}}
\newcommand{\smn}{s_{\min}}
\newcommand{\lmn}[1]{s_{\min}(#1)}
\newcommand{\Ub}{{\mtx{U}}}
\newcommand{\NK}{{\bar{N}}}
\newcommand{\B}{{{\mtx{B}}}}
\newcommand{\Gb}{{\mtx{G}}}
\newcommand{\Lc}{{\cal{L}}}
\newcommand{\Aba}{{\bar{\A}}}
\newcommand{\TK}{{L}}
\newcommand{\Pb}{{\mtx{P}}}
\newcommand{\Qb}{{\mtx{Q}}}
\newcommand{\Qbt}{{\mtx{\tilde{Q}}}}
\newcommand{\Cb}{{\mtx{C}}}
\newcommand{\Hb}{{\mtx{H}}}
\newcommand{\Hc}{{\mtx{\tilde{H}}}}
\newcommand{\Hbb}{{\mtx{\bar{H}}}}
\newcommand{\Uc}{\mtx{\tilde{U}}}
\newcommand{\bSi}{{\boldsymbol{{\Sigma}}}}
\newcommand{\bSio}[1]{{\boldsymbol{{\Sigma}}}[#1]}
\newcommand{\Db}{{\mtx{D}}}
\newcommand{\onebb}{{\mathbf{1}}}
\newcommand{\Iden}{{\mtx{I}}}
\newcommand{\M}{{\mtx{M}}}
\newcommand{\order}[1]{{\cal{O}}(#1)}
\newcommand{\el}{{\ell}}
\newcommand{\tn}[1]{\|{#1}\|_{\ell_2}}
\newcommand{\tf}[1]{\|{#1}\|_{F}}
\newcommand{\te}[1]{\|{#1}\|_{\psi_1}}
\newcommand{\tsub}[1]{\|{#1}\|_{\psi_2}}
\newcommand{\Cc}{\mathcal{C}}
\newcommand{\bbeta}{{\boldsymbol{\beta}}}
\newcommand{\bteta}{{\boldsymbol{\theta}}}
\newcommand{\bTeta}{{\boldsymbol{\Theta}}}
\newcommand{\Bc}{\mathcal{B}}
\newcommand{\Sc}{\mathcal{S}}
\newcommand{\Nn}{\mathcal{N}}
\newcommand{\vb}{\vct{v}}
\newcommand{\abb}{\mtx{\bar{a}}}
\newcommand{\w}{\vct{w}}
\newcommand{\li}{\left<}
\newcommand{\ri}{\right>}
\newcommand{\ab}{\vct{a}}
\newcommand{\ub}{{\vct{u}}}
\newcommand{\cb}{{\vct{c}}}
\newcommand{\h}{\vct{h}}
\newcommand{\g}{{\vct{g}}}
\newcommand{\tao}{\bar{\tau}}
\newcommand{\Zb}{\mtx{Z}}
\newcommand{\xh}{\hat{\x}}
\newcommand{\yh}{\hat{\y}}
\newcommand{\uu}[1]{\ub^{(#1)}}
\newcommand{\hi}[1]{\h^{(#1)}}
\newcommand{\hb}[1]{\bar{\h}^{(#1)}}
\newcommand{\hh}{\h}
\newcommand{\hbr}{\bar{\h}}
\newcommand{\lip}{B}
\newcommand{\x}{\vct{x}}
\newcommand{\y}{\vct{y}}
\newcommand{\W}{\mtx{W}}
\newcommand{\bgl}{{~\big |~}}
\definecolor{emmanuel}{RGB}{255,127,0}
\newcommand{\qb}{{\vct{q}}}
\newcommand{\R}{\mathbb{R}}
\newcommand{\Pro}{\mathbb{P}}
\newcommand{\E}{\operatorname{\mathbb{E}}}
\newcommand{\Eb}{\mtx{E}}
\newcommand{\grad}[1]{{\nabla\Lc(#1)}}
\newcommand{\vct}[1]{\bm{#1}}
\newcommand{\mtx}[1]{\bm{#1}}
\newcommand{\X}{{\mtx{X}}}
\newcommand{\Xc}{{\mtx{\tilde{X}}}}
\newcommand{\Y}{{\mtx{Y}}}
\numberwithin{equation}{section}
\def\BState{\State\hskip-\ALG@thistlm}
\title{Fast Convergence Guarantees for Learning\\Simple Recurrent Neural Networks}
\title{Fast Convergence of Gradient Descent for\\Learning Simple Recurrent Neural Networks}
\title{Gradient Descent Learns $X$ where $X=\text{Recurrent Neural Networks}$}
\title{Gradient Descent Learns Simple Recurrent Neural Networks}
\title{Stochastic Gradient Descent Learns Nonlinear State Equations}
\title{Stochastic Gradient Descent Learns \\State Equations with Nonlinear Activations}
\author{Samet Oymak\\ {\small{University of California, Riverside}}}
\date{}
\begin{document}
\maketitle
\begin{abstract} We study discrete time dynamical systems governed by the state equation $\h_{t+1}=\phi(\A\h_t+\B\ub_t)$. Here $\A,\B$ are weight matrices, $\phi$ is an activation function, and $\ub_t$ is the input data. This relation is the backbone of recurrent neural networks (e.g.~LSTMs) which have broad applications in sequential learning tasks. We utilize stochastic gradient descent to learn the weight matrices from a finite input/state trajectory $\{\ub_t,\h_t\}_{t=0}^N$. We prove that SGD estimate linearly converges to the ground truth weights while using near-optimal sample size. Our results apply to increasing activations whose derivatives are bounded away from zero. The analysis is based on i) a novel SGD convergence result with nonlinear activations and ii) careful statistical characterization of the state vector. Numerical experiments verify the fast convergence of SGD on ReLU and leaky ReLU in consistence with our theory.
\end{abstract}
\section{Introduction}
A wide range of problems involve sequential data with a natural temporal ordering. Examples include natural language processing, time series prediction, system identification, and control design, among others. State-of-the-art algorithms for sequential problems often stem from dynamical systems theory and are tailored to learn from temporally dependent data.  Linear models and algorithms; such as Kalman filter, PID controller, and linear dynamical systems, have a long history and are utilized in control theory since 1960's with great success \cite{brown1992introduction,ho1966effective,aastrom1995pid}. More recently, nonlinear models such as recurrent neural networks (RNN) found applications in complex tasks such as machine translation and speech recognition \cite{bahdanau2014neural,graves2013speech,hochreiter1997long}. Unlike feedforward neural networks, RNNs are dynamical systems that use their internal state to process inputs. The goal of this work is to shed light on the inner workings of RNNs from a theoretical point of view. In particular, we focus on the RNN state equation which is characterized by a nonlinear activation function $\phi$, state weight matrix $\A$, and input weight matrix $\B$ as follows
\begin{align}
\h_{t+1}=\phi(\A\h_t+\B\ub_t),\label{main rel}
\end{align}
Here $\h_t$ is the state vector and $\ub_t$ is the input data at timestamp $t$. This equation is the source of dynamic behavior of RNNs and distinguishes RNN from feedforward networks. The weight matrices $\A$ and $\B$ govern the dynamics of the state equation and are inferred from data. We will explore the statistical and computational efficiency of stochastic gradient descent (SGD) for learning these weight matrices.


\noindent {\bf{Contributions:}} Suppose we are given a finite trajectory of input/state pairs $(\ub_t,\h_t)_{t=0}^N$ generated from the state equation \eqref{main rel}. We consider a least-squares regression obtained from $N$ equations; with inputs $(\ub_t,\h_t)_{t=1}^N$ and outputs $(\h_{t+1})_{t=1}^N$. For a class of activation functions including leaky ReLU and for stable systems\footnote{Throughout this work, a system is called stable if the spectral norm of the state matrix $\A$ is less than $1$.}, we show that SGD {\em{linearly converges}} to the ground truth weight matrices while requiring near-optimal trajectory length $N$. In particular, the required sample size is $\order{n+p}$ where $n$ and $p$ are the dimensions of the state and input vectors respectively. Our results are extended to unstable systems when the samples are collected from multiple independent RNN trajectories rather than a single trajectory. Our results apply to increasing activation functions whose derivatives are bounded away from zero; which includes leaky ReLU. Numerical experiments on ReLU and leaky ReLU corroborate our theory and demonstrate that SGD converges faster as the activation slope increases. To obtain our results, we i) characterize the statistical properties of the state vector (e.g.~well-conditioned covariance) and ii) derive a novel SGD convergence result with nonlinear activations; which may be of independent interest. As a whole, this paper provides a step towards foundational understanding of RNN training via SGD.






\subsection{Related Work}
Our work is related to the recent theory literature on linear dynamical systems (LDS) and neural networks. 
\noindent {\bf{Linear dynamical systems:}} The state-equation \eqref{main rel} reduces to a LDS when $\phi$ is the linear activation ($\phi(x)=x$). Identifying the weight matrices is a core problem in linear system identification and is related to the optimal control problem (e.g.~linear quadratic regulator) with unknown system dynamics. While these problems are studied since 1950's \cite{ljung1998system,ljung1987system,aastrom1971system}, our work is closer to the recent literature that provides data dependent bounds and characterize the non-asymptotic learning performance. Recht and coauthors \cite{simchowitz2018learning,tu2018approximation,tu2017non,hardt2016gradient} have a series of papers exploring optimal control problem. In particular, Hardt et al. shows gradient descent learns single-input-single-output (SISO) LDS with polynomial guarantees \cite{hardt2016gradient}. Oymak and Ozay provides guarantees for learning multi-input-multi-output (MIMO) LDS \cite{oymak2018non}. Sanandaji \cite{sanandaji2011compressive,sanandaji2011exact} et al. studies the identification of sparse systems.


\noindent {\bf{Neural networks:}} There is a growing literature on the theoretical aspects of deep learning and provable algorithms for training neural networks. Most of the existing results are concerned with feedforward networks \cite{sol2017,zhong2017recovery,brutzkus2017globally, soltanolkotabi2017learning, oymak2018learning,zhong2017learning,janzamin2015beating,li2017convergence,mei2018mean}. \cite{li2017convergence,mei2018mean,janzamin2015beating,sol2017} consider learning fully-connected shallow networks with gradient descent. \cite{brutzkus2017globally,zhong2017learning,oymak2018end,du2017convolutional} address convolutional neural networks; which is an efficient weight-sharing architecture. \cite{brutzkus2017sgd,wang2018learning} studies over-parameterized networks when data is linearly separable. \cite{oymak2018end,janzamin2015beating} utilize tensor decomposition techniques for learning feedforward neural nets. For recurrent networks, Sedghi and Anandkumar \cite{sedghi2016training} proposed tensor algorithms with polynomial guarantees and Khrulkov et al.~\cite{khrulkov2017expressive} studied their expressive power. More recently, Miller and Hardt \cite{miller2018recurrent} showed that stable RNNs can be approximated by feed-forward networks. 

\section{Problem Setup}\label{setup sec}
We first introduce the notation. $\|\cdot\|$ returns the spectral norm of a matrix and $\lmn{\cdot}$ returns the minimum singular value. The activation $\phi:\R\rightarrow\R$ applies entry-wise if its input is a vector. Throughout, $\phi$ is assumed to be a $1$-Lipschitz function. With proper scaling of its parameters, the system \eqref{main rel} with a Lipschitz activation can be transformed into a system with $1$-Lipschitz activation. The functions $\bSio{\cdot}$ and $\var{\cdot}$ return the covariance of a random vector and variance of a random variable respectively. $\Iden_n$ is the identity matrix of size $n\times n$. Normal distribution with mean ${\boldsymbol{\mu}}$ and covariance $\bSi$ is denoted by $\Nn({\boldsymbol{\mu}},\bSi)$. Throughout, $c,C,c_0,c_1,\dots$ denote positive absolute constants.
\vspace{5pt}

\noindent{\bf{Setup:}} We consider the dynamical system parametrized by an activation function $\phi(\cdot)$ and weight matrices $\A\in\R^{n\times n},\B\in\R^{n\times p}$ as described in \eqref{main rel}. Here, $\hh_t$ is the $n$ dimensional state-vector and $\ub_t$ is the $p$ dimensional input to the system at time $t$. As mentioned previously, \eqref{main rel} corresponds to the state equation of a recurrent neural network. For most RNNs of interest, the state $\hh_t$ is hidden and we only get to interact with $\hh_t$ via an additional output equation. For Elman networks \cite{elman1990finding}, this equation is characterized by some output activation $\phi_y$ and output weights $\Cb,\Db$ as follows
\begin{align}
\y_t=\phi_y(\Cb\hh_t+\Db\ub_t).\label{out observe}
\end{align} In this work, our attention is restricted to the state equation \eqref{main rel}; which corresponds to setting $\y_t=\h_t$ in the output equation. To analyze \eqref{main rel} in a non-asymptotic data-dependent setup, we assume a finite input/state trajectory of $\{\ub_t,\hh_{t}\}_{t=0}^{N}$ generated by some ground truth weight matrices $(\A,\B)$. Our goal is learning the unknown weights $\A$ and $\B$ in a data and computationally efficient way. In essence, we will show that, if the trajectory length satisfies $N\gtrsim n+p$, SGD can quickly and provably accomplish this goal using a constant step size.





\begin{algorithm} [!t]\caption{Learning state equations with nonlinear activations}\label{algo 1}
\begin{algorithmic}[1]
\item {\bf{Inputs:}} $(\y_t,\h_{t},\ub_{t})_{t=1}^N$ sampled from a trajectory. Scaling $\mu$, learning rate $\eta$. Initialization $\A_0,\B_0$.
\item {\bf{Outputs:}} Estimates $\Ah,\Bh$ of the weight matrices $\A,\B$.
\item $\x_t\gets [{\mu}\h_t^T~\ub_t^T]^T$ for $1\leq t\leq N$.
\item $\bTeta_0\gets[\mu^{-1}\A_0~\B_0]$
{\For {$\tau$ from $1$ to \text{END}}
\State Pick $\gamma_\tau$ from $\{1,2,\dots,N\}$ uniformly at random.
\State $\bTeta_{\tau}\gets \bTeta_{\tau-1}-\eta \nabla \Lc_{\gamma_\tau}(\bTeta_{\tau-1})$
\EndFor}\\
\Return $[\Ah~\Bh]\gets\bTeta_{\text{END}}\begin{bmatrix}\mu\Iden_n&0\\0&\Iden_p\end{bmatrix}$.
\end{algorithmic}
\end{algorithm}

\noindent {\bf{Appoach:}} Our approach is described in Algorithm \ref{algo 1}. It takes two hyperparameters; the scaling factor $\mu$ and learning rate $\eta$. Using the RNN trajectory, we construct $N$ triples of the form $\{\ub_t,\hh_t,\hh_{t+1}\}_{t=1}^{N}$. We formulate a regression problem by defining the output vector $\y_t$, input vector $\x_t$, and the target parameter $\Cb$ as follows
\begin{align}
\y_t=\h_{t+1}\quad,\quad\x_t=\begin{bmatrix}\mu\h_t\\\ub_t\end{bmatrix}\in\R^{n+p}\quad,\quad\Cb=[\mu^{-1}\A~\B]\in\R^{n\times (n+p)}.\label{reparam}
\end{align} 
With this reparameterization, we find the input/output identity $\y_{t}=\phi(\Cb\x_t)$. We will consider the least-squares regression given by
\begin{align}
\Lc(\bTeta)=\frac{1}{N}\sum_{t=1}^N \Lc_t(\bTeta)\quad\text{where}\quad \Lc_t(\bTeta)=\frac{1}{2}\tn{\y_{t}-\phi(\bTeta\x_t)}^2.\label{sgd loss}
\end{align}
For learning the ground truth parameter $\Cb$, we utilize SGD on the loss function \eqref{sgd loss} with a constant learning rate $\eta$. Starting from an initial point $\bTeta_0$, after $\text{END}$ SGD iterations, Algrorithm \ref{algo 1} returns an estimate $\hat{\Cb}=\Theta_{\text{END}}$. Estimates of $\A$ and $\B$ are decoded from the left and right submatrices of $\hat{\Cb}$ respectively.

\section{Main Results}\label{main result sec}
\subsection{Preliminaries}
The analysis of the state equation naturally depends on the choice of the activation function; which is the source of nonlinearity. We first define a class of Lipschitz and increasing activation functions.
\begin{definition}[$\beta$-increasing activation]\label{betain} Given $1\geq \beta\geq  0$, the activation function $\phi$ satisfies $\phi(0)=0$ and $1\geq \phi'(x)\geq\beta$ for all $x\in\R$.
\end{definition}
Our results will apply to strictly increasing activations where $\phi$ is $\beta$-increasing for some $\beta>0$. Observe that, this excludes ReLU activation which has zero derivative for negative values. However, it includes Leaky ReLU which is a generalization of ReLU. Parameterized by $1\geq \beta\geq 0$, Leaky ReLU is a $\beta$-increasing function given by
\begin{align}
\text{LReLU}(x)=\max(\beta x,x).\label{leaky relu}
\end{align}
In general, given an increasing and $1$-Lipschitz activation $\phi$, a $\beta$-increasing function $\phi_\beta$ can be obtained by blending $\phi$ with the linear activation, i.e. $\phi_\beta(x)=(1-\beta)\phi(x)+\beta x$.

A critical property that enables SGD is that the state-vector covariance $\bSio{\h_t}$ is well-conditioned under proper assumptions. The lemma below provides upper and lower bounds on this covariance matrix in terms of problem variables.
\begin{lemma}[State vector covariance] \label{stcov}Consider the state equation \eqref{main rel} where $\h_0=0$ and $\ub_t\distas\Nn(0,\Iden_p)$. Define the upper bound term $B_t$ as
\begin{align}
B_t=\|\B\|\sqrt{\frac{1-\|\A\|^{2t}}{1-\|\A\|^2}}.\label{bteq}
\end{align}
\begin{itemize}
\item Suppose $\phi$ is $1$-Lipschitz and $\phi(0)=0$. Then, for all $t\geq 0$, $\bSio{\h_t}\preceq B_t^2\Iden_n$.
\item Suppose $\phi$ is a $\beta$-increasing function and $p\geq n$. Then, $\bSio{\h_t}\succeq  \beta^2\lmn{\B}^2\Iden_n$.
\end{itemize}
\end{lemma}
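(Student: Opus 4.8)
Both bounds reduce to controlling the scalar variances $\var{\vct v^{T}\h_t}$ over unit vectors $\vct v$, since $\bSio{\h_t}\preceq B_t^2\Iden_n$ amounts to $\sup_{\tn{\vct v}=1}\var{\vct v^{T}\h_t}\le B_t^2$ and $\bSio{\h_t}\succeq\beta^2\lmn{\B}^2\Iden_n$ amounts to $\inf_{\tn{\vct v}=1}\var{\vct v^{T}\h_t}\ge\beta^2\lmn{\B}^2$. Throughout I use that the inputs $\ub_0,\dots,\ub_{t-1}$ are i.i.d.\ $\Nn(0,\Iden_p)$ and (for $t\ge1$) that $\h_t$ is a smooth function of them; I treat the two bullets by different routes.

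For the upper bound I would unroll \eqref{main rel} from $\h_0=0$ to write $\h_t=F_t(\qb)$ as a deterministic map of the stacked Gaussian input $\qb:=[\ub_0^{T}\,\cdots\,\ub_{t-1}^{T}]^{T}\sim\Nn(0,\Iden_{pt})$. For fixed unit $\vct v$ the map $\qb\mapsto\vct v^{T}\h_t$ is Lipschitz (as $\phi$ is $1$-Lipschitz), so the Gaussian Poincar\'e inequality gives $\var{\vct v^{T}\h_t}\le\E\,\tn{\nabla_{\qb}(\vct v^{T}\h_t)}^2$, and it remains to bound the gradient pointwise. Differentiating the recursion yields $\partial\h_t/\partial\ub_s=\big(\prod_{j=s+1}^{t-1}\Db_j\A\big)\Db_s\B$ with $\Db_j=\diag{\phi'(\A\h_j+\B\ub_j)}$ and $\|\Db_j\|\le1$, whence $\|\partial\h_t/\partial\ub_s\|\le\|\A\|^{\,t-1-s}\|\B\|$. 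Collecting the column blocks,
\[
\tn{\nabla_{\qb}(\vct v^{T}\h_t)}^2=\sum_{s=0}^{t-1}\tn{(\partial\h_t/\partial\ub_s)^{T}\vct v}^2\le\|\B\|^2\sum_{r=0}^{t-1}\|\A\|^{2r}=B_t^2
\]
uniformly in $\qb$, which yields $\var{\vct v^{T}\h_t}\le B_t^2$ and hence the first claim.

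For the lower bound I would not unroll but peel off the last input. By the law of total covariance $\bSio{\h_t}\succeq\E_{\h_{t-1}}\bSio{\h_t\mid\h_{t-1}}$, so it suffices to show $\bSio{\h_t\mid\h_{t-1}=\vct h}\succeq\beta^2\lmn{\B}^2\Iden_n$ for every fixed $\vct h$, where $\h_t=\phi(\g)$ with $\g=\A\vct h+\B\ub_{t-1}$ and $\ub_{t-1}\sim\Nn(0,\Iden_p)$. Fix unit $\vct v$; assuming $\lmn{\B}>0$ (else the bound is vacuous) and $p\ge n$, put $\vct w=\B^{\dagger}\vct v=\B^{T}(\B\B^{T})^{-1}\vct v$, so that $\B\vct w=\vct v$ and $\tn{\vct w}^2=\vct v^{T}(\B\B^{T})^{-1}\vct v\le\lmn{\B}^{-2}$. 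Writing $\ub_{t-1}=s\,\vct w/\tn{\vct w}+\ub_\perp$ with $s\sim\Nn(0,1)$ and conditioning on $\ub_\perp$, the scalar $f(s):=\vct v^{T}\phi(\vct c+(s/\tn{\vct w})\vct v)$ (with $\vct c=\A\vct h+\B\ub_\perp$ constant given $\ub_\perp$) has derivative $f'(s)=\tn{\vct w}^{-1}\sum_i v_i^2\,\phi'(g_i)\ge\beta/\tn{\vct w}$, because the weights $v_i^2$ are nonnegative and $\phi'\ge\beta$. A monotone map with $f'\ge m>0$ satisfies $\var_s f\ge m^2$ (decompose $f(s)=ms+(f(s)-ms)$ and note $s$ is nonnegatively correlated with the nondecreasing remainder), so $\var{\vct v^{T}\h_t\mid\vct h}\ge\E_{\ub_\perp}\var_s f\ge\beta^2/\tn{\vct w}^2\ge\beta^2\lmn{\B}^2$, and the second claim follows after averaging over $\h_{t-1}$.

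The delicate step is the lower bound. Because $\diag{\phi'(\g)}$ varies across coordinates, the tempting directional estimate $\var{\vct v^{T}\phi(\g)}\ge\beta^2\tn{\B^{T}\vct v}^2$ (i.e.\ comparison with the linear system's covariance) can actually fail, so the isotropic bound $\beta^2\lmn{\B}^2$ cannot be extracted coordinate-by-coordinate. The device that rescues it is aligning the conditioning direction via $\vct w=\B^{\dagger}\vct v$ so that the fresh input perturbs $\g$ exactly along $\vct v$; this is what converts the derivative into the sign-definite combination $\sum_i v_i^2\phi'(g_i)$, and it is precisely here that both the $\beta$-increasing hypothesis and $p\ge n$ (surjectivity of $\B$, so that $\B\vct w=\vct v$ is solvable) are used. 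A minor technical point common to both parts is that $\phi$ is only Lipschitz, so the chain-rule and Poincar\'e steps should be run with almost-everywhere derivatives, or after a smoothing of $\phi$ that is removed at the end.
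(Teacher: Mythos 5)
Your proposal is correct. For the upper bound you take essentially the paper's route: the paper (Lemmas \ref{lip state} and \ref{upp bound}) establishes that $\h_t$ is a $B_t$-Lipschitz function of the stacked Gaussian input by a telescoping argument and then invokes the variance inequality for Lipschitz functions of Gaussians, which is exactly your Poincar\'e step with the gradient bounded via the chain rule; the only difference is that the paper works with Lipschitz increments directly and so never needs a.e.\ differentiability of $\phi$, sidestepping the smoothing caveat you flag. For the lower bound your route is genuinely different from the paper's. The paper (Theorem \ref{lwbnd1} via Lemmas \ref{vec low}, \ref{totcov}, \ref{scalar lem}) decomposes the fresh noise in state space, writing $\B\ub_{t-1}\sim\g_1+\g_2$ with $\g_1\sim\Nn(0,\lmn{\B\B^T}\Iden_n)$ isotropic and independent of everything else; conditioning on all but $\g_1$ makes $\phi(\g_1+\cdot)$ have independent entries, so the conditional covariance is diagonal and each diagonal entry is handled by a scalar variance lemma, yielding the full matrix bound $\succeq\beta^2\lmn{\B}^2\Iden_n$ in one shot. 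You instead decompose in input space along $\vct{w}=\B^{\dagger}\vct{v}$ and reduce each quadratic form to a one-dimensional monotone-function variance bound; this is equally valid (your $\var_s f\ge m^2$ step is a correct association-inequality argument, and is in substance the paper's Lemma \ref{scalar lem}), makes the role of $p\ge n$ more transparent, and in principle gives the slightly finer direction-dependent estimate $\var{\vct{v}^T\h_t}\ge\beta^2/(\vct{v}^T(\B\B^T)^{-1}\vct{v})$, whereas the paper's argument is more modular and delivers the positive-semidefinite ordering without quantifying over directions. Both decompositions lean on Gaussianity of $\ub_{t-1}$ in an essential way (to extract an independent component), and both give the same constant.
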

As a natural extension from linear dynamical systems, we will say the system is stable if $\|\A\|<1$ and unstable otherwise. For activations we consider, stability implies that if the input is set to $0$, state vector $\h_t$ will exponentially converge to $0$ i.e.~the system forgets the past states quickly. This is also the reason $(B_t)_{t\geq 0}$ sequence converges for stable systems and diverges otherwise. The condition number of the covariance will play a critical role in our analysis. Using Lemma \ref{stcov}, this number can be upper bounded by ${\rho}$ defined as
\begin{align}
\rho=\left(\frac{B_\infty}{\beta\lmn{\B}}\right)^2=\left(\frac{\|\B\|}{\lmn{\B}}\right)^2\frac{1}{\beta^2(1-\|\A\|^2)}.\label{rhodef}
\end{align}
Observe that, the condition number of $\B$ appears inside the $\rho$ term.

\subsection{Learning from Single Trajectory}
Our main result applies to stable systems ($\|\A\|<1$) and provides a non-asymptotic convergence guarantee for SGD in terms of the upper bound on the state vector covariance. This result characterizes the sample complexity and the rate of convergence of SGD; and also provides insights into the role of activation function and the spectral norm of $\A$.
\begin{theorem} [Main result] \label{main thm} Let $\{\ub_t,\h_{t+1}\}_{t=1}^N$ be a finite trajectory generated from the state equation \eqref{main rel}. Suppose $\|\A\|<1$, $\phi$ is $\beta$-increasing, $\h_0=0$, $p\geq n$, and $\ub_t\distas\Nn(0,\Iden_p)$. Let $\rho$ be same as \eqref{rhodef} and $c,C,c_0$ be properly chosen absolute constants. Pick the trajectory length $N$ to satisfy
\[
N\geq CL \rho^2(n+p),
\]
where $ L= 1-\frac{\log(cn\rho)}{\log \|\A\|}$. Pick scaling $\mu=1/B_\infty$, learning rate $\eta=c_0\frac{\beta^2}{\rho n(n+p)}$, and consider the loss function \eqref{sgd loss}. With probability $1-4N\exp(-100n)-8L\exp(-\order{\frac{N}{L\rho^2}})$, starting from an initial point $\bTeta_0$, for all $\tau\geq 0$, the SGD iterations described in Algorithm \ref{algo 1} satisfies
\begin{align}
\E[\tf{\bTeta_{\tau}-\Cb}^2]\leq (1-c_0\frac{\beta^4}{2\rho^2 n(n+p)})^{\tau}\tf{\bTeta_{0}-\Cb}^2.\label{conv bounddd}
\end{align}
Here the expectation is over the randomness of the SGD updates.
\end{theorem}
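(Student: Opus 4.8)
The plan is to combine a deterministic SGD contraction argument, valid on a high-probability event over the trajectory, with a separate statistical analysis showing that this event holds. Throughout write $\Eb_\tau=\bTeta_\tau-\Cb$ for the error matrix and $\bSih=\frac1N\sum_{t=1}^N\x_t\x_t^T$ for the empirical second-moment matrix of the rescaled samples. The starting point is to linearize the residual: since $\y_t=\phi(\Cb\x_t)$, applying the mean value theorem entrywise gives $\y_t-\phi(\bTeta\x_t)=-\Db_t\Eb\x_t$, where $\Db_t$ is a diagonal matrix whose entries are values of $\phi'$ and hence lie in $[\beta,1]$ because $\phi$ is $\beta$-increasing. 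A direct computation then yields the clean gradient formula $\nabla\Lc_t(\bTeta)=\Db_t'\Db_t\Eb\x_t\x_t^T$, with $\Db_t'=\diag{\phi'(\bTeta\x_t)}$ also having entries in $[\beta,1]$.

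The deterministic core rests on two inequalities that follow from this formula. First, a one-point (restricted strong) convexity bound: $\iprod{\nabla\Lc(\bTeta)}{\Eb}=\frac1N\sum_t(\Eb\x_t)^T\Db_t\Db_t'(\Eb\x_t)\geq\beta^2\tr{\Eb\bSih\Eb^T}$. Second, a gradient second-moment bound using that the diagonal factors have entries at most one: $\E_{\gamma}[\tf{\nabla\Lc_\gamma(\bTeta)}^2]=\frac1N\sum_t\tn{\x_t}^2\tn{\Db_t'\Db_t\Eb\x_t}^2\leq(\max_t\tn{\x_t}^2)\tr{\Eb\bSih\Eb^T}$. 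Expanding the update $\Eb_\tau=\Eb_{\tau-1}-\eta\nabla\Lc_{\gamma_\tau}(\bTeta_{\tau-1})$ and taking expectation over the random index $\gamma_\tau$, these two bounds give
\[
\E[\tf{\Eb_\tau}^2\mid\Eb_{\tau-1}]\leq\tf{\Eb_{\tau-1}}^2-\big(2\eta\beta^2-\eta^2\max_t\tn{\x_t}^2\big)\tr{\Eb_{\tau-1}\bSih\Eb_{\tau-1}^T}.
\]
Choosing $\eta$ small enough that the parenthesized factor is at least $\eta\beta^2$ and using $\tr{\Eb\bSih\Eb^T}\geq\lmn{\bSih}\tf{\Eb}^2$ collapses this to a contraction $\E[\tf{\Eb_\tau}^2\mid\Eb_{\tau-1}]\leq(1-\eta\beta^2\lmn{\bSih})\tf{\Eb_{\tau-1}}^2$; iterating via the tower rule yields the stated geometric rate, provided $\lmn{\bSih}\gtrsim1/\rho$ so that $\eta\beta^2\lmn{\bSih}$ matches $c_0\beta^4/(\rho^2n(n+p))$.

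It remains to establish the good event: that $\bSih$ is well-conditioned ($\lmn{\bSih}\gtrsim1/\rho$ and $\lmx{\bSih}\lesssim1$) and that $\max_t\tn{\x_t}^2$ is controlled. The population bounds come directly from Lemma \ref{stcov}: since $\h_t$ depends only on $\ub_0,\dots,\ub_{t-1}$ it is independent of $\ub_t$, so $\bSio{\x_t}$ is block diagonal, and the choice $\mu=1/B_\infty$ rescales the state block so that $\frac1\rho\Iden_{n+p}\preceq\bSio{\x_t}\preceq\Iden_{n+p}$ for every $t\geq1$ (the lower bound uses $p\geq n$). The sub-Gaussianity of $\h_t$, a Lipschitz function of Gaussian inputs, controls $\max_t\tn{\x_t}^2\lesssim n+p$ and produces the $4N\exp(-100n)$ term by a union bound over $t$.

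The main obstacle is the concentration of $\bSih$ around its population counterpart, because $\x_1,\dots,\x_N$ come from a single trajectory and are temporally dependent. The plan is to exploit stability: since $\|\A\|<1$, the influence of a past state decays like $\|\A\|^{L}$, which falls below the threshold $1/(cn\rho)$ exactly once $L\geq1-\log(cn\rho)/\log\|\A\|$. Partitioning $\{1,\dots,N\}$ into $L$ subsequences whose indices are spaced at least $L$ apart makes the samples within each subsequence nearly independent; applying a sub-Gaussian covariance concentration bound to each subsequence requires $N/L\gtrsim\rho^2(n+p)$ samples, which is precisely the hypothesis $N\geq CL\rho^2(n+p)$ and yields the failure term $8L\exp(-\order{N/(L\rho^2)})$ after a union bound over the $L$ groups. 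Carrying out this decoupling—quantifying the error incurred by treating $L$-separated states as independent while keeping the sample size near-optimal—is the technically delicate step on which the whole argument hinges.
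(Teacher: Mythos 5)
Your proposal is correct and follows essentially the same route as the paper: a deterministic SGD contraction driven by the two bounds $\iprod{\nabla\Lc(\bTeta)}{\Eb}\geq\beta^2\tr{\Eb\bSih\Eb^T}$ and $\E_\gamma[\tf{\nabla\Lc_\gamma}^2]\leq(\max_t\tn{\x_t}^2)\tr{\Eb\bSih\Eb^T}$ (the paper states this row-by-row as Theorem \ref{det conv}, using divided differences $\frac{\phi(a)-\phi(b)}{a-b}\in[\beta,1]$ in place of your mean-value theorem, and sums over the $n$ rows), combined with exactly the statistical program you describe: covariance bounds from Lemma \ref{stcov}, sub-Gaussian control of $\tn{\x_t}$, and splitting the trajectory into $L$ interleaved sub-trajectories whose near-independence comes from $\|\A\|^{L-1}\lesssim 1/(cn\rho)$. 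The one step you defer — quantifying the decoupling error — is where the paper does its technical work (Definitions \ref{trst1}--\ref{trst}, Lemmas \ref{trunc det}--\ref{trun lem rand}, and the perturbation argument of Lemma \ref{dmc thm}, which replaces each sub-trajectory by genuinely independent truncated states and absorbs the $\order{\sqrt{n\gamma_+ N_0}\,\|\A\|^{L-1}}$ spectral-norm perturbation), so your outline matches the paper's proof in all essentials.
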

\noindent{\bf{Sample complexity:}} Theorem \ref{main thm} essentially requires $N\gtrsim (n+p)/{\beta^4}$ samples for learning. This can be seen by unpacking \eqref{rhodef} and ignoring the logarithmic $L$ term and the condition number of $\B$. Observe that $\order{n+p}$ growth achieves near-optimal sample size for our problem. Each state equation \eqref{main rel} consists of $n$ sub-equations (one for each entry of $\h_{t+1}$). We collect $N$ state equations to obtain a system of $Nn$ equations. On the other hand, the total number of unknown parameters in $\A$ and $\B$ are $n(n+p)$. This implies Theorem \ref{main thm} is applicable as soon as the problem is mildly overdetermined i.e.~$Nn\gtrsim n(n+p)$.

\noindent{\bf{Computational complexity:}} Theorem \ref{main thm} requires $\order{n(n+p)\log\frac{1}{\eps}}$ iterations to reach $\eps$-neighborhood of the ground truth. Our analysis reveals that, this rate can be accelerated if the state vector is zero-mean. This happens for odd activation functions satisfying $\phi(-x)=-\phi(x)$ (e.g.~linear activation). The result below is a corollary and requires $\times n$ less iterations. 
\begin{theorem} [Faster learning for odd activations] \label{thm odd}Consider the same setup provided in Theorem \ref{main thm}. Additionally, assume that $\phi$ is an odd function. Pick scaling $\mu=1/B_\infty$, learning rate $\eta=c_0\frac{\beta^2}{\rho (n+p)}$, and consider the loss function \eqref{sgd loss}. With probability $1-4N\exp(-100n)-8L\exp(-\order{\frac{N}{L\rho^2}})$, starting from an initial point $\bTeta_0$, for all $\tau\geq 0$, the SGD iterations described in Algorithm \ref{algo 1} satisfies
\begin{align}
\E[\tf{\bTeta_{\tau}-\Cb}^2]\leq (1-c_0\frac{\beta^4}{2\rho^2 (n+p)})^{\tau}\tf{\bTeta_{0}-\Cb}^2,\label{conv bound2}
\end{align}
where the expectation is over the randomness of the SGD updates.
\end{theorem}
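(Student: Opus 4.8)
The plan is to obtain Theorem \ref{thm odd} as a corollary of the proof of Theorem \ref{main thm} by re-examining a single deterministic ingredient: the bound that controls the second moment of the stochastic gradient. All of the probabilistic content, namely the concentration of the empirical moments along the trajectory, the handling of the temporal dependence, and the strong-convexity (descent) lower bound, is inherited verbatim from Theorem \ref{main thm}; only the constant multiplying $\tf{\bTeta-\Cb}^2$ in the second-moment estimate improves, and it is exactly this constant that fixes the admissible step size and hence the contraction rate. This is consistent with the two theorems carrying the identical failure probability $1-4N\exp(-100n)-8L\exp(-\order{N/(L\rho^2)})$.

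First I would establish the structural consequence of oddness: when $\phi$ is odd, $\h_0=0$, and the inputs $\ub_t$ are symmetric (Gaussian inputs are), the state vector $\h_t$ is a symmetric random vector for every $t$, so in particular $\E[\h_t]=0$. This follows by induction on $t$ from \eqref{main rel}: flipping the input path $(\ub_s)\mapsto(-\ub_s)$ flips the sign of every state, since $\phi(-x)=-\phi(x)$ propagates the reversal through $\h_{s+1}=\phi(\A\h_s+\B\ub_s)$, and the flipped path is equal in distribution to the original. Hence the regression input $\x_t=[\mu\h_t^T~\ub_t^T]^T$ is symmetric, giving $\E[\x_t\x_t^T]=\bSio{\x_t}$ with no rank-one mean correction and with all odd cross-moments vanishing.

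Next I would revisit the gradient second-moment bound. Writing $\Db=\bTeta-\Cb$ and using that $\phi$ is $1$-Lipschitz, the per-sample gradient obeys $\tf{\nabla\Lc_t(\bTeta)}^2\leq\tn{\Db\x_t}^2\tn{\x_t}^2$, so the quantity to control is $\E[\tn{\Db\x_t}^2\tn{\x_t}^2]$. Decomposing $\x_t=\m+\z_t$ with $\m=\E[\x_t]$ and expanding the product, the term that dominates the dimension count in the general analysis of Theorem \ref{main thm} is $\tn{\Db\m}^2\E[\tn{\z_t}^2]$, bounded by $\tf{\Db}^2\tn{\m}^2(n+p)$; after the scaling $\mu=1/B_\infty$ the mean norm $\tn{\m}^2$ can be of order $n$, producing the factor $n(n+p)$. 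Under oddness $\m=0$, so this contribution vanishes together with the odd cross-moments of $\z_t$, leaving only the genuine fourth-moment term $\E[\tn{\z_t}^2\tn{\Db\z_t}^2]$, which scales like $(n+p)\tf{\Db}^2$. The second-moment constant $M$ therefore drops from order $n(n+p)$ to order $(n+p)$.

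Finally I would feed the improved constant into the one-step SGD contraction of Theorem \ref{main thm}. The conditional update satisfies $\E[\tf{\bTeta_\tau-\Cb}^2]\leq(1-2\eta\alpha+\eta^2 M)\tf{\bTeta_{\tau-1}-\Cb}^2$, where the strong-convexity constant $\alpha\sim\beta^2/\rho$, coming from $\phi'\geq\beta$ together with the covariance lower bound of Lemma \ref{stcov}, is unchanged, and $M$ is the improved second-moment constant. Choosing $\eta$ of order $\alpha/M$ gives contraction factor $1-c\,\alpha^2/M$ for an absolute constant $c$; with $M\sim(n+p)$ rather than $n(n+p)$ this admits the larger step size $\eta=c_0\beta^2/(\rho(n+p))$ and yields the rate $1-c_0\beta^4/(2\rho^2(n+p))$ claimed in \eqref{conv bound2}. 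I expect the main obstacle to be this last moment estimate for the genuinely non-Gaussian state vector: oddness kills the mean and all odd cross-moments for free, but one must still bound $\E[\tn{\z_t}^2\tn{\Db\z_t}^2]$ by an $(n+p)$-scale quantity using the sub-Gaussian and Lipschitz-concentration tools developed for Theorem \ref{main thm}, and the delicate point is verifying that no hidden factor of $n$ re-enters through these fourth moments or through the trajectory dependence.
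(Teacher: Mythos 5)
Your proposal is correct and follows essentially the same route as the paper: the sign-flip symmetry argument giving $\E[\h_t]=0$ is exactly Lemma \ref{lem odd}, and your observation that the zero mean removes the order-$n$ inflation of the second-moment constant (thereby permitting the larger step size $\eta\sim\beta^2/(\rho(n+p))$) is precisely how the paper obtains the speedup, packaged there as setting $\theta=0$ in Assumption \ref{ass well} so that the factor $(\theta+\sqrt{2})^2$ in Theorem \ref{MAIN} drops from order $n$ to a constant. The fourth-moment worry you flag at the end is handled in the paper's framework by the uniform row-norm bound $\tn{\x_t}^2\lesssim n+p$ together with the empirical covariance upper bound, so no hidden factor of $n$ re-enters.
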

Another aspect of the convergence rate is the dependence on $\beta$. In terms of $\beta$, the SGD error \eqref{conv bounddd} decays as $(1-\order{\beta^8})^{\tau}$. While it is not clear how optimal is the exponent $8$, numerical experiments in Section \ref{numeric sec} demonstrate that larger $\beta$ indeed results in drastically faster convergence.
\section{Main Ideas and Proof Strategy}\label{gen strat}
To prove the results of the previous section, we derive a deterministic result that establishes the linear convergence of SGD for $\beta$-increasing functions. For linear convergence proofs, a typical strategy is showing the {\em{strong convexity}} of the loss function i.e.~showing that, for some $\alpha>0$ and all points $\vb,\ub$, the gradient satisfies
\[
\li\nabla\Lc(\vb)-\nabla\Lc(\ub),\vb-\ub\ri\geq \alpha \tn{\vb-\ub}^2.
\]
The core idea of our convergence result is that the strong convexity parameter of the loss function with $\beta$-increasing activations can be connected to the loss function with {\em{linear activations}}. In particular, recalling \eqref{sgd loss}, set $\y^{\text{lin}}_t=\Cb\x_t$ and define the linear loss to be
\[
\Lc^{\text{lin}}(\bTeta)=\frac{1}{2N}\sum_{i=1}^N \tn{\y^{\text{lin}}_t-\bTeta\x_t}^2.
\]
Denoting the strong convexity parameter of the original loss by $\alpha_\phi$ and that of linear loss by $\alpha_{\text{lin}}$, we argue that $\alpha_{\phi}\geq \beta^2\alpha_{\text{lin}}$; which allows us to establish a convergence result as soon as $\alpha_{\text{lin}}$ is strictly positive. Next result is our SGD convergence theorem which follows from this discussion.
\begin{theorem}[Deterministic convergence]\label{det conv} Suppose a data set $\{\x_i,\y_i\}_{i=1}^N$ is given; where output $\y_i$ is related to input $\x_i$ via $\y_i=\phi(\li\x_i,\bteta\ri)$ for some $\bteta\in\R^n$. Suppose $\beta>0$ and $\phi$ is a $\beta$-increasing. Let $\gamma_+\geq\gamma_->0$ be scalars. Assume that input samples satisfy the bounds
\[
\gamma_+\Iden_n\succeq\frac{1}{N}\sum_{i=1}^N\x_i\x_i^T\succeq\gamma_-\Iden_n\quad,\quad \tn{\x_i}^2\leq B~\text{for all}~i.
\]
Let $\{r_\tau\}_{\tau=0}^\infty$ be a sequence of i.i.d.~integers uniformly distributed between $1$ to $N$. Then, starting from an arbitrary point $\bteta_0$, setting learning rate $\eta=\frac{\beta^2\gamma_-}{\gamma_+B}$, for all $\tau\geq 0$, the SGD iterations for quadratic loss
\begin{align}
\bteta_{\tau+1}=\bteta_\tau-\eta(\phi(\x_{r_\tau}^T\bteta_\tau)-\y_{r_\tau})\phi'(\x_{r_\tau}^T\bteta_\tau)\x_{r_\tau},
\end{align}
satisfies the error bound
\begin{align}\label{adrec}
\E[\tn{\bteta_\tau-\bteta}^2]\leq \tn{\bteta_0-\bteta}^2(1-\frac{\beta^4\gamma_-^2}{\gamma_+B})^{\tau},
\end{align}
where the expectation is over the random selection of the SGD iterations $\{r_\tau\}_{\tau=0}^\infty$.
\end{theorem}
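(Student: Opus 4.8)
The plan is to run a standard one-step stochastic contraction argument, exploiting realizability ($\y_i=\phi(\langle\x_i,\bteta\rangle)$) to control both the descent term and the variance term by the \emph{same} empirical quadratic form $\tfrac1N\sum_i\langle\x_i,\ddelta\rangle^2$. Write $\ddelta_\tau=\bteta_\tau-\bteta$ and let $\g_\tau=(\phi(\x_{r_\tau}^T\bteta_\tau)-\y_{r_\tau})\phi'(\x_{r_\tau}^T\bteta_\tau)\x_{r_\tau}=\nabla\Lc_{r_\tau}(\bteta_\tau)$ be the stochastic gradient. Expanding the update gives
\[
\tn{\ddelta_{\tau+1}}^2=\tn{\ddelta_\tau}^2-2\eta\langle\g_\tau,\ddelta_\tau\rangle+\eta^2\tn{\g_\tau}^2.
\]
Conditioning on the history (so that $\bteta_\tau$ is fixed) and taking expectation over the fresh index $r_\tau$, which is uniform on $\{1,\dots,N\}$ and independent of $\bteta_\tau$, replaces $\g_\tau$ by the full gradient $\nabla\Lc(\bteta_\tau)$ in the cross term and by the average $\tfrac1N\sum_i\tn{\nabla\Lc_i(\bteta_\tau)}^2$ in the last term. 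It then suffices to lower bound the cross term (an effective strong-convexity estimate) and upper bound the averaged squared gradient (a smoothness/variance estimate), both in terms of $\tn{\ddelta_\tau}^2$.

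For the cross term, realizability lets me write the residual as $\phi(\x_i^T\bteta_\tau)-\y_i=\phi(\x_i^T\bteta_\tau)-\phi(\x_i^T\bteta)$. Since $\phi$ is $\beta$-increasing, integrating $\phi'\in[\beta,1]$ yields, for all scalars $a,b$, the two inequalities $(\phi(a)-\phi(b))(a-b)\geq\beta(a-b)^2$ and $|\phi(a)-\phi(b)|\leq|a-b|$; phrasing things through monotonicity and the Lipschitz constant this way sidesteps the non-differentiable kink of leaky ReLU. Applying the first with $a=\x_i^T\bteta_\tau,\ b=\x_i^T\bteta$ (so $a-b=\langle\x_i,\ddelta_\tau\rangle$) and using $\phi'(\x_i^T\bteta_\tau)\geq\beta$ for the remaining derivative factor, each summand obeys $(\phi(\x_i^T\bteta_\tau)-\y_i)\phi'(\x_i^T\bteta_\tau)\langle\x_i,\ddelta_\tau\rangle\geq\beta^2\langle\x_i,\ddelta_\tau\rangle^2$. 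Averaging and invoking the lower eigenvalue bound gives
\[
\langle\nabla\Lc(\bteta_\tau),\ddelta_\tau\rangle\geq\beta^2\,\ddelta_\tau^T\!\left(\tfrac1N\sum_i\x_i\x_i^T\right)\!\ddelta_\tau\geq\beta^2\gamma_-\tn{\ddelta_\tau}^2.
\]
This is precisely the $\alpha_\phi\geq\beta^2\alpha_{\text{lin}}$ reduction advertised before the theorem: the nonlinearity costs only a factor $\beta^2$ relative to the linear least-squares curvature $\gamma_-$.

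For the variance term I use the opposite sides of the same inequalities: $|\phi(\x_i^T\bteta_\tau)-\y_i|\leq|\langle\x_i,\ddelta_\tau\rangle|$, together with $\phi'\leq1$ and $\tn{\x_i}^2\leq B$, give $\tn{\nabla\Lc_i(\bteta_\tau)}^2\leq B\langle\x_i,\ddelta_\tau\rangle^2$, hence $\tfrac1N\sum_i\tn{\nabla\Lc_i(\bteta_\tau)}^2\leq B\gamma_+\tn{\ddelta_\tau}^2$ by the upper eigenvalue bound. Combining the two estimates yields the conditional one-step bound $\E_{r_\tau}[\tn{\ddelta_{\tau+1}}^2]\leq(1-2\eta\beta^2\gamma_-+\eta^2 B\gamma_+)\tn{\ddelta_\tau}^2$, and substituting $\eta=\beta^2\gamma_-/(\gamma_+B)$ collapses the parenthesis to $1-\beta^4\gamma_-^2/(\gamma_+B)$. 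Taking total expectations and iterating over $\tau$ gives \eqref{adrec}; the per-step factor is nonnegative (since $\gamma_-\leq B/n$ forces it into $[0,1)$), so the induction preserves the inequality direction.

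I expect no serious obstacle: the entire content lives in the observation underlying the second paragraph, namely that the $\beta$-increasing hypothesis makes \emph{both} derivative factors appearing in each stochastic gradient at least $\beta$, which turns the descent inner product into a genuine positive quadratic form $\beta^2\langle\x_i,\ddelta\rangle^2$ rather than a mere magnitude bound. The only points requiring care are (i) keeping the two scalar inequalities for $\phi$ valid at the leaky-ReLU kink by arguing through monotonicity and the Lipschitz constant instead of pointwise differentiation, and (ii) correctly using independence of $r_\tau$ from $\bteta_\tau$ so the conditional expectation produces the full-batch gradient; everything else is the routine algebra of plugging in the prescribed $\eta$.
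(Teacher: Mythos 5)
Your proposal is correct and follows essentially the same route as the paper: the paper packages the two key facts (the secant slope and the pointwise derivative of $\phi$ both lie in $[\beta,1]$) into a rank-one matrix $\Gb_{r_\tau}$ with $\beta^2\x_{r_\tau}\x_{r_\tau}^T\preceq\Gb_{r_\tau}\preceq\x_{r_\tau}\x_{r_\tau}^T$ and $\Gb_{r_\tau}^T\Gb_{r_\tau}\preceq B\x_{r_\tau}\x_{r_\tau}^T$, whereas you state the same two bounds directly on the cross and variance terms, arriving at the identical one-step factor $1-2\eta\beta^2\gamma_-+\eta^2B\gamma_+$ and the same choice of $\eta$. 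No gap.
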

This theorem provides a clean convergence rate for SGD for $\beta$-increasing activations and naturally generalizes standard results on linear regression which corresponds to $\beta=1$. Its extension to proximal gradient methods might be beneficial for high-dimensional nonlinear problems (e.g.~sparse/low-rank approximation and generalized linear models \cite{cai2010singular,beck2009fast,jaganathan2012recovery,oymak2018sharp,agarwal2010fast}) and is left as a future work.

To derive the results from Section \ref{main result sec}, we need to determine the conditions under which Theorem \ref{det conv} is applicable to the data obtained from RNN state equation with high probability. Below we provide desirable characteristics of the state vector; which enables our statistical results.

\begin{assumption}[Well-behaved state vector] \label{ass well}Let $L>1$ be an integer. There exists positive scalars $\gamma_+,\gamma_-,\theta$ and an absolute constant $C>0$ such that $\theta\leq 3\sqrt{n}$ and the following holds
\begin{itemize}
\item {\bf{Lower bound:}} $\bSio{\h_{L-1}}\succeq \gamma_-\Iden_n$,
\item {\bf{Upper bound:}} for all $t$, the state vector satisfies
\begin{align}
\bSio{\h_t}\preceq \gamma_+\Iden_n\quad\text{,}\quad\tsub{\h_t-\E[\h_t]}\leq C\sqrt{\gamma_+}\quad\text{and}\quad\tn{\E[\h_t]}\leq \theta\sqrt{\gamma_+ }.\label{assupp bound}
\end{align}
Here $\tsub{\cdot}$ returns the subgaussian norm of a vector (see Definition \ref{ornorm}).
\end{itemize}
\end{assumption}
Assumption \ref{ass well} ensures that covariance is well-conditioned, state vector is well-concentrated, and it has a reasonably small expectation. Our next theorem establishes statistical guarantees for learning the RNN state equation based on this assumption.
\begin{theorem}[General result]\label{MAIN} Let $\{\ub_t,\h_{t+1}\}_{t=1}^N$ be a length $N$ trajectory of the state equation \eqref{main rel}. Suppose $\|\A\|<1$, $\phi$ is $\beta$-increasing, $\h_0=0$, and $\ub_t\distas\Nn(0,\Iden_p)$. Given scalars $\gamma_+\geq\gamma_->0$, set the condition number as $\rho=\gamma_+/\gamma_-$. For absolute constants $C,c,c_0>0$, choose trajectory length $N$ to satisfy
\[
N\geq CL \rho^2(n+p)\quad\text{where}\quad L=\lceil 1-\frac{\log{{(cn\rho)}{}}}{\log \|\A\|}\rceil.
\]
Suppose Assumption \ref{ass well} holds with $L,\gamma_+,\gamma_-,\theta$. Pick scaling to be $\mu=1/\sqrt{\gamma_+}$ and learning rate to be $\eta=c_0\frac{\beta^2}{\rho (\theta+\sqrt{2})^2(n+p)}$. With probability $1-4N\exp(-100n)-8L\exp(-\order{\frac{N}{L\rho^2}})$, starting from $\bTeta_0$, for all $\tau\geq 0$, the SGD iterations on loss \eqref{sgd loss} as described in Algorithm \ref{algo 1} satisfies
\begin{align}
\E[\tf{\bTeta_{\tau}-\Cb}^2]\leq (1-c_0\frac{\beta^4}{2\rho^2 (\theta+\sqrt{2})^2(n+p)})^{\tau}\tf{\bTeta_{0}-\Cb}^2,\label{conv bound}
\end{align}
where the expectation is over the randomness of SGD updates.
\end{theorem}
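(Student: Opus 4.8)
The plan is to derive Theorem \ref{MAIN} from the deterministic guarantee of Theorem \ref{det conv} by certifying, with high probability, that the reparameterized data $\x_t=[\mu\h_t^T~\ub_t^T]^T$ satisfies its two hypotheses. First I observe that the matrix loss \eqref{sgd loss} decouples across the $n$ rows of $\bTeta$: writing $\bTeta_j,\Cb_j$ for the $j$-th rows, each row is an independent scalar $\beta$-increasing regression with data $\{\x_t,\y_{t,j}\}$ where $\y_{t,j}=\phi(\langle\x_t,\Cb_j\rangle)$, the inputs $\x_t$ are shared, and the SGD step of Algorithm \ref{algo 1} acts row-wise. Since $\tf{\bTeta-\Cb}^2=\sum_{j=1}^n\tn{\bTeta_j-\Cb_j}^2$, it suffices to invoke Theorem \ref{det conv} on each row with common constants $(\gamma_-',\gamma_+',B')$ and sum. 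It then remains to establish, with high probability, (i) $\gamma_-'\Iden_{n+p}\preceq\tfrac1N\sum_{t=1}^N\x_t\x_t^T\preceq\gamma_+'\Iden_{n+p}$ and (ii) $\tn{\x_t}^2\le B'$ for all $t$, together with the identifications $\gamma_+'\asymp(\theta+\sqrt2)^2$, $\gamma_-'\asymp 1/\rho$, and $B'\asymp n+p$. Substituting these into the learning rate $\eta=\beta^2\gamma_-'/(\gamma_+'B')$ and the contraction factor $1-\beta^4(\gamma_-')^2/(\gamma_+'B')$ of Theorem \ref{det conv} reproduces exactly the rate and factor claimed in \eqref{conv bound}.

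For (ii) and the upper half of (i) I would center the state as $\h_t=\E[\h_t]+\zb_t$, so that $\x_t=\xb_t+\z^x_t$ with $\xb_t=[\mu\E[\h_t]^T~0]^T$ and $\z^x_t=[\mu\zb_t^T~\ub_t^T]^T$. Using $\mu=1/\sqrt{\gamma_+}$, Assumption \ref{ass well} gives $\mu\tn{\E[\h_t]}\le\theta$ and $\tsub{\mu\zb_t}\le C$, while $\ub_t\distas\Nn(0,\Iden_p)$; hence $\z^x_t$ is zero-mean and subgaussian with covariance squeezed between $\tfrac1\rho\Iden$ and $\Iden$. A $\chi^2$/subgaussian tail bound then yields $\tn{\x_t}^2\lesssim(\theta+\sqrt n)^2+p\lesssim n+p$ (invoking $\theta\le 3\sqrt n$), and a union bound over $t=1,\dots,N$ produces the $4N\exp(-100n)$ failure term. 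For the upper bound in (i), the mean part contributes $\|\tfrac1N\sum_t\xb_t\xb_t^T\|\le\theta^2$, while the fluctuation part concentrates near its covariance $\preceq\Iden$, giving $\gamma_+'\asymp(\theta+\sqrt2)^2$.

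The main obstacle is that $\{\z^x_t\}$ are \emph{not} independent, since $\h_t$ depends on the whole input history $\ub_0,\dots,\ub_{t-1}$, so standard covariance concentration does not apply directly; this is where stability $\|\A\|<1$ is essential. I would introduce the truncated state $\bar{\h}_t$ obtained by resetting the system to $0$ at time $t-L+1$, so that $\bar{\h}_t$ depends only on $\ub_{t-L+1},\dots,\ub_{t-1}$, is distributed as $\h_{L-1}$, and hence has $\bSio{\bar{\h}_t}\succeq\gamma_-$ by the lower bound of Assumption \ref{ass well}. Because $\phi$ is $1$-Lipschitz, the forgetting estimate $\tn{\h_t-\bar{\h}_t}\lesssim\|\A\|^{L-1}\sqrt{n\gamma_+}$ holds with high probability, and the choice $L=\lceil1-\log(cn\rho)/\log\|\A\|\rceil$ forces $\|\A\|^{L-1}\le 1/(cn\rho)$, so after scaling by $\mu^2$ the truncation perturbs the empirical second moment by $\ll 1/\rho$ and is absorbed. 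Two truncated samples $\bar{\h}_t,\bar{\h}_s$ with $|t-s|\ge L$ use disjoint input windows and are therefore independent, so partitioning $\{1,\dots,N\}$ into $\approx L$ arithmetic-progression blocks of step $L$ yields $L$ subsequences of $\approx N/L$ independent subgaussian vectors each. Forcing an operator-norm deviation below $\tfrac1{2\rho}$ on each block—so that the lower bound $\tfrac1N\sum_t\x_t\x_t^T\succeq\tfrac1\rho\Iden$ survives (the PSD mean part only helping)—requires $N/L\gtrsim\rho^2(n+p)$, which is precisely the origin of the sample complexity $N\ge CL\rho^2(n+p)$ and of the $8L\exp(-\order{N/(L\rho^2)})$ term, after a union bound over the $L$ blocks. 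Combining the block estimates establishes (i), and with (ii) and the row-wise reduction the proof concludes. The points I expect to consume the most effort are the quantitative forgetting bound and checking that the per-time heterogeneity of the truncated-state distributions within a block does not spoil the concentration step.
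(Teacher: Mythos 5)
Your proposal is correct and follows essentially the same route as the paper: a row-wise reduction of the matrix loss to the scalar deterministic guarantee of Theorem \ref{det conv}, combined with a high-probability conditioning of the data matrix obtained by truncating the state (exploiting $\|\A\|<1$ and the choice of $L$ so that $\|\A\|^{L-1}\lesssim 1/(cn\rho)$), splitting the trajectory into $L$ offset sub-trajectories of independent truncated samples, concentrating each block's empirical covariance, and merging. The paper packages these steps as Theorem \ref{main cond thm} (built from Lemmas \ref{trunc det}, \ref{trun lem rand}, \ref{lem all indep}, \ref{dmc thm}, \ref{lem merge}, and Theorem \ref{H1 bound}), and your identifications $\gamma_+'\asymp(\theta+\sqrt{2})^2$, $\gamma_-'\asymp\rho^{-1}$, $B'\asymp n+p$ and the resulting failure-probability terms match the paper's exactly.
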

The advantage of this theorem is that, it isolates the optimization problem from the statistical properties of state vector. If one can prove tighter bounds on achievable $(\gamma_+,\gamma_-,\theta)$, it will immediately imply improved performance for SGD. In particular, Theorems \ref{main thm} and \ref{thm odd} are simple corollaries of Theorem \ref{MAIN} with proper choices.
\begin{itemize}
\item Theorem \ref{main thm} follows by setting $\gamma_+=B_{\infty}^2$, $\gamma_-=\beta^2\lmn{\B}^2$, and $\theta=\sqrt{n}$.
\item Theorem \ref{thm odd} follows by setting $\gamma_+=B_{\infty}^2$, $\gamma_-=\beta^2\lmn{\B}^2$, and $\theta=0$.
\end{itemize}

\section{Learning Unstable Systems}\label{sec unstab}
So far, we considered learning from a single RNN trajectory for stable systems ($\|\A\|<1$). For such systems, as the time goes on, the impact of the earlier states disappear. In our analysis, this allows us to split a single trajectory into multiple nearly-independent trajectories. This approach will not work for unstable systems ($\A$ is arbitrary) where the impact of older states may be amplified over time. To address this, we consider a model where the data is sampled from multiple independent trajectories.

Suppose $N$ independent trajectories of the state-equation \eqref{main rel} are available. Pick some integer $T_0\geq 1$. Denoting the $i$th trajectory by the triple $(\h^{(i)}_{t+1},\h^{(i)}_{t},\ub^{(i)}_{t})_{t\geq 0}$, we collect a single sample from each trajectory at time $T_0$ to obtain the triple $(\h^{(i)}_{T_0+1},\h^{(i)}_{T_0},\ub^{(i)}_{T_0})$. To utilize the existing optimization framework \eqref{sgd loss}; for $1\leq i\leq N$, we set,
\begin{align}
(\y_i,\h_{i},\ub_{i})=(\h^{(i)}_{T_0+1},\h^{(i)}_{T_0},\ub^{(i)}_{T_0}).\label{uns sample}
\end{align}
With this setup, we can again use the SGD Algorithm \ref{algo 1} to learn the weights $\A$ and $\B$. The crucial difference compared to Section \ref{main result sec} is that, the samples $(\y_{i},\h_{i},\ub_{i})_{i=1}^N$ are now independent of each other; hence, the analysis is simplified. As previously, having an upper bound on the condition number of the state-vector covariance is critical. This upper bound can be shown to be $\rho$ defined as
\begin{align}
\rho=\begin{cases}\bar{\rho}\quad\text{if}~n>1\\\bar{\rho}\frac{1-\beta^2|\A|^2}{1-(\beta|\A|)^{2T_0}}\quad\text{if}~n=1\end{cases}~~~\text{where}~~~\bar{\rho}=\frac{B_{T_0}^2}{\beta^2\lmn{\B}^2}.\label{uns rho}
\end{align}
The $\bar{\rho}$ term is similar to the earlier definition \eqref{rhodef}; however it involves $B_{T_0}$ rather than $B_\infty$. This modification is indeed necessary since $B_{\infty}=\infty$ when $\|\A\|>1$. On the other hand, note that, $B_{T_0}^2$ grows proportional to $\|\A\|^{2T_0}$; which results in exponentially bad condition number in $T_0$. Our $\rho$ definition remedies this issue for single-output systems; where $n=1$ and $\A$ is a scalar. In particular, when $\beta=1$ (e.g.~$\phi$ is linear) $\rho$ becomes equal to the correct value $1$\footnote{Clearly, any nonzero $1\times 1$ covariance matrix has condition number $1$. However, due to subtleties in the proof strategy, we don't use $\rho=1$ for $\beta<1$. Obtaining tighter bounds on the subgaussian norm of the state-vector would help resolve this issue.}. The next theorem provides our result on unstable systems in terms of this condition number and other model parameters.
 \begin{theorem} [Unstable systems]\label{thm unstab} Suppose we are given $N$ independent trajectories $(\h^{(i)}_{t},\ub^{(i)}_{t})_{t\geq 0}$ for $1\leq i\leq N$. Each trajectory is sampled at time $T_0$ to obtain $N$ samples $(\y_{i},\h_{i},\ub_{i})_{i=1}^N$ where the $i$th sample is given by \eqref{uns sample}.
 Suppose the sample size satisfies \[N\geq C \rho^2 (n+p)\] where $\rho$ is given by \eqref{uns rho}. Assume the initial states are $0$, $\phi$ is $\beta$-increasing, $p\geq n$, and $\ub_t\distas\Nn(0,\Iden_p)$. Set scaling $\mu=1/\sqrt{B_{T_0}}$, learning rate $\eta=c_0\frac{\beta^2}{\rho n(n+p)}$, and run SGD over the equations described in \eqref{reparam} and \eqref{sgd loss}. Starting from $\bTeta_0$, with probability $1-2N\exp(-100(n+p))-4\exp(-\order{\frac{N}{\rho^2}})$, all SGD iterations satisfy
\[
\E[\tf{\bTeta_{\tau}-\Cb}^2]\leq (1-c_0\frac{\beta^4}{2\rho^2n(n+p)})^{\tau}\tf{\bTeta_{0}-\Cb}^2,
\]
where the expectation is over the randomness of the SGD updates.
\end{theorem}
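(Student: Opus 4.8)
The plan is to reuse the two-step template behind Theorem \ref{MAIN}: verify that the hypotheses of the deterministic convergence result Theorem \ref{det conv} hold with high probability for the reparameterized data $\{\x_i,\y_i\}_{i=1}^N$ of \eqref{reparam} and \eqref{uns sample} (with $\x_i=[\mu\h_i^T~\ub_i^T]^T$), and then read off the contraction rate. The decisive simplification over the single-trajectory analysis is that each sample is drawn from its own independent trajectory, so the $\x_i$ are i.i.d. This removes the need to cut one trajectory into nearly-independent blocks---the origin of the $L$ factor and the $8L$ failure term in Theorem \ref{MAIN}---and replaces it with a single concentration step for i.i.d. subgaussian vectors, which is exactly why the failure probability collapses to $1-2N\exp(-100(n+p))-4\exp(-\order{N/\rho^2})$.

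First I would fix the population statistics of one scaled input. Because $\h_{T_0}$ depends only on $\ub_0,\dots,\ub_{T_0-1}$, it is independent of the fresh input $\ub_{T_0}$, so $\bSio{\x_i}$ is block diagonal: the input block is $\Iden_p$ and the state block is $\mu^2\bSio{\h_{T_0}}$. Lemma \ref{stcov} (whose lower bound uses $p\geq n$) sandwiches $\bSio{\h_{T_0}}$ between $\beta^2\lmn{\B}^2\Iden_n$ and $B_{T_0}^2\Iden_n$, so with the normalizing choice of $\mu$ the covariance of $\x_i$ has condition number controlled by $\bar\rho=B_{T_0}^2/(\beta^2\lmn{\B}^2)$. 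The sharper $n=1$ branch of \eqref{uns rho} reflects that for a scalar state the variance can be lower-bounded by the full geometric accumulation $\beta^2\lmn{\B}^2\sum_{j=0}^{T_0-1}(\beta|\A|)^{2j}=\beta^2\lmn{\B}^2\frac{1-(\beta|\A|)^{2T_0}}{1-(\beta|\A|)^2}$ instead of the crude constant $\beta^2\lmn{\B}^2$, which exactly produces the extra factor and forces $\rho\to1$ as $\beta\to1$. Finally, since $\h_{T_0}$ is a $1$-Lipschitz function of the Gaussian vector $(\ub_0,\dots,\ub_{T_0-1})$, Gaussian concentration gives $\tsub{\h_{T_0}-\E\h_{T_0}}\lesssim B_{T_0}$ and $\tn{\E\h_{T_0}}\leq\sqrt n\,B_{T_0}$; after scaling by $\mu$ the state block of $\x_i$ thus has $\order{1}$ subgaussian fluctuation and a mean of norm at most $\theta=\sqrt n$ (the crude, non-odd bound that yields the $n(n+p)$ rate).

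Next I would convert these facts into the hypotheses of Theorem \ref{det conv}. A matrix concentration bound for the i.i.d. subgaussian vectors $\x_i$ shows that $N\gtrsim\rho^2(n+p)$ forces $\frac1N\sum_i\x_i\x_i^T$ to within a constant factor of its mean $\bSio{\x_i}+\E[\x_i]\E[\x_i]^T$; this lets me take $\gamma_+=(\theta+\sqrt2)^2=\order{n}$ and $\gamma_-=\rho^{-1}/2$ in Theorem \ref{det conv}, and is the source of the $4\exp(-\order{N/\rho^2})$ term. A union bound over the $N$ subgaussian vectors in $\R^{n+p}$ bounds $\max_i\tn{\x_i}^2\lesssim(n+p)=:B$ and gives the $2N\exp(-100(n+p))$ term. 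As in the (commented) proof of Theorem \ref{MAIN}, the loss \eqref{sgd loss} decouples over the $n$ rows of $\bTeta$, each obeying a scalar equation $y_{t,i}=\phi(\li\cb^{(i)},\x_t\ri)$; applying Theorem \ref{det conv} per row with $(\gamma_+,\gamma_-,B)$ fixes $\eta=c_0\beta^2/(\rho n(n+p))$ and the per-row contraction $1-c_0\beta^4/(2\rho^2 n(n+p))$, and summing the $n$ bounds gives the stated Frobenius estimate.

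The main obstacle is the sharp statistical characterization of $\h_{T_0}$: one needs a covariance lower bound precise enough to reproduce exactly the $\rho$ of \eqref{uns rho}---in particular the $n=1$ refinement where the geometric accumulation matters---together with a subgaussian-norm bound strong enough that both the mean and the fluctuation of $\h_{T_0}$ scale with $B_{T_0}$. Once those state-vector bounds are available, the concentration and the reduction to Theorem \ref{det conv} are routine, precisely because independence across trajectories removes the block-splitting machinery that the stable single-trajectory argument required.
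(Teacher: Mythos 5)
Your proposal is correct and follows essentially the same route as the paper: the paper proves this as a corollary of an intermediate general result (Theorem \ref{unstab sys}, built on Assumption \ref{ass well2}), which does exactly what you describe---exploit i.i.d.-ness of the per-trajectory samples to apply the subgaussian covariance concentration bound (Theorem \ref{H1 bound}) and row-norm bound (Lemma \ref{lemma sublen}) directly, then invoke the deterministic SGD result (Theorem \ref{det conv}) row by row---and the substitutions you identify ($\gamma_+=B_{T_0}^2$, $\theta\asymp\sqrt{n}$, the generic lower bound of Lemma \ref{stcov} for $n>1$ and the geometric-accumulation MISO refinement of Theorem \ref{miso lem} for $n=1$) are precisely the ones the paper makes.
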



\section{Numerical Experiments}\label{numeric sec}
We did synthetic experiments on ReLU and Leaky ReLU activations. Let us first describe the experimental setup. We pick state dimension $n=50$ and input dimension $p=100$. We choose the ground truth matrix $\A$ to be a scaled random unitary matrix; which ensures that all singular values of $\A$ are equal. $\B$ is generated with i.i.d.~$\Nn(0,1)$ entries. Instead of using the theoretical scaling choice, we determine the scaling $\mu$ from empirical covariance matrices outlined in Algorithm \ref{algo 3}. Similar to our proof strategy, this algorithm equalizes the spectral norms of the input and state covariances to speed up convergence. We also empirically determined the learning rate and used $\eta=1/100$ in all experiments.
\begin{algorithm} [!t]\caption{Empirical hyperparameter selection.}\label{algo 3}
\begin{algorithmic}[1]
\item {\bf{Inputs:}} $(\h_{t},\ub_{t})_{t=1}^N$ sampled from a trajectory.
\item {\bf{Outputs:}} Scaling $\mu$.
\item Form the empirical covariance matrix $\Sigma_h$ from $\{\h_{t}\}_{t=1}^N$.
\item Form the empirical covariance matrix $\Sigma_u$ from $\{\ub_{t}\}_{t=1}^N$.\\
\Return $\sqrt{\|\Sigma_u\|/\|\Sigma_h\|}$.
\end{algorithmic}
\end{algorithm}
\begin{figure}[t!]
\begin{centering}
\begin{subfigure}[t]{3in}
\includegraphics[height=0.7\linewidth,width=1\linewidth]{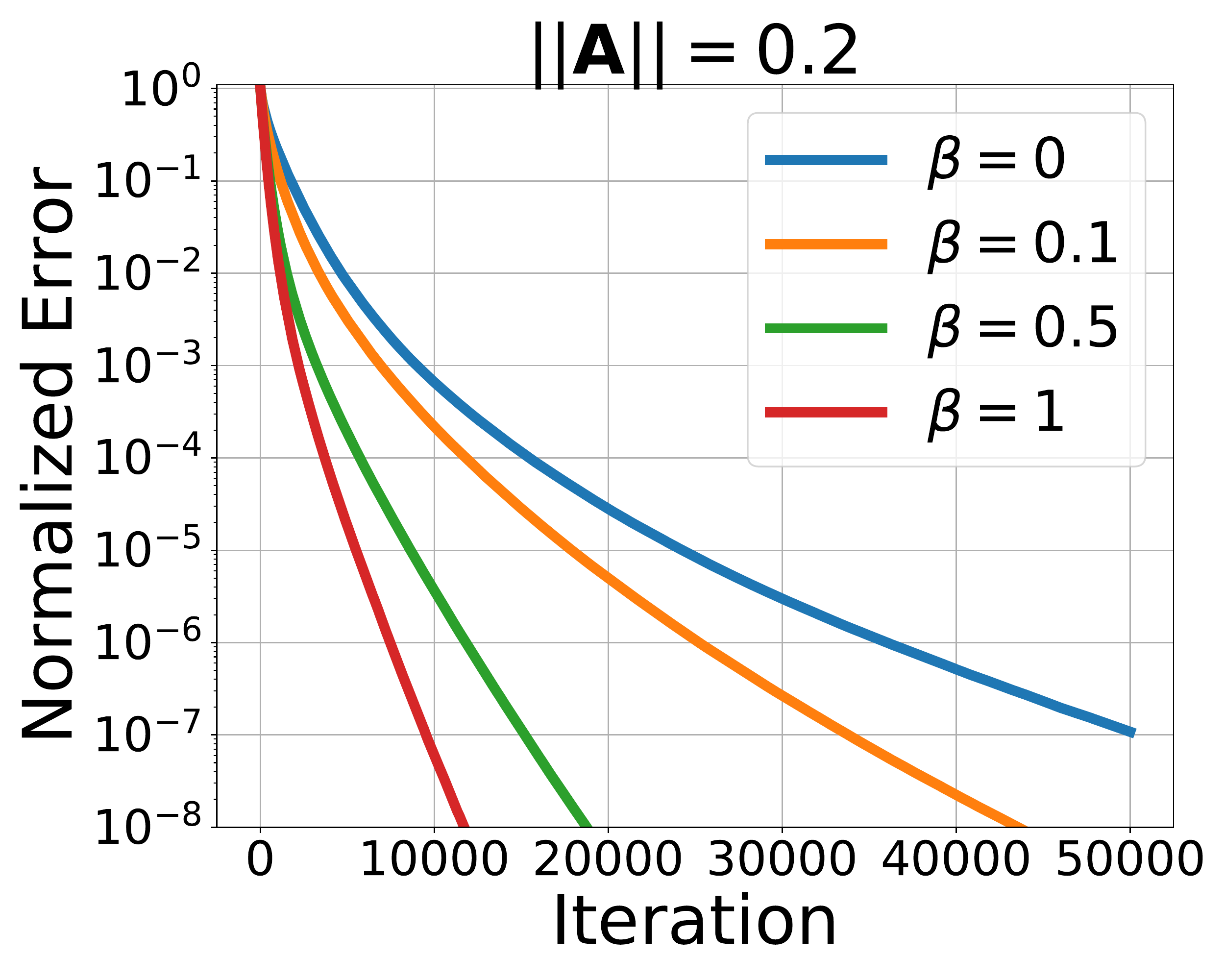}\vspace{-5pt}\subcaption{}\label{fig1a}
\end{subfigure}
\end{centering}~
\begin{centering}
\begin{subfigure}[t]{3in}
\includegraphics[height=0.7\linewidth,width=1\linewidth]{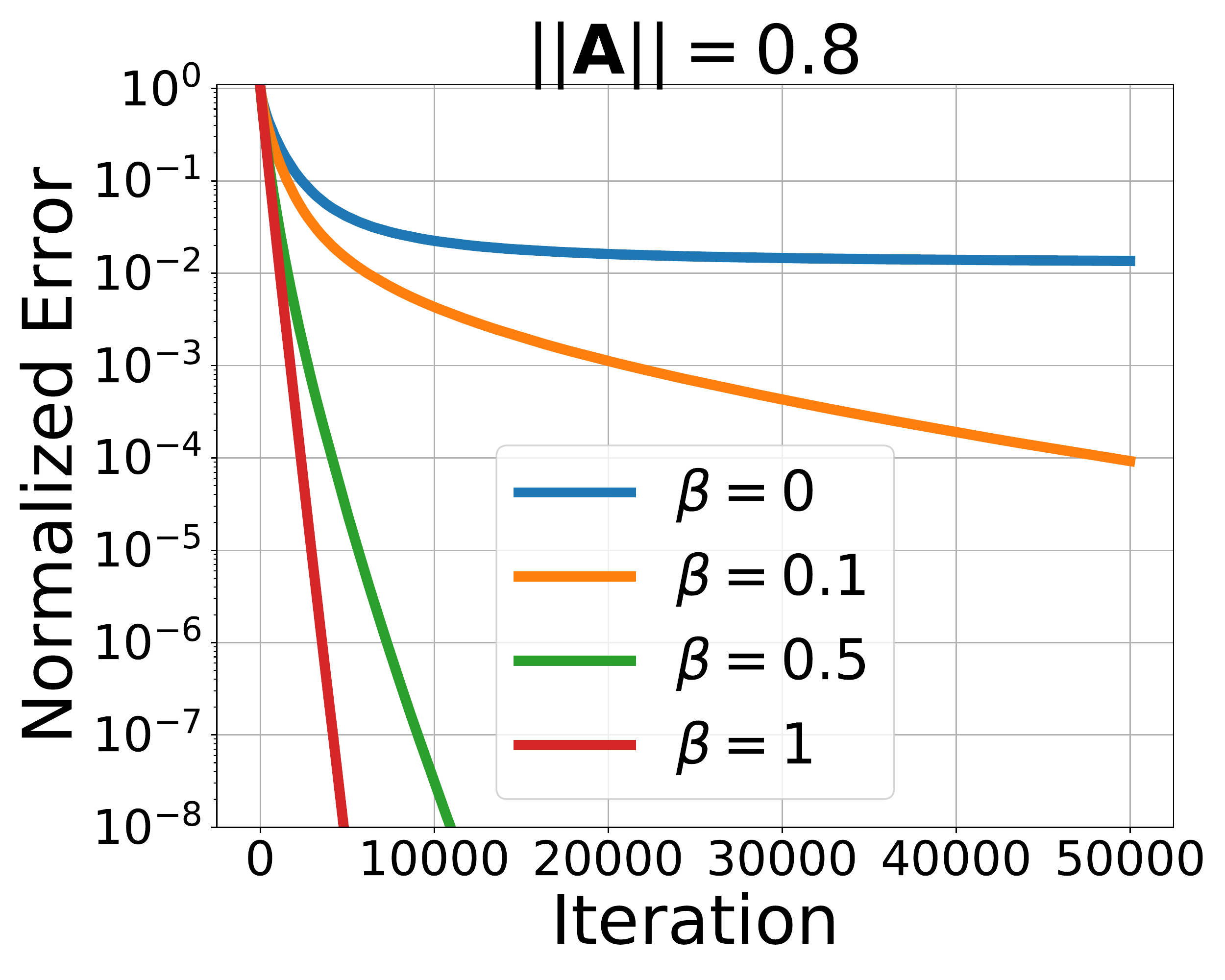}\vspace{-5pt}\subcaption{}\label{fig1b}
\end{subfigure}
\end{centering}\vspace{-10pt}\caption{SGD convergence behavior for Leaky ReLUs with varying minimum slope $\beta$. Figures a) and b) repeat the same experiments. The difference is the spectral norm of the ground truth state matrix $\A$.}\label{fig1}
\end{figure}

\noindent{\bf{Evaluation:}} We consider two performance measures in the experiments. Let $\hat{\Cb}$ be an estimate of the ground truth parameter $\Cb=[\mu^{-1}\A~\B]$. The first measure is the normalized error defined as $\frac{\tf{\hat{\Cb}-\Cb}^2}{\tf{\Cb}^2}$. The second measure is the normalized loss defined as
\[
\frac{\sum_{i=1}^N \tn{\y_t-\phi(\hat{\Cb}\x_t)}^2}{\sum_{i=1}^N \tn{\y_t}^2}.
\]
In all experiments, we run Algorithm \ref{algo 1} for $50000$ SGD iterations and plot these measures as a function of $\tau$; by using the estimate available at the end of the $\tau$th SGD iteration for $0\leq \tau\leq 50000$. Each curve is obtained by averaging the outcomes of 20 independent realizations.

Our first experiments use $N=500$; which is mildly larger than the total dimension $n+p=150$. In Figure \ref{fig1}, we plot Leaky ReLUs with varying slopes as described in \eqref{leaky relu}. Here $\beta=0$ corresponds to ReLU and $\beta=1$ is the linear model with identity activation. In consistence with our theory, SGD achieves linear convergence and as $\beta$ increases, the rate of convergence drastically improves. The improvement is more visible for less stable systems driven by $\A$ with a larger spectral norm. In particular, while ReLU converges for small $\|\A\|$, SGD gets stuck before reaching the ground truth when $\|\A\|=0.8$.

To understand, how well SGD fits the training data, in Figure \ref{fig2a}, we plotted the normalized loss for ReLU activation. For more unstable system ($\|\A\|=0.9$), training loss stagnates in a similar fashion to the parameter error. We also verified that the norm of the overall gradient $\tf{\grad{\Theta_{\tau}}}$ continues to decay (where $\Theta_{\tau}$ is the $\tau$th SGD iterate); which implies that SGD converges before reaching a global minima. As $\A$ becomes more stable, rate of convergence improves and linear rate is visible. Finally, to better understand the population landscape of the quadratic loss with ReLU activations, Figure \ref{fig2b} repeats the same ReLU experiments while increasing the sample size five times to $N=2500$. For this more overdetermined problem, SGD converges even for $\|\A\|=0.9$; indicating that 
\begin{itemize}
\item population landscape of loss with ReLU activation is well-behaved,
\item however ReLU problem requires more data compared to the Leaky ReLU for finding global minima.
\end{itemize}
Overall, as predicted by our theory, experiments verify that SGD indeed quickly finds the optimal weight matrices of the state equation \eqref{main rel} and as the activation slope $\beta$ increases, the convergence rate improves.

\begin{figure}[t!]
\begin{centering}
\begin{subfigure}[t]{3in}
\includegraphics[height=0.7\linewidth,width=1\linewidth]{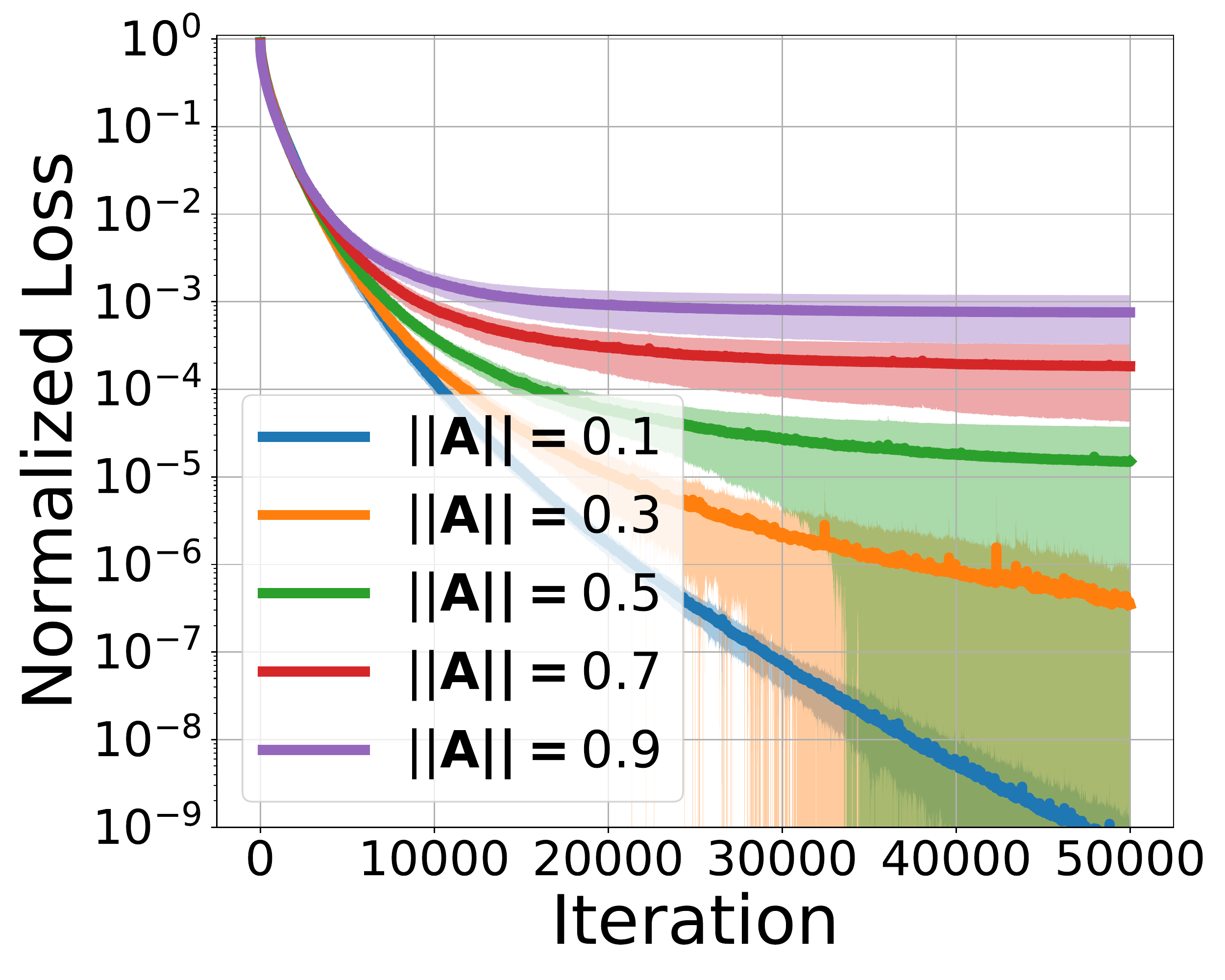}\vspace{-5pt}\subcaption{}\label{fig2a}
\end{subfigure}
\end{centering}~
\begin{centering}
\begin{subfigure}[t]{3in}
\includegraphics[height=0.7\linewidth,width=1\linewidth]{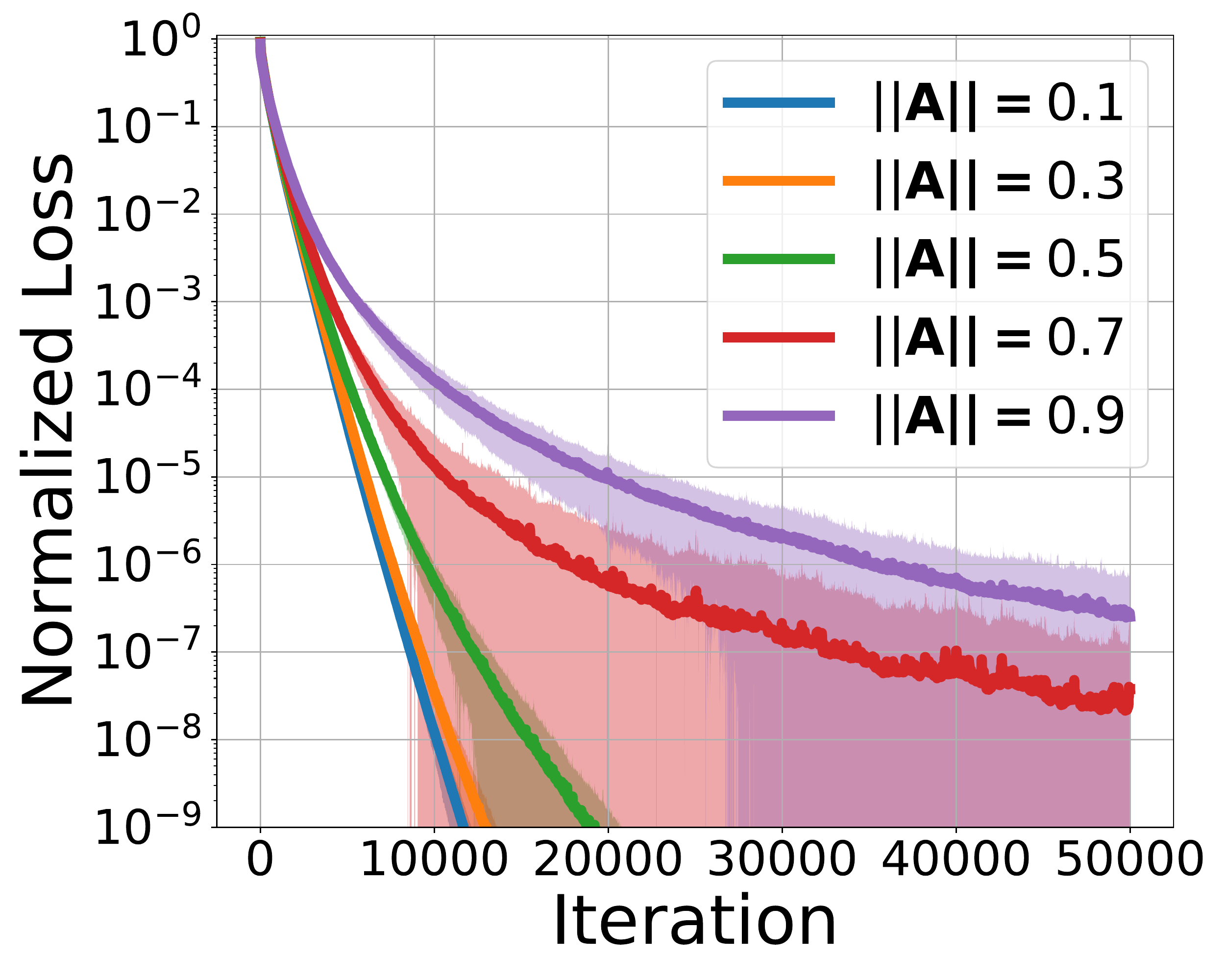}\vspace{-5pt}\subcaption{}\label{fig2b}
\end{subfigure}
\end{centering}\vspace{-10pt}\caption{SGD convergence behavior for ReLU with varying spectral norm of the state matrix $\A$. Figures a) and b) repeats the same experiments. The difference is that a) uses $N=500$ trajectory length whereas b) uses $N=2500$ (i.e.~$\times 5$ more data). Shaded regions highlight the one standard deviation around the mean.}\label{fig2}
\end{figure}

\section{Conclusions}
This work showed that SGD can learn the nonlinear dynamical system \eqref{main rel}; which is characterized by weight matrices and an activation function. This problem is of interest for recurrent neural networks as well as nonlinear system identification. We showed that efficient learning is possible with optimal sample complexity and good computational performance. Our results apply to strictly increasing activations such as Leaky ReLU. We empirically showed that Leaky ReLU converges faster than ReLU and requires less samples; in consistence with our theory. We list a few unanswered problems that would provide further insights into recurrent neural networks.
\begin{itemize}
\item {\bf{Covariance of the state-vector:}} Our results depend on the covariance of the state-vector and requires it to be positive definite. One might be able to improve the current bounds on the condition number and relax the assumptions on the activation function. Deriving similar performance bounds for ReLU is particularly interesting.
\item {\bf{Hidden state:}} For RNNs, the state vector is hidden and is observed through an additional equation \eqref{out observe}; which further complicates the optimization landscape. Even for linear dynamical systems, learning the $(\A,\B,\Cb,\Db)$ system (\eqref{main rel}, \eqref{out observe}) is a non-trivial task \cite{ho1966effective,hardt2016gradient}. What can be said when we add the nonlinear activations?
\item {\bf{Classification task:}} In this work, we used normally distributed input and least-squares regression for our theoretical guarantees. More realistic input distributions might provide better insight into contemporary problems, such as natural language processing; where the goal is closer to classification (e.g.~finding the best translation from another language). 
\end{itemize}
\section*{Acknowledgements}
We would like to thank Necmiye Ozay and Mahdi Soltanolkotabi for helpful discussions.
\small{
\bibliographystyle{plain}
\bibliography{Bibfiles}

\begin{thebibliography}{10}

\bibitem{agarwal2010fast}
Alekh Agarwal, Sahand Negahban, and Martin~J Wainwright.
\newblock Fast global convergence rates of gradient methods for
  high-dimensional statistical recovery.
\newblock In {\em Advances in Neural Information Processing Systems}, pages
  37--45, 2010.

\bibitem{aastrom1971system}
Karl~Johan {\AA}str{\"o}m and Peter Eykhoff.
\newblock System identification—a survey.
\newblock {\em Automatica}, 7(2):123--162, 1971.

\bibitem{aastrom1995pid}
Karl~Johan {\AA}str{\"o}m and Tore H{\"a}gglund.
\newblock {\em PID controllers: theory, design, and tuning}, volume~2.
\newblock Instrument society of America Research Triangle Park, NC, 1995.

\bibitem{bahdanau2014neural}
Dzmitry Bahdanau, Kyunghyun Cho, and Yoshua Bengio.
\newblock Neural machine translation by jointly learning to align and
  translate.
\newblock {\em arXiv preprint arXiv:1409.0473}, 2014.

\bibitem{beck2009fast}
Amir Beck and Marc Teboulle.
\newblock A fast iterative shrinkage-thresholding algorithm for linear inverse
  problems.
\newblock {\em SIAM journal on imaging sciences}, 2(1):183--202, 2009.

\bibitem{brown1992introduction}
Robert~Grover Brown, Patrick~YC Hwang, et~al.
\newblock {\em Introduction to random signals and applied Kalman filtering},
  volume~3.
\newblock Wiley New York, 1992.

\bibitem{brutzkus2017globally}
Alon Brutzkus and Amir Globerson.
\newblock Globally optimal gradient descent for a convnet with gaussian inputs.
\newblock {\em arXiv preprint arXiv:1702.07966}, 2017.

\bibitem{brutzkus2017sgd}
Alon Brutzkus, Amir Globerson, Eran Malach, and Shai Shalev-Shwartz.
\newblock Sgd learns over-parameterized networks that provably generalize on
  linearly separable data.
\newblock {\em arXiv preprint arXiv:1710.10174}, 2017.

\bibitem{cai2010singular}
Jian-Feng Cai, Emmanuel~J Cand{\`e}s, and Zuowei Shen.
\newblock A singular value thresholding algorithm for matrix completion.
\newblock {\em SIAM Journal on Optimization}, 20(4):1956--1982, 2010.

\bibitem{dirksen2013tail}
S.~Dirksen.
\newblock Tail bounds via generic chaining.
\newblock {\em arXiv preprint arXiv:1309.3522}, 2013.

\bibitem{du2017convolutional}
Simon~S Du, Jason~D Lee, and Yuandong Tian.
\newblock When is a convolutional filter easy to learn?
\newblock {\em arXiv preprint arXiv:1709.06129}, 2017.

\bibitem{elman1990finding}
Jeffrey~L Elman.
\newblock Finding structure in time.
\newblock {\em Cognitive science}, 14(2):179--211, 1990.

\bibitem{graves2013speech}
Alex Graves, Abdel-rahman Mohamed, and Geoffrey Hinton.
\newblock Speech recognition with deep recurrent neural networks.
\newblock In {\em Acoustics, speech and signal processing (icassp), 2013 ieee
  international conference on}, pages 6645--6649. IEEE, 2013.

\bibitem{hardt2016gradient}
Moritz Hardt, Tengyu Ma, and Benjamin Recht.
\newblock Gradient descent learns linear dynamical systems.
\newblock {\em arXiv preprint arXiv:1609.05191}, 2016.

\bibitem{ho1966effective}
BL~Ho and Rudolph~E Kalman.
\newblock Effective construction of linear state-variable models from
  input/output functions.
\newblock {\em at-Automatisierungstechnik}, 14(1-12):545--548, 1966.

\bibitem{hochreiter1997long}
Sepp Hochreiter and J{\"u}rgen Schmidhuber.
\newblock Long short-term memory.
\newblock {\em Neural computation}, 9(8):1735--1780, 1997.

\bibitem{jaganathan2012recovery}
Kishore Jaganathan, Samet Oymak, and Babak Hassibi.
\newblock Recovery of sparse 1-d signals from the magnitudes of their fourier
  transform.
\newblock In {\em Information Theory Proceedings (ISIT), 2012 IEEE
  International Symposium On}, pages 1473--1477. IEEE, 2012.

\bibitem{janzamin2015beating}
Majid Janzamin, Hanie Sedghi, and Anima Anandkumar.
\newblock Beating the perils of non-convexity: Guaranteed training of neural
  networks using tensor methods.
\newblock {\em arXiv preprint arXiv:1506.08473}, 2015.

\bibitem{khrulkov2017expressive}
Valentin Khrulkov, Alexander Novikov, and Ivan Oseledets.
\newblock Expressive power of recurrent neural networks.
\newblock {\em arXiv preprint arXiv:1711.00811}, 2017.

\bibitem{ledoux2001concentration}
Michel Ledoux.
\newblock {\em The concentration of measure phenomenon}.
\newblock American Mathematical Soc., 2001.

\bibitem{li2017convergence}
Yuanzhi Li and Yang Yuan.
\newblock Convergence analysis of two-layer neural networks with relu
  activation.
\newblock In {\em Advances in Neural Information Processing Systems}, pages
  597--607, 2017.

\bibitem{ljung1987system}
Lennart Ljung.
\newblock {\em System identification: theory for the user}.
\newblock Prentice-hall, 1987.

\bibitem{ljung1998system}
Lennart Ljung.
\newblock System identification.
\newblock In {\em Signal analysis and prediction}, pages 163--173. Springer,
  1998.

\bibitem{mei2018mean}
Song Mei, Andrea Montanari, and Phan-Minh Nguyen.
\newblock A mean field view of the landscape of two-layers neural networks.
\newblock {\em arXiv preprint arXiv:1804.06561}, 2018.

\bibitem{miller2018recurrent}
John Miller and Moritz Hardt.
\newblock When recurrent models don't need to be recurrent.
\newblock {\em arXiv preprint arXiv:1805.10369}, 2018.

\bibitem{oymak2018learning}
Samet Oymak.
\newblock Learning compact neural networks with regularization.
\newblock {\em arXiv preprint arXiv:1802.01223}, 2018.

\bibitem{oymak2018non}
Samet Oymak and Necmiye Ozay.
\newblock Non-asymptotic identification of lti systems from a single
  trajectory.
\newblock {\em arXiv preprint arXiv:1806.05722}, 2018.

\bibitem{oymak2018sharp}
Samet Oymak, Benjamin Recht, and Mahdi Soltanolkotabi.
\newblock Sharp time--data tradeoffs for linear inverse problems.
\newblock {\em IEEE Transactions on Information Theory}, 64(6):4129--4158,
  2018.

\bibitem{oymak2018end}
Samet Oymak and Mahdi Soltanolkotabi.
\newblock End-to-end learning of a convolutional neural network via deep tensor
  decomposition.
\newblock {\em arXiv preprint arXiv:1805.06523}, 2018.

\bibitem{sanandaji2011exact}
Borhan~M Sanandaji, Tyrone~L Vincent, and Michael~B Wakin.
\newblock Exact topology identification of large-scale interconnected dynamical
  systems from compressive observations.
\newblock In {\em American Control Conference (ACC), 2011}, pages 649--656.
  IEEE, 2011.

\bibitem{sanandaji2011compressive}
Borhan~M Sanandaji, Tyrone~L Vincent, Michael~B Wakin, Roland T{\'o}th, and
  Kameshwar Poolla.
\newblock Compressive system identification of lti and ltv arx models.
\newblock In {\em Decision and Control and European Control Conference
  (CDC-ECC), 2011 50th IEEE Conference on}, pages 791--798. IEEE, 2011.

\bibitem{sedghi2016training}
Hanie Sedghi and Anima Anandkumar.
\newblock Training input-output recurrent neural networks through spectral
  methods.
\newblock {\em arXiv preprint arXiv:1603.00954}, 2016.

\bibitem{simchowitz2018learning}
Max Simchowitz, Horia Mania, Stephen Tu, Michael~I Jordan, and Benjamin Recht.
\newblock Learning without mixing: Towards a sharp analysis of linear system
  identification.
\newblock {\em arXiv preprint arXiv:1802.08334}, 2018.

\bibitem{soltanolkotabi2017learning}
Mahdi Soltanolkotabi.
\newblock Learning relus via gradient descent.
\newblock {\em arXiv preprint arXiv:1705.04591}, 2017.

\bibitem{sol2017}
Mahdi Soltanolkotabi, Adel Javanmard, and Jason~D Lee.
\newblock Theoretical insights into the optimization landscape of
  over-parameterized shallow neural networks.
\newblock {\em arXiv preprint arXiv:1707.04926}, 2017.

\bibitem{talagrand2014gaussian}
Michel Talagrand.
\newblock Gaussian processes and the generic chaining.
\newblock In {\em Upper and Lower Bounds for Stochastic Processes}, pages
  13--73. Springer, 2014.

\bibitem{tu2017non}
Stephen Tu, Ross Boczar, Andrew Packard, and Benjamin Recht.
\newblock Non-asymptotic analysis of robust control from coarse-grained
  identification.
\newblock {\em arXiv preprint arXiv:1707.04791}, 2017.

\bibitem{tu2018approximation}
Stephen Tu, Ross Boczar, and Benjamin Recht.
\newblock On the approximation of toeplitz operators for nonparametric
  ${\mathcal{h}}_{\infty}$-norm estimation.
\newblock In {\em 2018 Annual American Control Conference (ACC)}, pages
  1867--1872. IEEE, 2018.

\bibitem{vershynin2010introduction}
Roman Vershynin.
\newblock Introduction to the non-asymptotic analysis of random matrices.
\newblock {\em arXiv preprint arXiv:1011.3027}, 2010.

\bibitem{wang2018learning}
Gang Wang, Georgios~B Giannakis, and Jie Chen.
\newblock Learning relu networks on linearly separable data: Algorithm,
  optimality, and generalization.
\newblock {\em arXiv preprint arXiv:1808.04685}, 2018.

\bibitem{zhong2017learning}
Kai Zhong, Zhao Song, and Inderjit~S Dhillon.
\newblock Learning non-overlapping convolutional neural networks with multiple
  kernels.
\newblock {\em arXiv preprint arXiv:1711.03440}, 2017.

\bibitem{zhong2017recovery}
Kai Zhong, Zhao Song, Prateek Jain, Peter~L Bartlett, and Inderjit~S Dhillon.
\newblock Recovery guarantees for one-hidden-layer neural networks.
\newblock {\em arXiv preprint arXiv:1706.03175}, 2017.

\end{thebibliography}
}
\appendix
\section{Deterministic Convergence Result for SGD}

\begin{proof} [Proof of Theorem \ref{det conv}]Given two distinct scalars $a,b$; define $\phi'(a,b)=\frac{\phi(a)-\phi(b)}{a-b}$. $\phi'(a,b)\geq \beta$ since $\phi$ is $\beta$-increasing. Define $\w_\tau$ to be the residual $\w_\tau=\bteta_\tau-\bteta$. Observing 
\[
\phi(\x_{r_\tau}^T\bteta_\tau)-\y_{r_\tau}=\phi'(\x_{r_\tau}^T\bteta_\tau,\x_{r_\tau}^T\bteta)\x_{r_\tau}^T\w_\tau,
\]
the SGD recursion obeys
\begin{align}
\tn{\w_{\tau+1}}^2&=\tn{\w_\tau-\eta(\phi(\x_{r_\tau}^T\bteta_\tau)-\y_{r_\tau})\phi'(\x_{r_\tau}^T\bteta_\tau)\x_{r_\tau}}^2.\\
&=\tn{\w_\tau-\eta\x_{r_\tau}\phi'(\x_{r_\tau}^T\bteta_\tau)\phi'(\x_{r_\tau}^T\bteta_\tau,\x_{r_\tau}^T\bteta)\x_{r_\tau}^T\w_\tau}^2\\
&=\tn{(\Iden-\eta\Gb_{r_\tau})\w_\tau}^2
\end{align}
where $\Gb_{r_\tau}=\x_{r_\tau}\phi'(\x_{r_\tau}^T\bteta_\tau)\phi'(\x_{r_\tau}^T\bteta_\tau,\x_{r_\tau}^T\bteta)\x_{r_\tau}^T$. Since $\phi$ is $1$-Lipschitz and $\beta$-increasing, $\Gb_{r_\tau}$ is a positive-semidefinite matrix satisfying
\begin{align}
&\x_{r_\tau}\x_{r_\tau}^T\succeq\Gb_{r_\tau}\succeq\beta^2\x_{r_\tau}\x_{r_\tau}^T,\\
&\Gb_{r_\tau}^T\Gb_{r_\tau}\preceq \x_{r_\tau}\x_{r_\tau}^T\x_{r_\tau}\x_{r_\tau}^T\preceq B\x_{r_\tau}\x_{r_\tau}^T.
\end{align}
Consequently, we find the following bounds in expectation
\begin{align}
&\gamma_+\Iden_n\succeq \E[\Gb_{r_\tau}]\succeq \beta^2\gamma_-\Iden_n,\label{s convex}\\
&\E[\Gb_{r_\tau}^T\Gb_{r_\tau}]\preceq B\gamma_+\Iden_n.
\end{align}
Observe that \eqref{s convex} essentially lower bounds the {\em{strong convexity}} parameter of the problem with $\beta^2\gamma_-$; which is the strong convexity of the identical problem with the linear activation (i.e.~$\beta=1$). However, we only consider strong convexity around the ground truth parameter $\bteta$ i.e.~we restricted our attention to $(\bteta,\bteta_\tau)$ pairs. With this, $\w_{\tau+1}$ can be controlled as,
\begin{align}
\E[\tn{\w_{\tau+1}}^2]&=\E[\tn{(\Iden-\eta\Gb_{r_\tau})\w_\tau}^2]\\
&=\tn{\w_{\tau}}^2-2\eta \E[\w_{\tau}^T\Gb_{r_\tau}\w_\tau]+\eta^2 \E[\w_{\tau}^T\Gb_{r_\tau}^T\Gb_{r_\tau}\w_\tau]\\
&\leq \tn{\w_\tau}^2(1-2\eta\beta^2 \gamma_-+\eta^2B\gamma_+).
\end{align}
Setting $\eta=\frac{\beta^2\gamma_-}{\gamma_+B}$, we find the advertised bound
\[
\E[\tn{\w_{\tau+1}}^2]\leq \E[\tn{\w_\tau}^2](1-\frac{\beta^4\gamma_-^2}{\gamma_+B}).
\]
Applying induction over the iterations $\tau$, we find the advertised bound \eqref{adrec}
\[
\E[\tn{\w_\tau}^2]\leq \tn{\w_0}^2(1-\frac{\beta^4\gamma_-^2}{\gamma_+B})^{\tau}.
\]
\end{proof}

\begin{lemma} [Merging $L$ splits]\label{lem merge}Assume matrices $\X^{(i)}\in\R^{N_i\times q}$ are given for $1\leq i\leq L$. Suppose for all $1\leq i\leq L$, rows of $\X^{(i)}$ has $\ell_2$ norm at most $\sqrt{B}$ and each $\X^{(i)}$ satisfies
\[
\gamma_+\Iden_n\succeq\frac{{\X^{(i)}}^T\X^{(i)}}{N_i}\succeq\gamma_-\Iden_n.
\]
Set $N=\sum_{i=1}^L N_i$ and form the concatenated matrix $\X=\begin{bmatrix}\X^{(1)}\\\X^{(2)}\\\vdots\\\X^{(L)}\end{bmatrix}$. 
Denote $i$th row of $\X$ by $\x_i$. Then, for each $i$, $\tn{\x_i}^2\leq B$ and
\[
\gamma_+\Iden_n\succeq \frac{\X^T\X}{N}=\frac{1}{N}\sum_{i=1}^N \x_i\x_i^T\succeq \gamma_-\Iden_n.
\]
\end{lemma}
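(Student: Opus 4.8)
The plan is to exploit the single structural fact that stacking rows turns the Gram matrix into a sum of the per-block Gram matrices. Concretely, I would first observe that since $\X$ is obtained by vertically concatenating $\X^{(1)},\dots,\X^{(L)}$, the product $\X^T\X$ is exactly the sum of the outer products of all rows, and these rows partition into the $L$ blocks. Hence
\[
\X^T\X=\sum_{i=1}^N\x_i\x_i^T=\sum_{i=1}^L {\X^{(i)}}^T\X^{(i)}.
\]
This additive decomposition is the entire engine of the proof; everything else is bookkeeping.

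The row-norm claim is immediate and I would dispose of it first: every row $\x_i$ of $\X$ coincides with some row of one of the blocks $\X^{(j)}$, and by hypothesis each such row has $\ell_2$ norm at most $\sqrt{B}$, so $\tn{\x_i}^2\leq B$ for all $i$ with no further work.

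For the spectral bounds I would rewrite the normalized Gram matrix as a convex-combination-style weighted average of the per-block normalized Grams,
\[
\frac{\X^T\X}{N}=\frac{1}{N}\sum_{i=1}^L {\X^{(i)}}^T\X^{(i)}=\frac{1}{N}\sum_{i=1}^L N_i\cdot\frac{{\X^{(i)}}^T\X^{(i)}}{N_i},
\]
and then insert the hypothesized two-sided bound $\gamma_-\Iden_n\preceq \frac{{\X^{(i)}}^T\X^{(i)}}{N_i}\preceq\gamma_+\Iden_n$ into each summand. Since the Loewner order is preserved under multiplication by the nonnegative scalars $N_i/N$ and under summation, I get
\[
\Big(\tfrac{1}{N}\textstyle\sum_{i=1}^L N_i\gamma_-\Big)\Iden_n\preceq\frac{\X^T\X}{N}\preceq\Big(\tfrac{1}{N}\textstyle\sum_{i=1}^L N_i\gamma_+\Big)\Iden_n,
\]
and the claim follows because $\sum_{i=1}^L N_i=N$, so the prefactors collapse to $\gamma_-$ and $\gamma_+$ respectively.

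I do not expect any genuine obstacle here: the result is a purely linear-algebraic averaging statement and the only thing to be careful about is the weighting by $N_i$ when the blocks have unequal heights, which is precisely what makes the endpoints of the interval $[\gamma_-,\gamma_+]$ survive as a convex combination rather than degrade. The one point worth stating explicitly for rigor is the monotonicity of the Loewner order under nonnegative scaling and addition, which justifies passing the per-block inequalities through the sum.
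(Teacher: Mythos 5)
Your proposal is correct and follows essentially the same route as the paper: both rest on the decomposition $\X^T\X=\sum_{i=1}^L {\X^{(i)}}^T\X^{(i)}$, pass the per-block Loewner bounds through nonnegative scaling and summation, and note that the row-norm claim is immediate since every row of $\X$ is a row of some block. The only cosmetic difference is that you normalize by $N$ before summing (a convex-combination view) whereas the paper sums first and divides at the end.
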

\begin{proof} The bound on the rows $\tn{\x_i}$ directly follows by assumption. For the remaining result, first observe that $\X^T\X=\sum_{i=1}^L{\X^{(i)}}^T\X^{(i)}$. Next, we have
\[
N\gamma_+\Iden_n=\sum_{i=1}^LN_i\gamma_+\Iden_n\succeq \sum_{i=1}^L{\X^{(i)}}^T\X^{(i)}\succeq \sum_{i=1}^LN_i\gamma_-\Iden_n=N\gamma_-\Iden_n.
\]
Combining these two yields the desired upper/lower bounds on ${\X^T\X}/{N}$.
\end{proof}

\section{Properties of the nonlinear state equations}\label{sec fundamental}
This section characterizes the properties of the state vector $\hh_t$ when input sequence is normally distributed. These bounds will be crucial for obtaining upper and lower bounds for the singular values of the data matrix $\X=[\x_1~\dots~\x_N]^T$ described in \eqref{reparam}. For probabilistic arguments, we will use the properties of subgaussian random variables. Orlicz norm provides a general framework that subsumes subgaussianity.
\begin{definition}[Orlicz norms] \label{ornorm}For a scalar random variable Orlicz-$a$ norm is defined as
\[
\|X\|_{\psi_{a}}=\sup_{k\geq 1}k^{-1/a}(\E[|X|^k])^{1/k}
\]
Orlicz-$a$ norm of a vector $\x\in\R^p$ is defined as $\|\x\|_{\psi_{a}}=\sup_{\vb\in \Bc^{p}} \|\vb^T\x\|_{\psi_{a}}$ where $\Bc^p$ is the unit $\el_2$ ball.
The subexponential norm is the Orlicz-$1$ norm $\te{\cdot}$ and the subgaussian norm is the Orlicz-$2$ norm $\tsub{\cdot}$.
\end{definition}

\begin{lemma}[Lipschitz properties of the state vector] \label{lip state}Consider the state equation \eqref{main rel}. Suppose activation $\phi$ is $1$-Lipschitz. Observe that $\h_{t+1}$ is a deterministic function of the input sequence $\{\ub_\tau\}_{\tau=0}^t$. Fixing all vectors $\{\ub_i\}_{i\neq \tau}$ (i.e.~all except $\ub_\tau$), $\h_{t+1}$ is $\|\A\|^{t-\tau}\|\B\|$ Lipschitz function of $\ub_\tau$ for $0\leq \tau\leq t$.
\end{lemma}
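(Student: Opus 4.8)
The plan is to track how a perturbation of the single input $\ub_\tau$ propagates forward through the recursion \eqref{main rel}. Concretely, I would fix all inputs $\{\ub_i\}_{i\neq\tau}$ and compare the trajectory generated by $\ub_\tau$ against the one generated by an alternative input $\ub_\tau'$; denote the two resulting state sequences by $\{\h_s\}$ and $\{\h_s'\}$. Since $\h_s$ is a deterministic function of $\ub_0,\dots,\ub_{s-1}$ only, the two trajectories agree for every $s\leq\tau$, so the perturbation first surfaces at time $\tau+1$. The goal is then to show that the gap $\tn{\h_s-\h_s'}$ is controlled by $\|\A\|^{s-\tau-1}\|\B\|\tn{\ub_\tau-\ub_\tau'}$ and to evaluate this at $s=t+1$.

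For the base case I would use that $\h_{\tau+1}=\phi(\A\h_\tau+\B\ub_\tau)$ and $\h_{\tau+1}'=\phi(\A\h_\tau+\B\ub_\tau')$ share the identical state $\h_\tau$. Applying the $1$-Lipschitzness of $\phi$ (it acts entrywise and therefore is $1$-Lipschitz in $\ell_2$) and then the spectral-norm bound on $\B$ gives $\tn{\h_{\tau+1}-\h_{\tau+1}'}\leq\tn{\B(\ub_\tau-\ub_\tau')}\leq\|\B\|\tn{\ub_\tau-\ub_\tau'}$, matching the claimed constant at $s=\tau+1$.

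For the inductive step, suppose $\tn{\h_s-\h_s'}\leq\|\A\|^{s-\tau-1}\|\B\|\tn{\ub_\tau-\ub_\tau'}$ holds for some $s\geq\tau+1$. At the next step both trajectories see the same input $\ub_s$, so $\h_{s+1}-\h_{s+1}'=\phi(\A\h_s+\B\ub_s)-\phi(\A\h_s'+\B\ub_s)$; invoking $1$-Lipschitzness of $\phi$ followed by the spectral-norm bound on $\A$ yields $\tn{\h_{s+1}-\h_{s+1}'}\leq\|\A\|\tn{\h_s-\h_s'}$, which advances the induction and picks up one extra factor of $\|\A\|$. Evaluating the resulting estimate at $s=t+1$ gives exponent $(t+1)-\tau-1=t-\tau$, hence the Lipschitz constant $\|\A\|^{t-\tau}\|\B\|$ claimed in the statement.

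I do not anticipate a genuine obstacle here: the argument is a clean telescoping induction whose only substantive feature is that the perturbation enters through $\B$ at the single step $\tau+1$ and is thereafter merely damped (or amplified) by a factor $\|\A\|$ per step, because all later inputs are held fixed and the nonlinearity only ever contracts Euclidean distances. The one point worth stating carefully is why $\phi$ applied entrywise is $1$-Lipschitz in the $\ell_2$ norm, which follows since a coordinatewise $1$-Lipschitz map cannot increase the Euclidean distance between two vectors; everything else is bookkeeping of the spectral-norm inequalities $\tn{\A\vb}\leq\|\A\|\tn{\vb}$ and $\tn{\B\vb}\leq\|\B\|\tn{\vb}$.
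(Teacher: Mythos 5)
Your proposal is correct and is essentially the paper's own argument: both exploit that the two trajectories coincide up to time $\tau$, bound the first divergence at time $\tau+1$ by $\|\B\|\tn{\ub_\tau-\ub_\tau'}$ via $1$-Lipschitzness of $\phi$, and then contract by a factor $\|\A\|$ at each subsequent step. The only cosmetic difference is that you run the recursion forward as an induction from $\tau+1$ up to $t+1$, whereas the paper unrolls it backward from $t+1$ down to $\tau+1$; the estimates are identical.
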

\begin{proof} Fixing $\{\ub_i\}_{i\neq \tau}$, denote $\hh_{t+1}$ as a function of $\ub_\tau$ by $\hh_{t+1}(\ub_\tau)$. Given a pair of vectors $\ub_\tau,\ub'_\tau$ using $1$-Lipschitzness of $\phi$, for any $t> \tau$, we have
\begin{align*}
\tn{\hh_{t+1}(\ub_\tau)-\hh_{t+1}(\ub'_\tau)}&\leq \tn{\phi(\A\h_t(\ub_\tau)+\B\ub_t)-\phi(\A\h_t(\ub'_\tau)+\B\ub_t)}\\
&\leq \tn{\A(\h_t(\ub_\tau)-\h_t(\ub'_\tau))}\\
&\leq \|\A\|\tn{\h_t(\ub_\tau)-\h_t(\ub'_\tau)}.
\end{align*}
Proceeding with this recursion until $t=\tau$, we find
\begin{align*}
\tn{\hh_{t+1}(\ub_\tau)-\hh_{t+1}(\ub'_\tau)}&\leq \|\A\|^{t-\tau}\tn{\h_{\tau+1}(\ub_\tau)-\h_{\tau+1}(\ub'_\tau)}\\
&\leq \|\A\|^{t-\tau}\tn{\phi(\A\h_\tau+\B\ub_\tau)-\phi(\A\h_\tau+\B\ub'_\tau)}\\
&\leq \|\A\|^{t-\tau}\|\B\|\tn{\ub_\tau-\ub'_\tau}.
\end{align*}
This bound implies $\hh_{t+1}(\ub_\tau)$ is $\|\A\|^{t-\tau}\|\B\|$ Lipschitz function of $\ub_\tau$.
\end{proof}

\newcommand\mysim{\stackrel{\mathclap{\normalfont\mbox{\footnotesize{i.i.d.}}}}{\sim}}

\begin{lemma} [Upper bound]\label{upp bound}Consider the state equation governed by equation \eqref{main rel}. Suppose $\ub_t\distas\Nn(0,\Iden_p)$, $\phi$ is $1$-Lipschitz, $\phi(0)=0$ and $\h_0=0$. Recall the definition \eqref{bteq} of $\lip_t$. We have the following properties
\begin{itemize}
\item $\hh_{t}$ is a $\lip_t$-Lipschitz function of the vector $\qb_t=[\ub_0^T~\dots~\ub_{t-1}^T]^T\in\R^{tp}$.
\item There exists an absolute constant $c>0$ such that $\tsub{\hh_{t}-\E[\hh_{t}]}\leq c\lip_t$ and $\bSio{\h_{t}}\preceq \lip_t^2\Iden_n$.
\item $\h_{t}$ satisfies
\[
\E[\tn{\hh_{t}}^2]\leq  \tr{\B\B^T}\frac{1-\|\A\|^{2t}}{1-\|\A\|^2}\leq \min\{n,p\}B_t^2.
\]
Also, there exists an absolute constant $c>0$ such that for any $m\geq n$, with probability $1-2\exp(-100m)$, $\tn{\hh_t}\leq c\sqrt{m}\lip_t$.
\end{itemize}
\end{lemma}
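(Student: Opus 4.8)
The plan is to derive all three bullets from the per-input Lipschitz estimate already established in Lemma~\ref{lip state}, together with standard concentration for Lipschitz functions of Gaussian vectors. For the first claim, recall from Lemma~\ref{lip state} (applied with $t$ in place of $t+1$) that fixing every input except $\ub_\tau$ makes $\h_t$ a $\|\A\|^{t-1-\tau}\|\B\|$-Lipschitz function of $\ub_\tau$, for $0\le\tau\le t-1$. To convert these block-wise constants into a single joint Lipschitz constant in $\qb_t=[\ub_0^T~\dots~\ub_{t-1}^T]^T$, I would telescope: interpolate between two trajectories $\qb_t,\qb_t'$ by swapping one block $\ub_\tau$ at a time, bound each one-block increment by $\|\A\|^{t-1-\tau}\|\B\|\,\tn{\ub_\tau-\ub_\tau'}$, and then apply Cauchy--Schwarz across the $t$ blocks. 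This produces joint Lipschitz constant $\|\B\|\sqrt{\sum_{\tau=0}^{t-1}\|\A\|^{2(t-1-\tau)}}=\|\B\|\sqrt{(1-\|\A\|^{2t})/(1-\|\A\|^2)}=B_t$, exactly as stated; the Cauchy--Schwarz step is precisely what yields the square-root-of-geometric-series form rather than the looser sum $\sum_\tau\|\A\|^{t-1-\tau}\|\B\|$.

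Since the inputs are i.i.d.\ standard normal, $\qb_t\sim\Nn(0,\Iden_{tp})$ is a standard Gaussian vector, so the second bullet follows by feeding the $B_t$-Lipschitz map $\qb_t\mapsto\h_t$ into Gaussian concentration. For the subgaussian bound, fix a unit vector $\vb$; then $\vb^T\h_t$ is $B_t$-Lipschitz in $\qb_t$, and the Gaussian Lipschitz concentration inequality gives $\|\vb^T(\h_t-\E[\h_t])\|_{\psi_2}\le cB_t$ uniformly over $\vb$, which is the definition of $\tsub{\h_t-\E[\h_t]}\le cB_t$. For the covariance I would instead invoke the Gaussian Poincar\'e inequality on the same projections, $\Var(\vb^T\h_t)\le\E[\tn{\nabla(\vb^T\h_t)}^2]\le B_t^2$, so that $\vb^T\bSio{\h_t}\vb\le B_t^2$ for every unit $\vb$, i.e.\ $\bSio{\h_t}\preceq B_t^2\Iden_n$. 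Using Poincar\'e here, rather than squaring the subgaussian estimate, is what keeps the leading constant equal to $1$.

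For the third bullet I would set up a direct recursion. Because $\phi(0)=0$ and $\phi$ is $1$-Lipschitz we have $\tn{\h_{t+1}}\le\tn{\A\h_t+\B\ub_t}$; expanding the square and using that $\ub_t$ is independent of $\h_t$ with $\E[\ub_t]=0$ and $\E[\ub_t\ub_t^T]=\Iden_p$ kills the cross term and leaves $\E[\tn{\h_{t+1}}^2]\le\|\A\|^2\E[\tn{\h_t}^2]+\tr{\B\B^T}$. Unrolling from $\h_0=0$ gives $\E[\tn{\h_t}^2]\le\tr{\B\B^T}\sum_{j=0}^{t-1}\|\A\|^{2j}=\tr{\B\B^T}(1-\|\A\|^{2t})/(1-\|\A\|^2)$, and bounding $\tr{\B\B^T}\le\min\{n,p\}\|\B\|^2$ yields both stated inequalities. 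For the high-probability bound, $\tn{\h_t}$ is itself $B_t$-Lipschitz in $\qb_t$ (compose the $1$-Lipschitz norm with the first bullet), so Gaussian concentration of $\tn{\h_t}$ around its mean, together with $\E[\tn{\h_t}]\le\sqrt{\E[\tn{\h_t}^2]}\le\sqrt{n}\,B_t$, gives $\tn{\h_t}\le(\sqrt{n}+\sqrt{200m})B_t\le c\sqrt{m}\,B_t$ with probability $1-2\exp(-100m)$ once $m\ge n$ (the two-sided tail accounting for the factor $2$).

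The one step that genuinely requires care is the joint-Lipschitz combination in the first bullet: a naive triangle-inequality sum over the input blocks overcounts and returns the wrong geometric sum, so the Cauchy--Schwarz step is essential to obtain the sharp constant $B_t$. Everything downstream is then a fairly mechanical application of Gaussian Lipschitz concentration and the Gaussian Poincar\'e inequality, with the recursion for $\E[\tn{\h_t}^2]$ being elementary.
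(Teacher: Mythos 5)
Your proposal is correct and follows essentially the same route as the paper's proof: the telescoping-plus-Cauchy--Schwarz argument for the joint Lipschitz constant $B_t$, Gaussian Lipschitz concentration for the subgaussian norm together with the Gaussian Poincar\'e (variance) inequality for the covariance bound with constant $1$, and the recursion $\E[\tn{\h_{t+1}}^2]\leq\|\A\|^2\E[\tn{\h_t}^2]+\tr{\B\B^T}$ followed by concentration of the $B_t$-Lipschitz map $\qb_t\mapsto\tn{\h_t}$ for the last bullet. No gaps.
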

\begin{proof} 
\noindent{\bf{i) Bounding Lipschitz constant:}} Observe that $\h_{t}$ is a deterministic function of $\qb_t$ i.e.~$\h_{t}=f(\qb_t)$ for some function $f$. To bound Lipschitz constant of $f$, for all (deterministic) vector pairs $\qb_t$ and $\hat{\qb}_t$, we find a scalar $L_f$ satisfying,
\begin{align}
\tn{f(\qb_t)-f(\hat{\qb}_t)}\leq L_f\tn{\qb_t-\hat{\qb}_t}.\label{liplip}
\end{align}
Define the vectors, $\{\ab_i\}_{i=0}^{t}$, as follows
\begin{align}
\ab_i=[\hat{\ub}_0^T~\dots~\hat{\ub}_{i-1}^T~{\ub}_{i}^T~\dots~{\ub}_{t-1}^T]^T.\nn
\end{align}
Observing that $\ab_0=\qb_t$, $\ab_{t}=\hat{\qb}_t$, we write the telescopic sum,
\[
\tn{f(\qb_t)-f(\hat{\qb}_t)}\leq \sum_{i=0}^{t-1} \tn{f(\ab_{i+1})-f(\ab_{i})}.
\]
Focusing on the individual terms $f(\ab_{i+1})-f(\ab_{i})$, observe that the only difference is the $\ub_i,\hat{\ub}_i$ terms. Viewing $\h_{t}$ as a function of $\ub_i$ and applying Lemma \ref{lip state}, 
\begin{align}
\tn{f(\ab_{i+1})-f(\ab_{i})}\leq \|\A\|^{t-1-i}\|\B\|\tn{\ub_i-\hat{\ub}_i}.\nn
\end{align}
To bound the sum, we apply the Cauchy-Schwarz inequality; which yields
\begin{align}
|f(\qb_t)-f(\hat{\qb}_t)|&\leq \sum_{i=0}^{t-1}\|\A\|^{t-1-i}\|\B\|\tn{\ub_i-\hat{\ub}_i}\nn\\
&\leq {{(\sum_{i=0}^{t-1}\|\A\|^{2(t-1-i)}\|\B\|^2)^{1/2}}}\underbrace{(\sum_{i=0}^{t-1}{\tn{\ub_i-\hat{\ub}_i}^2})^{1/2}}_{\tn{\qb_t-\hat{\qb}_t}}\nn\\
&\leq {{\sqrt{\frac{\|\B\|^2(1-\|\A\|^{2t})}{1-\|\A\|^{2}}}}}\tn{\qb_t-\hat{\qb}_t}\nn\\
&= B_t\tn{\qb_t-\hat{\qb}_t}.
\end{align}
The final line achieves the inequality \eqref{liplip} with $L_f=B_{t}$ hence $\h_{t}$ is $\lip_{t}$ Lipschitz function of $\qb_t$.
%

\noindent{\bf{ii) Bounding subgaussian norm:}} When $\ub_t\distas\Nn(0,\Iden_p)$, the vector $\qb_t$ is distributed as $\Nn(0,\Iden_{tp})$. Since $\h_{t}$ a $\lip_t$ Lipschitz function of $\qb_t$, for any fixed unit length vector $\vb$, $\alpha_{\vb}:=\vb^T\h_{t}=\vb^Tf(\qb_t)$ is still $\lip_t$-Lipschitz function of $\qb_t$. Hence, using Gaussian concentration of Lipschitz functions, $\alpha_{\vb}$ satisfies
\[
\Pro(|\alpha_{\vb}-\E[\alpha_{\vb}]|\geq t)\leq 2\exp(-\frac{t^2}{2\lip_t^2}).
\] 
This implies that for any $\vb$, $\alpha_{\vb}-\E[\alpha_{\vb}]$ is $\order{\lip_t}$ subgaussian \cite{vershynin2010introduction}. This is true for all unit $\vb$, hence using Definition \ref{ornorm}, the vector $\h_{t}$ satisfies $\tsub{\h_{t}-\E[\h_{t}]}\leq \order{\lip_t}$ as well. Secondly, $\lip_t$-Lipschitz function of a Gaussian vector obeys the variance inequality $\var{\alpha_{\vb}}\leq \lip_t^2$ (page $49$ of \cite{ledoux2001concentration}), which implies the covariance bound
\[
\bSio{\h_{t}}\preceq {\lip_t^2}\Iden_n.
\]
\noindent{\bf{iii) Bounding $\ell_2$-norm:}} To obtain this result, we first bound $\E[\tn{\hh_t}^2]$. Since $\phi$ is $1$-Lipschitz and $\phi(0)=0$, we have the deterministic relation
\[
\tn{\hh_{t+1}}\leq \tn{\A\hh_t+\B\ub_t}.
\]
Taking squares of both sides, expanding the right hand side, and using the independence of $\hh_t,\ub_t$ and the covariance information of $\ub_t$, we obtain
\begin{align}
\E[\tn{\hh_{t+1}}^2]&\leq \E[\tn{\A\hh_t+\B\ub_t}^2]=\E[\tn{\A\hh_t}^2]+\E[\tn{\B\ub_t}^2]\\
&\leq \|\A\|^2\E[\tn{\hh_t}^2]+\tr{\B\B^T}.
\end{align}
Now that the recursion is established, expanding $\hh_t$ on the right hand side until $\hh_0=0$, we obtain
\[
\E[\tn{\hh_{t+1}}^2]\leq \sum_{i=0}^t\|\A\|^{2i}\tr{\B\B^T}\leq\tr{\B\B^T} \frac{1-\|\A\|^{2(t+1)}}{1-\|\A\|^2}.
\]
Now using the fact that $\tr{\B\B^T}\leq\text{rank}(\B)\|\B\|^2\leq \min\{n,p\}\|\B\|^2$, we find
\[
\E[\tn{\hh_{t+1}}]^2\leq \E[\tn{\hh_{t+1}}^2]\leq \min\{n,p\}\lip_{t+1}^2.
\]
Finally, using the fact that $\hh_{t}$ is $\lip_t$-Lipschitz function and utilizing Gaussian concentration of $\qb_t\sim\Nn(0,\Iden_{tp})$, we find
\[
\Pro(\tn{\hh_{t+1}}-\E[\tn{\hh_{t+1}}]\geq t)\leq \exp(-\frac{t^2}{2\lip_t^2}).
\]
Setting $t=(c-1)\sqrt{m}\lip_t$ for sufficiently large $c>0$, we find $\Pro(\tn{\hh_{t}}\geq \sqrt{n}B_t+(c-1)\sqrt{m}\lip_t)\leq \exp(-100m)$.
%
\end{proof}

\begin{lemma}[Odd activations] \label{lem odd}Suppose $\phi$ is strictly increasing and obeys $\phi(x)=-\phi(-x)$ for all $x$ and $\h_0=0$. Consider the state equation \eqref{main rel} driven $\ub_t\distas\Nn(0,\Iden_p)$. We have that $\E[\h_t]=0$.
\end{lemma}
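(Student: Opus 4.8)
The plan is to prove by induction on $t$ that the state vector $\h_t$ is a symmetric (i.e.~odd-symmetric) function of the input sequence, from which $\E[\h_t]=0$ follows because the Gaussian input distribution is itself symmetric about the origin. Concretely, let $\qb_t=[\ub_0^T~\dots~\ub_{t-1}^T]^T$ collect the inputs driving $\h_t$, and recall from the state equation \eqref{main rel} that $\h_t=f_t(\qb_t)$ for a deterministic map $f_t$. The key structural claim is that $f_t$ is \emph{odd}, meaning $f_t(-\qb_t)=-f_t(\qb_t)$ for all $\qb_t$. Once this is established, since $\qb_t\sim\Nn(0,\Iden_{tp})$ has a distribution invariant under $\qb_t\mapsto-\qb_t$, we get $\E[\h_t]=\E[f_t(\qb_t)]=\E[f_t(-\qb_t)]=-\E[f_t(\qb_t)]$, forcing $\E[\h_t]=0$.

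First I would set up the base case: $\h_0=0$, which is trivially an odd function (the zero map). For the inductive step, I assume $\h_t=f_t(\qb_t)$ with $f_t$ odd and examine $\h_{t+1}=\phi(\A\h_t+\B\ub_t)$. Replacing the input block $\qb_{t+1}=[\qb_t^T~\ub_t^T]^T$ by its negation sends $\qb_t\mapsto-\qb_t$ and $\ub_t\mapsto-\ub_t$. By the inductive hypothesis $\h_t=f_t(\qb_t)\mapsto f_t(-\qb_t)=-\h_t$, so the pre-activation argument transforms as $\A\h_t+\B\ub_t\mapsto \A(-\h_t)+\B(-\ub_t)=-(\A\h_t+\B\ub_t)$. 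Now I invoke the oddness of $\phi$: since $\phi(-x)=-\phi(x)$ applies entrywise, we obtain $\phi(-(\A\h_t+\B\ub_t))=-\phi(\A\h_t+\B\ub_t)=-\h_{t+1}$. Hence $f_{t+1}$ is odd, completing the induction.

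The argument is almost entirely structural and I do not anticipate a serious obstacle; the only point requiring mild care is bookkeeping the input indices to confirm that negating the \emph{entire} input vector $\qb_{t+1}$ is exactly what negates both $\h_t$ (through the inductive hypothesis, which depends only on $\qb_t=[\ub_0^T~\dots~\ub_{t-1}^T]^T$) and the fresh input $\ub_t$ simultaneously. One should also note that strict monotonicity of $\phi$ is not actually needed for this particular conclusion — only the oddness $\phi(x)=-\phi(-x)$ and $\h_0=0$ drive the result — though the hypothesis is stated for consistency with the surrounding setup where $\phi$ is $\beta$-increasing. The final probabilistic step reduces to the elementary fact that an odd function of a symmetric random vector has zero mean, which I would state in one line.
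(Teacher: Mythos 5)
Your proof is correct, but it takes a somewhat different route from the paper's. The paper argues by induction on the \emph{distribution} of $\h_t$: assuming $\h_t$ is symmetrically distributed about $0$, it shows $\Pro(\h_{t+1}\in S)=\Pro(\h_{t+1}\in -S)$ for every set $S$ by pulling $S$ back through $\phi$ (writing $S'=\phi^{-1}(S)$ and using $\phi(-S')=-S$) and then using the independence and symmetry of $\h_t$ and $\ub_t$. You instead argue at the level of the deterministic input-to-state map: you show by induction that $\h_t=f_t(\qb_t)$ with $f_t$ odd, and then apply the sign-symmetry of the Gaussian input $\qb_t\sim\Nn(0,\Iden_{tp})$ exactly once at the end to get $\E[f_t(\qb_t)]=-\E[f_t(\qb_t)]=0$. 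Your version is arguably more elementary: it avoids the set-inverse manipulation entirely, and — as you correctly observe — it never uses strict monotonicity of $\phi$, whereas the paper's proof invokes bijectivity to define $\phi^{-1}(S)$ (though that hypothesis is likewise inessential there, since one could work with preimages of arbitrary measurable sets without invertibility). The paper's approach establishes the slightly stronger conclusion that the full distribution of $\h_t$ is symmetric, not just that its mean vanishes, but only the mean is needed where the lemma is applied (Theorem \ref{thm odd}, via $\theta=0$ in Assumption \ref{ass well}). The one point worth stating explicitly in your write-up is that $\E[\h_t]$ is finite, which follows from the moment bound in Lemma \ref{upp bound}; with that, your final one-line symmetry step is airtight.
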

\begin{proof} We will inductively show that $\{\h_t\}_{t\geq 0}$ has a symmetric distribution around $0$. Suppose the vector $\h_t$ satisfies this assumption. Let $S\subset\R^n$ be a set. We will argue that $\Pro(\h_{t+1}\subset S)=\Pro(\h_{t+1}\subset -S)$. Since $\phi$ is strictly increasing, it is bijective on vectors, and we can define the unique inverse set $S'=\phi^{-1}(S)$. Also since $\phi$ is odd, $\phi(-S')=-S$. Since $\h_t,\ub_t$ are independent and symmetric, we reach the desired conclusion as follows
\begin{align}
\Pro(\h_{t+1}\subset S)&=\Pro(\A\h_t+\B\ub_t\subset S')=\Pro(\A(-\h_t)+\B(-\ub_t)\subset S')\\
&=\Pro(\A\h_t+\B\ub_t\subset -S')=\Pro(\phi(\A\h_t+\B\ub_t)\subset \phi(-S'))=\Pro(\h_{t+1}\subset -S).
\end{align}
\end{proof}
\begin{theorem} [State-vector lower bound] \label{lwbnd1}Consider the nonlinear state equation \eqref{main rel} with $\{\ub_t\}_{t\geq 0}\distas\Nn(0,\Iden_p)$. Suppose $\phi$ is a $\beta$-increasing function for some constant $\beta>0$. For any $t\geq 1$, the state vector obeys 
\[
\bSio{\h_{t}}\succeq\beta^2\smn(\B\B^T)\Iden_n.\label{h low u}
\]\end{theorem}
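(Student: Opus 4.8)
The plan is to reduce the unconditional lower bound to a \emph{uniform} conditional lower bound and then invoke a Gaussian covariance lower bound.

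\textbf{Step 1 (conditioning on the past).} I would write $\h_t=\phi(\A\h_{t-1}+\B\ub_{t-1})$ and use that $\ub_{t-1}\sim\Nn(0,\Iden_p)$ is independent of $\h_{t-1}$, which is a deterministic function of $\ub_0,\dots,\ub_{t-2}$. By the law of total covariance,
\[
\bSio{\h_t}=\E\big[\bSio{\h_t\mid\h_{t-1}}\big]+\bSio{\,\E[\h_t\mid\h_{t-1}]\,}\succeq \E\big[\bSio{\h_t\mid\h_{t-1}}\big],
\]
since the second term is positive semidefinite. Fixing $\h_{t-1}=\h$ and setting $\mathbf{c}=\A\h$, the conditional law of $\h_t$ is that of $\phi(\mathbf{c}+\B\g)$ with $\g\sim\Nn(0,\Iden_p)$. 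It therefore suffices to prove the bound $\bSio{\phi(\mathbf{c}+\B\g)}\succeq\beta^2\smn(\B\B^T)\Iden_n$ for \emph{every} fixed $\mathbf{c}\in\R^n$; taking expectation over $\h_{t-1}$ then closes the argument, and for $t=1$ the conditioning is vacuous since $\h_0=0$.

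\textbf{Step 2 (a Gaussian covariance lower bound).} The core tool is the vector-valued lower Poincar\'e inequality: for a Lipschitz map $\mathbf{F}:\R^p\to\R^n$ and $\g\sim\Nn(0,\Iden_p)$, one has $\bSio{\mathbf{F}(\g)}\succeq \E[J_{\mathbf{F}}(\g)]\,\E[J_{\mathbf{F}}(\g)]^T$, where $J_{\mathbf{F}}$ is the Jacobian. I would prove this by fixing arbitrary $\vb\in\R^n$ and applying the scalar inequality $\var{\vb^T\mathbf{F}(\g)}\geq\|\E[\nabla(\vb^T\mathbf{F})]\|^2$. The scalar bound follows from Gaussian integration by parts $\cov(g_i,f(\g))=\E[\partial_i f(\g)]$: writing $\mathbf{u}=\E[\nabla f]$ gives $\cov(\mathbf{u}^T\g,f(\g))=\|\mathbf{u}\|^2$, and the Cauchy--Schwarz inequality for covariances together with $\var{\mathbf{u}^T\g}=\|\mathbf{u}\|^2$ yields $\|\mathbf{u}\|^4\leq\|\mathbf{u}\|^2\var{f}$. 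As $\vb$ is arbitrary, the matrix inequality follows.

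\textbf{Step 3 (Jacobian structure and PSD bookkeeping).} For $\mathbf{F}(\g)=\phi(\mathbf{c}+\B\g)$ the Jacobian factors as $J_{\mathbf{F}}(\g)=\Db(\g)\B$ with $\Db(\g)=\diag\big(\phi'(\mathbf{c}+\B\g)\big)$, so $\E[J_{\mathbf{F}}]=\bar{\Db}\B$ where $\bar{\Db}=\diag\big(\E[\phi'((\mathbf{c}+\B\g)_j)]\big)$. Since $\phi$ is $\beta$-increasing, $\phi'\geq\beta$ almost everywhere and hence $\bar{\Db}\succeq\beta\Iden_n$. It remains to bound the congruence $\bar{\Db}\B\B^T\bar{\Db}$: using $\B\B^T\succeq\smn(\B\B^T)\Iden_n$ and that conjugation by the symmetric $\bar{\Db}$ preserves the PSD order,
\[
\bar{\Db}\B\B^T\bar{\Db}\succeq \smn(\B\B^T)\,\bar{\Db}^2\succeq \beta^2\smn(\B\B^T)\Iden_n,
\]
the last step using $\bar{\Db}\succeq\beta\Iden_n\succeq 0$. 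This establishes the uniform conditional bound of Step 1 and hence the theorem (when $p<n$ the right-hand side is $0$ and the claim is vacuous).

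\textbf{Main obstacle.} The delicate point is Steps 2--3. A naive argument using two independent copies and the lower Lipschitz bound $|\phi(a)-\phi(b)|\geq\beta|a-b|$ fails when $n>1$, because sign cancellations among the coordinates of a test direction $\vb$ destroy the termwise estimate. Routing through Stein's identity sidesteps this, at the price of settling for the direction-free bound $\beta^2\smn(\B\B^T)\Iden_n$ rather than the sharper $\beta^2\B\B^T$, and of controlling the congruence $\bar{\Db}\B\B^T\bar{\Db}$ instead of $\bar{\Db}^2$ directly; performing the two PSD reductions in the right order (peeling off $\smn(\B\B^T)$ before squaring $\bar{\Db}$) is precisely what makes the estimate close. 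A secondary technical point is the non-differentiability of $\phi$ (e.g.\ Leaky ReLU at the origin), which I would handle by using the a.e.\ (weak) derivative in Stein's identity or a standard smoothing approximation.
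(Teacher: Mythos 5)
Your proof is correct, but it takes a genuinely different route from the paper's. The paper first splits the Gaussian input as $\B\ub_t\sim\g_1+\g_2$ with $\g_1\sim\Nn(0,\smn(\B\B^T)\Iden_n)$ having i.i.d.\ entries and $\g_2\sim\Nn(0,\B\B^T-\smn(\B\B^T)\Iden_n)$ independent; it then conditions on everything except $\g_1$, so that $\phi(\g_1+\y)$ has independent entries and a \emph{diagonal} conditional covariance, and finally invokes a purely scalar bound $\var{\phi(X+y)}\geq\beta^2\var{X}$ proved via the $1/\beta$-Lipschitzness of $\phi^{-1}$ (Lemmas \ref{vec low} and \ref{scalar lem}). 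You instead condition only on $\h_{t-1}$ and lower-bound each directional variance $\var{\vb^T\phi(\mathbf{c}+\B\g)}$ by $\|\E[\nabla(\vb^T\phi(\mathbf{c}+\B\g))]\|^2$ via Stein's identity plus Cauchy--Schwarz, reducing the theorem to the expected Jacobian $\bar{\Db}\B$ with $\bar{\Db}\succeq\beta\Iden_n$; your remark about peeling off $\smn(\B\B^T)$ \emph{before} replacing $\bar{\Db}^2$ by $\beta^2\Iden_n$ is exactly the right care, since $\bar{\Db}\M\bar{\Db}\succeq\beta^2\M$ is false for general PSD $\M$. The trade-offs: the paper's scalar lemma needs no Gaussianity at all (only the isotropic-independent-component decomposition does), which is why the same lemma recycles into the MISO refinement of Theorem \ref{miso lem}; your argument is intrinsically Gaussian but avoids the decomposition trick entirely, works with the full matrix $\B$ in one shot, and yields the (conditionally) sharper intermediate bound $\bSio{\h_t}\succeq\E[\bar{\Db}\B\B^T\bar{\Db}]$ before the final relaxation to $\beta^2\smn(\B\B^T)\Iden_n$. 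Both proofs correctly treat the $p<n$ case as vacuous and both need the a.e.-derivative caveat for activations like Leaky ReLU, which you flag appropriately.
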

\begin{proof} The proof is an application of Lemma \ref{vec low}. The main idea is to write $\h_t$ as sum of two independent vectors, one of which has independent entries. Consider a multivariate Gaussian vector $\g\sim\Nn(0,\bSi)$. $\g$ is statistically identical to $\g_1+\g_2$ where $\g_1\sim\Nn(0,\lmn{\bSi}\Iden_d)$ and $\g_2 \sim\Nn(0,\bSi-\lmn{\bSi}\Iden_d)$ are independent multivariate Gaussians.

Since $\B\ub_t\sim\Nn(0,\B\B^T)$, setting $\bSi=\B\B^T$ and $\smn=\lmn{\bSi}$, we have that $\B\ub_t\sim\g_1+\g_2$ where $\g_1,\g_2$ are independent and $\g_1\sim\Nn(0,\smn\Iden_n)$ and $\g_2\sim\Nn(0,\bSi-\smn\Iden_n)$. Consequently, we may write
\[
\B\ub_t+\A\h_t\sim\g_1+\g_2+\A\h_t.
\]
For lower bound, the crucial component will be the $\g_1$ term; which has i.i.d.~entries. Applying Lemma \ref{vec low} by setting $\x=\g_1$ and $\y=\g_2+\A\h_t$, and using the fact that $\h_t,\g_1,\g_2$ are all independent of each other, we find the advertised bound, for all $t\geq 0$, via
\[
\bSio{\h_{t+1}}=\bSio{\phi(\g_1+\g_2+\A\h_t)}\succeq\beta^2\smn\Iden_n.
\]
\end{proof}
The next theorem applies to multiple-input-single-output (MISO) systems where $\A$ is a scalar and $\B$ is a row vector. The goal is refining the lower bound of Theorem \ref{lwbnd1}.
\begin{theorem}[MISO lower bound] \label{miso lem}Consider the setup of Theorem \ref{lwbnd1} with single output i.e.~$n=1$. For any $t\geq 1$, the state vector obeys 
\[
\var{\h_{t}}\geq \beta^2\tn{\B}^2\frac{1-(\beta |\A|)^{2t}}{1-\beta^2|\A|^2}.\label{h low u2}
\]
\end{theorem}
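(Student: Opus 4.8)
The plan is to reduce the claim to a one-dimensional variance recursion, since $n=1$ makes $\h_t$ a scalar, $\A$ a scalar, and $\B\ub_t$ a centered scalar Gaussian with $\B\ub_t\distas\Nn(0,\tn{\B}^2)$. The engine of the proof is a scalar variance-expansion inequality: for any real random variable $W$ with finite variance and any $\beta$-increasing $\phi$, one has $\var{\phi(W)}\geq\beta^2\var{W}$. I would prove this via the symmetrization identity $\var{X}=\tfrac12\E[(X-X')^2]$ for an independent copy $X'$ of $X$, together with the expansiveness estimate $|\phi(a)-\phi(b)|\geq\beta|a-b|$, which follows by integrating $\phi'\geq\beta$. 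Applying the identity to $X=\phi(W)$ and bounding $(\phi(W)-\phi(W'))^2\geq\beta^2(W-W')^2$ pointwise yields the inequality after taking expectations.

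The second step is to instantiate this with the pre-activation $W_t:=\A\h_t+\B\ub_t$. The crucial observation — and the point where this argument improves on Theorem \ref{lwbnd1} — is that in the scalar case the variance bound applies to the \emph{entire} pre-activation $W_t$, not merely to the isotropic Gaussian part $\g_1$ isolated in the proof of Theorem \ref{lwbnd1}. Because $\h_0=0$, the state $\h_t$ is a deterministic function of $\ub_0,\dots,\ub_{t-1}$ and is therefore independent of $\ub_t$; hence $\A\h_t$ and $\B\ub_t$ are independent and $\var{W_t}=|\A|^2\var{\h_t}+\tn{\B}^2$. Combining with the expansion inequality gives the recursion
\[
\var{\h_{t+1}}=\var{\phi(W_t)}\geq\beta^2\big(|\A|^2\var{\h_t}+\tn{\B}^2\big).
\]

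Finally I would unroll this. Writing $v_t=\var{\h_t}$ and $r=(\beta|\A|)^2$, the recursion reads $v_{t+1}\geq r\,v_t+\beta^2\tn{\B}^2$ with base case $v_0=0$, and a one-line induction gives the geometric sum
\[
v_t\geq\beta^2\tn{\B}^2\sum_{j=0}^{t-1}r^{\,j}=\beta^2\tn{\B}^2\frac{1-(\beta|\A|)^{2t}}{1-\beta^2|\A|^2},
\]
which is exactly the advertised bound (the degenerate case $\beta|\A|=1$ being recovered by continuity, where the sum equals $\beta^2\tn{\B}^2\,t$). As for difficulty, there is no serious obstacle once the variance-expansion lemma is in hand; the only genuine content is recognizing that the full pre-activation can be fed into that lemma so that the $|\A|^2\var{\h_t}$ term is retained and the variances accumulate across time — this is precisely what upgrades the single-term bound of Theorem \ref{lwbnd1} to the geometric series. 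Minor care is needed only to justify finiteness of $\var{\h_t}$ (immediate from the Lipschitz/sub-Gaussian control in Lemma \ref{upp bound}) and the independence of $\h_t$ and $\ub_t$.
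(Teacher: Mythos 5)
Your proposal is correct and follows essentially the same route as the paper: the paper likewise applies the scalar variance-expansion bound $\var{\phi(X)}\geq\beta^2\var{X}$ to the full pre-activation $\A\h_t+\B\ub_t$, uses independence of $\h_t$ and $\ub_t$ to get $\var{\A\h_t+\B\ub_t}=|\A|^2\var{\h_t}+\tn{\B}^2$, and unrolls the resulting recursion into the geometric sum. The only (immaterial) difference is how the auxiliary inequality is justified — you use the symmetrization identity $\var{X}=\tfrac12\E[(X-X')^2]$ together with $|\phi(a)-\phi(b)|\geq\beta|a-b|$, while the paper invokes its Lemma \ref{scalar lem}, which argues via the $1/\beta$-Lipschitzness of $\phi^{-1}$ and the fact that the mean minimizes mean squared deviation.
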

\begin{proof} For any random variable $X$, applying Lemma \ref{vec low}, we have $\var{\phi(X)}\geq \beta^2\var{X}$. Recursively, this yields
\[
\var{\h_{t+1}}=\var{\phi(\A\h_{t}+\B\ub_t)}\geq \beta^2\var{\A\h_{t}+\B\ub_t}=\beta^2(|\A|^2\var{\h_t}+\tn{\B}^2).
\]
Expanding these inequalities till $\h_0$, we obtain the desired bound
\[
\var{\h_{t}}\geq \sum_{i=1}^t(\beta^{i} |\A|^{i-1}\tn{\B})^2.
\]
\end{proof}
\begin{lemma}[Vector lower bound] \label{vec low}Suppose $\phi$ is a $\beta$-increasing function. Let $\x=[\x_1~\dots~\x_n]^T$ be a vector with i.i.d.~entries distributed as $\x_i\sim X$. Let $\y$ be a random vector independent of $\x$. Then,
\[
\bSio{\phi(\x+\y)}\succeq \beta^2\var{X}\Iden_n.
\]
\end{lemma}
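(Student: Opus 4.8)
The plan is to reduce the matrix inequality to a scalar variance inequality by conditioning on $\y$. Write $\z=\phi(\x+\y)$ and apply the law of total covariance
\[
\bSio{\z}=\E_{\y}\big[\bSio{\z\mid\y}\big]+\bSio{\E[\z\mid\y]}.
\]
The second term is the covariance matrix of the random vector $\E[\z\mid\y]$, hence positive semidefinite, so dropping it yields $\bSio{\z}\succeq\E_{\y}[\bSio{\z\mid\y}]$. The point of conditioning is that, given $\y$, the coordinates $z_i=\phi(x_i+y_i)$ are independent: the $x_i$ are i.i.d.\ and independent of $\y$, so fixing $\y$ decouples the entries. Consequently $\bSio{\z\mid\y}$ is diagonal, with $i$th diagonal entry $\var{\phi(x_i+y_i)\mid\y}$, and it suffices to lower bound each of these conditional variances by $\beta^2\var{X}$.

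The heart of the argument is the scalar fact that a $\beta$-increasing function inflates variance by at least $\beta^2$: for any $\beta$-increasing $g$ and any scalar random variable $X$, one has $\var{g(X)}\geq \beta^2\var{X}$. I would prove this with the i.i.d.-copy identity $\var{g(X)}=\tfrac12\E[(g(X)-g(X'))^2]$, where $X'$ is an independent copy of $X$. Since $g$ is $\beta$-increasing, the difference quotient satisfies $\frac{g(a)-g(b)}{a-b}\geq\beta$ for all $a\neq b$ (exactly the quantity $\phi'(a,b)$ used in the proof of Theorem \ref{det conv}), so $(g(X)-g(X'))^2\geq\beta^2(X-X')^2$ pointwise; taking expectations and applying the same identity to $X$ gives the claim.

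To finish, fix a realization of $\y$ and apply this inequality coordinatewise with $g(\cdot)=\phi(\cdot+y_i)$. This shifted map is again $\beta$-increasing because its difference quotients equal those of $\phi$ evaluated at shifted arguments and are therefore still bounded below by $\beta$. Since $x_i$ is independent of $\y$, its conditional law is that of $X$, so $\var{\phi(x_i+y_i)\mid\y}\geq\beta^2\var{X}$ for every $i$ and every $\y$. Hence the diagonal matrix $\bSio{\z\mid\y}$ satisfies $\bSio{\z\mid\y}\succeq\beta^2\var{X}\Iden_n$; taking expectation over $\y$ and combining with the total-covariance bound of the first paragraph yields $\bSio{\phi(\x+\y)}\succeq\beta^2\var{X}\Iden_n$.

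The only genuinely substantive step is the scalar variance inequality $\var{g(X)}\geq\beta^2\var{X}$; the rest is bookkeeping, namely the conditioning that diagonalizes $\bSio{\z\mid\y}$ and the discarding of the positive-semidefinite cross term. The mild technical subtlety to watch is that one should invoke the $\beta$-increasing property in its integrated, difference-quotient form $\tfrac{\phi(a)-\phi(b)}{a-b}\geq\beta$ rather than merely through $\phi'\geq\beta$, so that the bound is valid without assuming differentiability of $\phi$ everywhere.
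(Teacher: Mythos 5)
Your proof is correct and follows the same overall architecture as the paper's: law of total covariance, dropping the positive-semidefinite term $\bSio{\E[\phi(\x+\y)\bgl\y]}$, observing that conditioning on $\y$ makes the entries independent so the conditional covariance is diagonal, and then reducing to a scalar variance inequality. The one place you genuinely diverge is in how you establish that scalar inequality $\var{\phi(X+y)}\geq\beta^2\var{X}$. The paper (Lemma \ref{scalar lem}) argues via the inverse function: a $\beta$-increasing $\phi$ is invertible with $1/\beta$-Lipschitz inverse, and combining this with the fact that $\E[X]$ minimizes $\alpha\mapsto\E(X-\alpha)^2$ gives $\E(\phi(X+y)-\E[\phi(X+y)])^2\geq\beta^2\E(X+y-\phi^{-1}(\E[\phi(X+y)]))^2\geq\beta^2\var{X}$. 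You instead use the symmetrization identity $\var{g(X)}=\tfrac12\E[(g(X)-g(X'))^2]$ with an independent copy $X'$, together with the pointwise bound $(g(X)-g(X'))^2\geq\beta^2(X-X')^2$ coming from the difference quotient being at least $\beta$. Both are valid; your route is arguably more elementary (no inverse function needed, no variational characterization of the mean) and, as you note, it only uses the integrated form $\frac{\phi(a)-\phi(b)}{a-b}\geq\beta$, so differentiability is never invoked --- though the paper's argument also effectively only uses the Lipschitz bound on $\phi^{-1}$, which likewise follows from the integrated form. The bookkeeping steps (total covariance, diagonalization, taking the expectation over $\y$ at the end) match the paper exactly.
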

\begin{proof} We first apply law of total covariance (e.g.~Lemma \ref{totcov}) to simplify the problem using the following lower bound based on the independence of $\x$ and $\y$,
\begin{align}
\bSio{\phi(\x+\y)}&\succeq \E_{\y}[\bSi[\phi(\x+\y)\bgl\y]]\\
&=\E_{\y}[\bSi_{\x}[\phi(\x+\y)]].
\end{align}
Now, focusing on the covariance $\bSi_{\x}[\phi(\x+\y)]$, fixing a realization of $\y$, and using the fact that $\x$ has i.i.d.~entries; $\phi(\x+\y)$ has independent entries as $\phi$ applies entry-wise. This implies that $\bSi_{\x}[\phi(\x+\y)]$ is a diagonal matrix. Consequently, its lowest eigenvalue is the minimum variance over all entries,
\[
\bSi_{\x}[\phi(\x+\y)]\succeq \min_{1\leq i\leq n} \var{\phi(\x_i+\y_i)}\Iden_n.
\]
Fortunately, Lemma \ref{scalar lem} provides the lower bound $\var{\phi(\x_i+\y_i)}\geq \beta^2\var{X}$. Since this lower bound holds for any fixed realization of $\y$, it still holds after taking expectation over $\y$; which concludes the proof.
\end{proof}
The next two lemmas are helper results for Lemma \ref{vec low} and are provided for the sake of completeness.
\begin{lemma}[Law of total covariance] \label{totcov}Let $\x,\y$ be two random vectors and assume $\y$ has finite covariance. Then
\[
\bSio{\y}=\E[\bSio{\y\bgl\x}]+\bSio{\E[\y\bgl\x]}.
\]
\end{lemma}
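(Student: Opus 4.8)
The plan is to expand both terms on the right-hand side using the definitions of covariance and conditional covariance, and then collapse them via the tower property (law of iterated expectations). First I would recall that for any random vector $\z$ one has $\bSio{\z}=\E[\z\z^T]-\E[\z]\E[\z]^T$, and that the conditional covariance is $\bSio{\y\bgl\x}=\E[\y\y^T\bgl\x]-\E[\y\bgl\x]\E[\y\bgl\x]^T$. Introducing the shorthand $\bmu(\x):=\E[\y\bgl\x]$ for the (random) conditional mean keeps the bookkeeping clean; note $\bmu(\x)$ is itself a random vector, being a deterministic function of $\x$.

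Next I would take the expectation over $\x$ of the conditional covariance and apply the tower property $\E[\E[\y\y^T\bgl\x]]=\E[\y\y^T]$ to the first term, which gives
\[
\E[\bSio{\y\bgl\x}]=\E[\y\y^T]-\E[\bmu(\x)\bmu(\x)^T].
\]
For the second right-hand term, I would treat $\bmu(\x)$ as an ordinary random vector and write
\[
\bSio{\E[\y\bgl\x]}=\E[\bmu(\x)\bmu(\x)^T]-\E[\bmu(\x)]\E[\bmu(\x)]^T.
\]
A second, separate use of the tower property, $\E[\bmu(\x)]=\E[\E[\y\bgl\x]]=\E[\y]$, converts the last term so that the displayed identity reads $\bSio{\E[\y\bgl\x]}=\E[\bmu(\x)\bmu(\x)^T]-\E[\y]\E[\y]^T$.

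Finally I would add the two expressions. The outer-product term $\E[\bmu(\x)\bmu(\x)^T]$ occurs with opposite signs and cancels, leaving $\E[\y\y^T]-\E[\y]\E[\y]^T=\bSio{\y}$, which is exactly the claim. There is no genuine obstacle here; the only point requiring care is the correct twofold application of iterated expectations and checking that it is the cross term $\E[\bmu(\x)\bmu(\x)^T]$ that cancels rather than the raw second-moment term $\E[\y\y^T]$. The finiteness hypothesis on the covariance of $\y$ guarantees all the second moments appearing above are well defined, so every manipulation is justified by the linearity of expectation and the bilinearity of the covariance operator.
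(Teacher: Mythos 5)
Your proposal is correct and uses essentially the same argument as the paper: expand covariance and conditional covariance by their definitions and apply the tower property twice, so that the $\E[\E[\y\bgl\x]\E[\y\bgl\x]^T]$ terms cancel. The only cosmetic difference is that you start from the right-hand side and sum, while the paper starts from $\bSio{\y}$ and decomposes it.
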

\begin{proof} First, write $\bSio{\y}=\E[\y\y^T]-\E[\y]\E[\y^T]$. Then, applying the law of total expectation to each term,
\[
\bSio{\y}=\E[\E[\y\y^T\bgl\x]]-\E[\E[\y\bgl\x]]\E[\E[\y^T\bgl\x]].
\]
Next, we can write the conditional expectation as $\E[\E[\y\y^T\bgl\x]]=\E[\bSio{\y\bgl\x}]+\E[\E[\y\bgl\x]\E[\y\bgl\x]]^T$. To conclude, we obtain the covariance of $\E[\y\bgl\x]$ via the difference,
\[
\E[\E[\y\bgl\x]\E[\y\bgl\x]]^T-\E[\E[\y\bgl\x]]\E[\E[\y^T\bgl\x]]=\bSio{\E[\y\bgl\x]},
\]
which yields the desired bound.
\end{proof}
\begin{lemma}[Scalar lower bound]\label{scalar lem} Suppose $\phi$ is a $\beta$-increasing function with $\beta>0$ as defined in Definition \ref{betain}. Given a random variable $X$ and a scalar $y$, we have
\[
\var{\phi(X+y)}\geq \beta^2\var{X}.
\]
\end{lemma}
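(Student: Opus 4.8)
The plan is to exploit the standard symmetrization identity for variance together with the one-sided Lipschitz lower bound that $\beta$-increasing functions enjoy. First I would recall that for any scalar random variable $Z$ with an independent copy $Z'$ (same distribution, independent of $Z$), one has the identity $\var{Z}=\tfrac{1}{2}\E[(Z-Z')^2]$. This rewrites the variance purely in terms of pairwise differences, which is exactly the object on which the monotonicity of $\phi$ acts cleanly.

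Next I would introduce an independent copy $X'$ of $X$ and apply this identity to $Z=\phi(X+y)$, obtaining
\[
\var{\phi(X+y)}=\tfrac{1}{2}\,\E\big[(\phi(X+y)-\phi(X'+y))^2\big].
\]
The key deterministic step is the increment bound coming from Definition \ref{betain}: since $\phi'(x)\geq \beta$ everywhere, for any reals $a\geq b$ we have $\phi(a)-\phi(b)=\int_b^a \phi'(t)\,\d t\geq \beta(a-b)$, and therefore $|\phi(a)-\phi(b)|\geq \beta|a-b|$ for all $a,b$. Applying this pointwise with $a=X+y$ and $b=X'+y$, the common shift $y$ cancels in the difference, so $(\phi(X+y)-\phi(X'+y))^2\geq \beta^2(X-X')^2$ almost surely.

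Finally I would take expectations of this inequality and use the same symmetrization identity in reverse, namely $\tfrac{1}{2}\E[(X-X')^2]=\var{X}$, to conclude
\[
\var{\phi(X+y)}\geq \tfrac{1}{2}\,\beta^2\,\E[(X-X')^2]=\beta^2\var{X}.
\]
I do not expect a genuine obstacle here: the only points that require care are verifying the variance–symmetrization identity (which holds whenever $X$ has finite second moment; if $\var{X}=\infty$ the bound is trivial) and confirming that the $\beta$-increasing hypothesis yields the two-sided increment bound $|\phi(a)-\phi(b)|\geq\beta|a-b|$ rather than merely monotonicity. Both are immediate from the definition, so the argument is short and self-contained.
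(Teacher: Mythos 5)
Your proof is correct, but it takes a genuinely different route from the paper. You use the symmetrization identity $\var{Z}=\tfrac{1}{2}\E[(Z-Z')^2]$ with an independent copy, combined with the pointwise increment bound $|\phi(a)-\phi(b)|\geq\beta|a-b|$; the shift $y$ then cancels inside the pairwise difference and the conclusion follows by taking expectations. The paper instead observes that a $\beta$-increasing $\phi$ is invertible with $\phi^{-1}$ being $1/\beta$-Lipschitz, writes $\var{\phi(X+y)}=\E\big(\phi(X+y)-\E[\phi(X+y)]\big)^2\geq\beta^2\,\E\big((X+y)-\phi^{-1}(\E[\phi(X+y)])\big)^2$, and then uses the variational fact that $\E[X+y]$ minimizes $\alpha\mapsto\E\big((X+y)-\alpha\big)^2$ to drop down to $\beta^2\var{X}$. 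Both arguments hinge on the same two-sided increment bound; yours avoids introducing $\phi^{-1}$ and the mean-as-minimizer characterization at the cost of doubling the probability space with an independent copy, which makes it slightly more elementary and self-contained, while the paper's version works directly with the definition of variance on the original space. Your parenthetical handling of the degenerate case $\var{X}=\infty$ is a bit glib (the bound is not vacuous there; rather, the same increment bound forces $\var{\phi(X+y)}=\infty$), but this is immaterial in the context where the lemma is applied.
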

\begin{proof} Since $\phi$ is $\beta$-increasing, it is invertible and $\phi^{-1}$ is strictly increasing. Additionally, $\phi^{-1}$ is $1/\beta$ Lipschitz since,
\[
|\phi(a)-\phi(b)|\geq \beta |a-b|\implies |a-b|\geq \beta |\phi^{-1}(a)-\phi^{-1}(b)|.
\]
Using this observation and the fact that $\E[X]$ minimizes $\E(X-\alpha)^2$ over $\alpha$, $\var{\phi(X+y)}$ can be lower bounded as follows
\begin{align*}
\var{\phi(X+y)}&= \E(\phi(X+y)-\E[\phi(X+y)])^2\\
&\geq \beta^2\E((X+y)-\phi^{-1}(\E[\phi(X+y)]))^2\\
&\geq \beta^2\E(X+y-\E[X+y])^2\\
&=\beta^2\E(X-\E X)^2=\beta^2\var{X}.
\end{align*}
Note that, the final line is the desired conclusion.
\end{proof}

\section{Truncating Stable Systems}\label{sec trunc}
One of the challenges in analyzing dynamical systems is the fact that samples from the same trajectory have temporal dependence. This section shows that, for stable systems, the impact of the past states decay exponentially fast and the system can be approximated by using the recent inputs only. We first define the truncation of the state vector.
\begin{definition} [Truncated state vector] \label{trst1} Suppose $\phi(0)=0$, initial condition $h_0=0$, and consider the state equation \eqref{main rel}. Given a timestamp $t$, $\TK$-truncation of the state vector $\h_{t}$ is denoted by $\hbr_{t,\TK}$ and is equal to $\qb_t$ where
\begin{align}
\qb_{\tau+1}=\phi(\A\qb_\tau+\B\ub'_\tau)\quad,\quad q_0=0
\end{align}
 is the state vector generated by the inputs $\ub'_\tau$ satisfying
\[
\ub'_\tau=\begin{cases}0~\text{if}~\tau< t-\TK\\\ub_\tau~\text{else}\end{cases}.
\]
\end{definition}
In words, $\TK$ truncated state vector $\hbr_{t,\TK}$ is obtained by unrolling $\h_{t}$ until time $t-\TK$ and setting the contribution of the state vector $\h_{t-\TK}$ to $0$. This way, $\hbr_{t,\TK}$ depends only on the variables $\{\ub_\tau\}_{\tau=t-\TK}^{t-1}$.

The following lemma states that impact of truncation can be made fairly small for stable systems ($\|\A\|<1$).
\begin{lemma}[Truncation impact -- deterministic] \label{trunc det}Consider the state vector $\h_t$ and its $\TK$-truncation $\hbr_{t,\TK}$ from Definition \ref{trst1}. Suppose $\phi$ is $1$-Lipschitz. We have that
\[
\tn{\h_t-\hbr_{t,\TK}}\leq \begin{cases}0~\text{if}~t\leq L\\\|\A\|^{\TK}\tn{\h_{t-\TK}}~\text{else}\end{cases}.
\]
\end{lemma}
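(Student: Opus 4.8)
The plan is to split along the two cases of the stated bound and, in the nontrivial case, exploit the fact that the full state sequence $\{\h_\tau\}$ and the truncated sequence $\{\qb_\tau\}$ (where $\hbr_{t,\TK}=\qb_t$ by Definition \ref{trst1}) obey \emph{identical} recursions once the truncation window opens. Consequently their discrepancy at time $t$ is due solely to a mismatch of the effective initial condition at time $t-\TK$, and I would simply propagate that mismatch forward using the single-step contraction already established in Lemma \ref{lip state}.

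First I would dispose of the case $t\leq L$. Since $\TK=L$, the condition $\tau<t-\TK$ that defines the zeroed inputs in Definition \ref{trst1} reads $\tau<t-L\leq 0$, which no index $\tau\geq 0$ can satisfy. Hence $\ub'_\tau=\ub_\tau$ for every $\tau$, the truncated recursion coincides with \eqref{main rel}, and therefore $\hbr_{t,\TK}=\h_t$, giving discrepancy $0$.

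For $t>L$, I would first show $\qb_{t-\TK}=0$. Because $\ub'_\tau=0$ for all $\tau<t-\TK$ and $\phi(0)=0$, a trivial induction starting from $\qb_0=0$ gives $\qb_\tau=0$ for every $\tau\leq t-\TK$; in particular $\qb_{t-\TK}=0$. From time $t-\TK$ onward both sequences use the true inputs $\ub'_\tau=\ub_\tau$, so for each $\tau\in\{t-\TK,\dots,t-1\}$,
\[
\tn{\h_{\tau+1}-\qb_{\tau+1}}=\tn{\phi(\A\h_\tau+\B\ub_\tau)-\phi(\A\qb_\tau+\B\ub_\tau)}\leq \|\A\|\,\tn{\h_\tau-\qb_\tau},
\]
using $1$-Lipschitzness of $\phi$ and that $\|\cdot\|$ is the spectral norm — this is exactly the per-step contraction used in Lemma \ref{lip state}. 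Iterating the inequality $\TK$ times down to the base level $t-\TK$ yields
\[
\tn{\h_t-\qb_t}\leq \|\A\|^{\TK}\tn{\h_{t-\TK}-\qb_{t-\TK}}=\|\A\|^{\TK}\tn{\h_{t-\TK}},
\]
where the last equality uses $\qb_{t-\TK}=0$. Since $\qb_t=\hbr_{t,\TK}$, this is precisely the advertised bound.

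The argument is essentially mechanical; the only point demanding care — and the closest thing to an obstacle — is the index bookkeeping, namely ensuring that the window $\tau<t-\TK$ and the threshold $t\leq L$ are handled consistently and that the discrepancy between the two sequences is genuinely localized to the initial condition at time $t-\TK$. Once that localization is confirmed, the common-input contraction applies uniformly across the whole window and delivers the geometric factor $\|\A\|^{\TK}$.
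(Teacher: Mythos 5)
Your proof is correct and follows essentially the same route as the paper's: the $t\leq L$ case is handled by noting that no input gets zeroed, and the $t>L$ case combines $\qb_{t-\TK}=0$ with the per-step contraction $\tn{\h_{\tau+1}-\qb_{\tau+1}}\leq\|\A\|\tn{\h_\tau-\qb_\tau}$ iterated over the window. The only (welcome) difference is that you make explicit the induction showing $\qb_\tau=0$ for $\tau\leq t-\TK$, which the paper asserts without comment.
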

\begin{proof} 
When $t\leq \TK$, Definition \ref{trst1} implies $\ub'_\tau=\ub_\tau$ hence $\h_t=\qb_t=\hbr_{t,\TK}$. When $t>\TK$, we again use Definition \ref{trst1} and recall that $\ub'_\tau=0$ until time $\tau=t-\TK-1$. For all $t-\TK< \tau \leq t$, using $1$-Lipschitzness of $\phi$, we have that
\begin{align}
\tn{\h_\tau-\qb_\tau}&=\tn{\phi(\A\h_{\tau-1}+\B\ub_{\tau-1})-\phi(\A\qb_{\tau-1}+\B\ub_{\tau-1})}\nn\\
&\leq\tn{(\A\h_{\tau-1}+\B\ub_{\tau-1})-(\A\qb_{\tau-1}+\B\ub_{\tau-1})}\nn\\
&\leq \tn{\A(\h_{\tau-1}-\qb_{\tau-1})}\leq \|\A\|\tn{\h_{\tau-1}-\qb_{\tau-1}}.\nn
\end{align}
Applying this recursion between $t-\TK< \tau \leq t$ and using the fact that $\qb_{t-\TK}=0$ implies the advertised result
\begin{align}
\tn{\h_t-\qb_t}&\leq \|\A\|^{\TK}\tn{\h_{t-\TK}-\qb_{t-\TK}}\nn\\
&\leq \|\A\|^{\TK}\tn{\h_{t-\TK}}.\nn
\end{align}
\end{proof}
\subsection{Near independence of sub-trajectories}
We will now argue that, for stable systems, a single trajectory can be split into multiple nearly independent trajectories. First, we describe how the sub-trajectories are constructed.
\begin{definition}[Sub-trajectory]\label{ltau_sample}Let sampling rate $L\geq 1$ and offset $1\leq \tao\leq L$ be two integers. Let $\NK=\NK_{\tao}$ be the largest integer obeying $(\NK-1)L+\tao\leq N$. We sample the trajectory $\{\h_t,\ub_t\}_{t=0}^N$ at the points $\tao,\tao+L,\dots,\tao+(\NK-1)L+\tao$ and define the $\tao$th sub-trajectory as
\begin{align}
(\hi{i},\uu{i}):=(\hi{i,\tao},\uu{i,\tao})=(\h_{(i-1)L+\tao},\ub_{(i-1)L+\tao}).\nn
\end{align}
\end{definition}

\begin{definition} [Truncated sub-trajectory] \label{trst}Consider the state equation \eqref{main rel} and recall Definition \ref{trst1}. Given offset $\tao$ and sampling rate $L$, for $1\leq i\leq \NK$, the $i$th truncated sub-trajectory states are $\{\hb{i}\}_{i=1}^{\NK}$ where the $i$th state is defined as
\[
\hb{i}=\hbr_{\TK(i-1)+\tao,\TK-1}.
\]
\end{definition}
The truncated samples are independent of each other as shown in the next lemma.
\begin{lemma}\label{lem all indep} Consider the truncated states of Definition \ref{trst}. If \eqref{main rel} is generated by independent vectors~$\{\ub_t\}_{t\geq 0}$, for any offset $\tao$ and sampling rate $L$, the vectors $\{\hb{i}\}_{i=1}^\NK,\{\uu{i}\}_{i=1}^\NK$ are all independent of each other.
\end{lemma}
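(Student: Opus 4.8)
The plan is to show that each vector in the collection $\{\hb{i}\}_{i=1}^\NK \cup \{\uu{i}\}_{i=1}^\NK$ is a deterministic (measurable) function of a distinct window of the underlying independent inputs $\{\ub_t\}_{t\geq 0}$, and that these windows are pairwise disjoint. Independence of the entire collection then follows immediately from the standard fact that functions of disjoint sub-families of independent random variables are mutually independent.

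First I would pin down exactly which inputs each vector sees. By Definition \ref{trst1}, the truncation $\hbr_{t,\TK}$ is generated by inputs $\ub'_\tau$ that vanish for $\tau<t-\TK$; since $\phi(0)=0$ and $\qb_0=0$, the recursion forces $\qb_\tau=0$ for all $\tau\leq t-\TK$, so $\hbr_{t,\TK}$ is a function of $\ub_{t-\TK},\dots,\ub_{t-1}$ only. Applying this with $\TK=L-1$ and $t=(i-1)L+\tao$ (Definition \ref{trst}), the vector $\hb{i}=\hbr_{(i-1)L+\tao,\,L-1}$ depends only on the inputs indexed by the block
\[
I_i=\{(i-2)L+\tao+1,\ \dots,\ (i-1)L+\tao-1\},
\]
a run of $L-1$ consecutive times. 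On the other hand, Definition \ref{ltau_sample} gives $\uu{i}=\ub_{(i-1)L+\tao}$, so $\uu{i}$ is the single input at index $J_i=(i-1)L+\tao$.

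The crux is then a disjointness check. The block $I_i$ ends at $(i-1)L+\tao-1$, the index $J_i=(i-1)L+\tao$ sits immediately after it, and the next block $I_{i+1}$ begins at $(i-1)L+\tao+1$. Hence the sets $I_1,J_1,I_2,J_2,\dots$ tile a run of consecutive integers into size-$L$ groups, each group split as $L-1$ input-times feeding $\hb{i}$ followed by exactly one input-time carried by $\uu{i}$; in particular all the $I_i$ and $J_i$ are pairwise disjoint. Since $\{\ub_t\}_{t\geq0}$ are independent and each $\hb{i}$ (resp.\ $\uu{i}$) is a measurable function of $\{\ub_\tau\}_{\tau\in I_i}$ (resp.\ of $\ub_{J_i}$), this standard independence fact yields the claim.

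The only real obstacle is the index bookkeeping, and it is worth emphasizing why it succeeds: truncating to depth $L-1$ rather than $L$ is precisely what leaves a one-index gap between consecutive state windows, and that gap is exactly the sampled input $\uu{i}$. A truncation depth of $L$ would make $I_i$ and $I_{i+1}$ abut and share the index carried by $\uu{i}$, destroying independence. Thus the whole verification reduces to confirming $\max I_i=J_i-1$ and $\min I_{i+1}=J_i+1$, which is the elementary computation above; the offset $\tao$ merely translates every index and plays no role in the disjointness.
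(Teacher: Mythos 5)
Your proof is correct and follows essentially the same route as the paper: identify the dependence window $\{(i-2)L+\tao+1,\dots,(i-1)L+\tao-1\}$ of each truncated state $\hb{i}$, note that these windows are pairwise disjoint and omit the sampled input times $(i-1)L+\tao$, and conclude by independence of functions of disjoint input blocks. Your extra bookkeeping (checking $\max I_i=J_i-1$ and $\min I_{i+1}=J_i+1$) just makes explicit what the paper states as ``timestamp $\tao$ modulo $L$ is not covered by the dependence ranges.''
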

\begin{proof} By construction $\hb{i}$ only depends on the vectors $\{\ub_\tau\}_{\tau=\TK(i-2)+\tao+1}^{\TK(i-1)+\tao-1}$. Note that the dependence ranges $[\TK(i-2)+\tao+1,\TK(i-1)+\tao-1]$ are disjoint intervals for different $i$'s; hence $(\hb{i})_{i=1}^\NK$ are independent of each other. To show the independence of $\uu{i}$ and $\hb{i}$; observe that inputs $\uu{i}=\ub_{\TK(i-1)+\tao}$ have timestamp $\tao$ modulo $\TK$; which is not covered by the dependence range of $(\hb{i})_{i=1}^\NK$.
\end{proof}

If the input is randomly generated, Lemma \ref{trunc det} can be combined with a probabilistic bound on $\h_t$, to show that truncated states $\hb{i}$ are fairly close to the actual states $\hi{i}$.
\begin{lemma}[Truncation impact -- random] \label{trun lem rand}Given offset $\tao$ and sampling rate $L$, consider the state vectors of the sub-trajectory $\{\hi{i}\}_{i=1}^\NK$ and $L-1$-truncations $(\hb{i})_{i=1}^\NK$. Suppose $\{\ub_t\}_{t\geq 0}\distas\Nn(0,\Iden_p)$, $\|\A\|<1$, $\h_0=0$, $\phi$ is $1$-Lipschitz, and $\phi(0)=0$. Also suppose upper bound \eqref{assupp bound} of Assumption \ref{ass well} holds for some $\theta\leq\sqrt{n},\gamma_+>0$. There exists an absolute constant $c>0$ such that with probability at least $1-2\NK\exp(-100n)$, for all $1\leq i\leq \NK$, the following bound holds
\[
\tn{\hi{i}-\hb{i}}\leq c\sqrt{n}\|\A\|^{\TK-1}\sqrt{\gamma_+}.
\]
In particular, we can always pick $\gamma_+=B_\infty^2$ (via Lemma \ref{upp bound}).
\end{lemma}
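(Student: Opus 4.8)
The plan is to reduce the random statement to the deterministic truncation bound of Lemma \ref{trunc det} and then control the resulting state-vector norm with a high-probability tail bound supplied by Assumption \ref{ass well}. Fix an index $i$ and recall from Definition \ref{trst} that $\hb{i}$ is the $(L-1)$-truncation of $\hi{i}=\h_{L(i-1)+\tao}$, i.e.~$\hb{i}=\hbr_{L(i-1)+\tao,\,L-1}$. Applying Lemma \ref{trunc det} with truncation length $L-1$ at time $t=L(i-1)+\tao$ gives
\[
\tn{\hi{i}-\hb{i}}\leq \|\A\|^{L-1}\tn{\h_{s_i}}\quad\text{with}\quad s_i=L(i-2)+\tao+1,
\]
while the difference is exactly $0$ whenever $t\leq L-1$ (which covers the degenerate small-$i$ cases, e.g.~$i=1$ with $\tao<L$). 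Thus the whole task reduces to a uniform high-probability upper bound on the single state-vector norms $\tn{\h_{s_i}}$.

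For each fixed time index $s$, I would bound $\tn{\h_s}$ using the upper bounds in \eqref{assupp bound}. Writing $\h_s=\E[\h_s]+(\h_s-\E[\h_s])$ and applying the triangle inequality, the mean term is controlled by $\tn{\E[\h_s]}\leq\theta\sqrt{\gamma_+}\leq\sqrt{n}\,\sqrt{\gamma_+}$. For the fluctuation term, the bound $\tsub{\h_s-\E[\h_s]}\leq C\sqrt{\gamma_+}$ means every one-dimensional marginal $\vb^T(\h_s-\E[\h_s])$ is $\order{\sqrt{\gamma_+}}$-subgaussian; a standard $\tfrac12$-net argument over the unit sphere in $\R^n$ (cardinality $\le 5^n$) together with a union bound then yields $\tn{\h_s-\E[\h_s]}\leq c\sqrt{n}\,\sqrt{\gamma_+}$ with probability at least $1-2\exp(-100n)$, after choosing the absolute constant $c$ large enough to absorb the $n\log 5$ term from the net cardinality. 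Combining the two pieces gives $\tn{\h_s}\leq c'\sqrt{n}\,\sqrt{\gamma_+}$ on this event, so that $\tn{\hi{i}-\hb{i}}\leq c'\sqrt{n}\,\|\A\|^{L-1}\sqrt{\gamma_+}$ for the fixed $i$. Alternatively, since one may always take $\gamma_+=B_\infty^2$, the third bullet of Lemma \ref{upp bound} directly supplies the $\ell_2$ bound $\tn{\h_s}\leq c\sqrt{n}\,B_\infty$ with the same $\exp(-100n)$ probability, bypassing the net argument.

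Finally I would take a union bound over $i=1,\dots,\NK$: each index contributes a failure probability at most $2\exp(-100n)$, so with probability at least $1-2\NK\exp(-100n)$ the bound $\tn{\hi{i}-\hb{i}}\leq c\sqrt{n}\,\|\A\|^{L-1}\sqrt{\gamma_+}$ holds simultaneously for all $i$, which is exactly the claim.

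I expect the only nontrivial step to be the concentration of $\tn{\h_s-\E[\h_s]}$ at the scale $\sqrt{n}\,\sqrt{\gamma_+}$ with the sharp $\exp(-100n)$ tail: one must verify that the subgaussian-norm hypothesis of Assumption \ref{ass well} genuinely upgrades to an $\ell_2$-norm deviation bound at the right scale and that the net-cardinality factor is swallowed by the exponent, so that the absolute constant $c$ is truly independent of $n$. The index bookkeeping (identifying $s_i$ and checking the degenerate $t\leq L-1$ cases) and the final union bound are routine.
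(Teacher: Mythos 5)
Your proposal is correct and follows essentially the same route as the paper: reduce to the deterministic bound of Lemma \ref{trunc det}, control $\tn{\h_{(i-2)L+\tao+1}}$ via the mean bound $\theta\sqrt{\gamma_+}\leq\sqrt{n\gamma_+}$ plus subgaussian concentration of the fluctuation at scale $\sqrt{n\gamma_+}$, and union bound over $i$. The only cosmetic difference is that you re-derive the net-argument concentration step that the paper packages as Lemma \ref{lemma sublen}.
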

\begin{proof} Using Assumption \ref{ass well}, we can apply Lemma \ref{lemma sublen} on vectors $\{\h_{(i-2)\TK+\tao+1}\}_{i=1}^{\NK}$. Using a union bound, with desired probability, all vectors obey
\[
\tn{\h_{(i-2)\TK+\tao+1}-\E[\h_{(i-2)\TK+\tao+1}]}\leq (c-1)\sqrt{n\gamma_+},
\]
for sufficiently large $c$. Since $\theta\leq\sqrt{n}$, triangle inequality implies $\tn{\h_{(i-2)\TK+\tao+1}}\leq c\sqrt{n\gamma_+}$. Now, applying Lemma \ref{trunc det}, for all $1\leq i\leq \NK$, we find
\begin{align*}
\tn{\hi{i}-\hb{i}}&=\tn{\h_{(i-1)\TK+\tao}-\hbr_{(i-1)\TK+\tao,\TK-1}}\\
&\leq \|\A\|^{\TK-1}\tn{\h_{(i-2)\TK+\tao+1}}\\
&\leq c\|\A\|^{\TK-1}\sqrt{n\gamma_+}.
\end{align*}
\end{proof}

\section{Properties of the data matrix}
This section utilizes the probabilistic estimates from Section \ref{sec fundamental} to provide bounds on the condition number of data matrices obtained from the RNN trajectory \eqref{main rel}. Following \eqref{reparam}, these matrices $\Hb,\Ub$ and $\X$ are defined as
\begin{align}
\Hb=[\h_1~\dots~\h_N]^T\dquad\Ub=\Hb=[\ub_1~\dots~\ub_N]^T\dquad \X=[\x_1~\dots~\x_N]^T.\label{matxxx}
\end{align}
The challenge is that, the state matrix $\Hb$ has dependent rows; which will be addressed by carefully splitting the trajectory $\{\ub_t,\h_t\}_{t=0}^N$ into multiple sub-trajectories which are internally weakly dependent as discussed in Section \ref{sec trunc}. We first define the matrices obtained from these sub-trajectories.
\begin{definition} \label{subsmat} Given sampling rate $L$ and offset $\tao$, consider the $L$-subsampled trajectory $\{\hh^{(i)},\ub^{(i)}\}_{i=1}^{\NK}$ as described in Definitions \ref{ltau_sample} and \ref{trst}. Define the matrices $\Hbb=\Hbb^{(\tao)}\in\R^{\NK\times n},~\Hc=\Hc^{(\tao)}\in\R^{\NK\times n},~\Uc=\Uc^{(\tao)}\in\R^{\NK\times p}$, and $\Xc=\Xc^{(\tao)}\in\R^{\NK\times (n+p)}$ as
\[
\Hbb=[\hb{1}~\dots~\hb{\NK}]^T,~\Hc=[\hh^{(1)}~\dots~\hh^{(\NK)}]^T,~\Uc=[\ub^{(1)}~\dots~\ub^{(\NK)}]^T,~\Xc=[\mu\Hc~\Uc].
\]
\end{definition}

\begin{lemma}[Handling perturbation]  \label{dmc thm}Consider the nonlinear state equation \eqref{main rel}. Given sampling rate $L>0$ and offset $\tao$, consider the matrices $\Hbb,\Hc,\Xc$ of Definition \ref{subsmat} and let $\Qb=[\gamma_+^{-1/2}\Hbb~\Uc]\in\R^{\NK\times (n+p)}$. Suppose Assumption \ref{ass well} holds, $\phi$ is $\beta$-increasing, and $\ub_t\distas\Nn(0,\Iden_p)$. There exists an absolute constant $C>0$ such that if $\NK\geq C\frac{\gamma_+^2}{\gamma_-^2}(n+p)$, with probability $1-8\exp(-c\frac{\gamma_-^2}{\gamma_+^2}\NK)$, for all matrices $\M$ obeying $\|\M-\Hbb\|\leq {\frac{\sqrt{\gamma_-\NK}}{10}}$, the perturbed $\Qb$ matrices given by,
\begin{align}
\Qbt=[\gamma_+^{-1/2}\M~\Uc],\label{q mat}
\end{align}
satisfy
\begin{align}
(\Theta+\sqrt{2})^2\succeq\frac{\Qbt^T\Qbt}{\NK}\succeq \frac{\gamma_-}{2\gamma_+}.\label{dmc eq}
\end{align}
\end{lemma}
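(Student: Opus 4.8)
The plan is to prove the eigenvalue bounds first for the unperturbed matrix $\Qb=[\gamma_+^{-1/2}\Hbb~\Uc]$ on a single high-probability event, and then transfer them to every admissible $\Qbt$ by a deterministic singular-value perturbation argument. The point is that $\Qbt-\Qb=[\gamma_+^{-1/2}(\M-\Hbb)~0]$, so the quantifier ``for all $\M$'' never enters the probabilistic part and no union bound over $\M$ is needed.

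First I would record the statistics of a single row of $\Qb$. By Definition \ref{trst}, $\hb{i}$ is the $(L-1)$-truncation obtained by running the recursion \eqref{main rel} for $L-1$ steps from a zeroed state with fresh inputs (the zero inputs before the truncation window keep the state at $0$ since $\phi(0)=0$), so $\hb{i}$ has the same law as $\h_{L-1}$. Hence Assumption \ref{ass well} gives $\gamma_-\Iden_n\preceq\bSio{\hb{i}}\preceq\gamma_+\Iden_n$, $\tsub{\hb{i}-\E\hb{i}}\leq C\sqrt{\gamma_+}$, and $\tn{\E\hb{i}}\leq\theta\sqrt{\gamma_+}$. By Lemma \ref{lem all indep} the vectors $\{\hb{i},\uu{i}\}$ are mutually independent, so the rows $\qb_i=[\gamma_+^{-1/2}(\hb{i})^T~(\uu{i})^T]^T$ of $\Qb$ are i.i.d.; writing $\qb_i=\bmu+\tilde\qb_i$ with $\bmu=\E\qb_i$, the centered rows $\tilde\qb_i$ are mean-zero with subgaussian norm $O(1)$ and, since the two blocks are independent, have block-diagonal covariance $\bSi$ obeying $\tfrac{\gamma_-}{\gamma_+}\Iden_{n+p}\preceq\bSi\preceq\Iden_{n+p}$, while $\tn\bmu\leq\theta$.

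Next I would apply a standard concentration bound for empirical covariances of i.i.d. subgaussian vectors (e.g. \cite{vershynin2010introduction}) to $\tfrac1\NK\sum_i\tilde\qb_i\tilde\qb_i^T$. Because the centered subgaussian norm is $O(1)$, the hypothesis $\NK\geq C\tfrac{\gamma_+^2}{\gamma_-^2}(n+p)$ makes the deviation at most $\tfrac{\gamma_-}{4\gamma_+}$ with probability $1-2\exp(-c\tfrac{\gamma_-^2}{\gamma_+^2}\NK)$, upon taking the tail parameter $\sim\tfrac{\gamma_-}{\gamma_+}\sqrt\NK$; on the same event $\tn{\bar\qb}^2\lesssim\tfrac{n+p}{\NK}\leq\tfrac1C\tfrac{\gamma_-^2}{\gamma_+^2}$ for $\bar\qb=\tfrac1\NK\sum_i\tilde\qb_i$. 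The delicate point, and the step I expect to be the main obstacle, is restoring the mean $\bmu$, whose norm can be as large as $\theta\leq 3\sqrt n$: a naive split leaves a cross term of size $\theta\tn{\bar\qb}$, too large for the lower bound. I would instead complete the square,
\[
\tfrac1\NK\sum_i\qb_i\qb_i^T=\tfrac1\NK\sum_i\tilde\qb_i\tilde\qb_i^T+(\bmu+\bar\qb)(\bmu+\bar\qb)^T-\bar\qb\bar\qb^T,
\]
so the mean contributes a genuinely positive semidefinite rank-one term and the only negative piece is $-\bar\qb\bar\qb^T\succeq-\tn{\bar\qb}^2\Iden$. Dropping $(\bmu+\bar\qb)(\bmu+\bar\qb)^T$ yields $\tfrac{\Qb^T\Qb}{\NK}\succeq(\tfrac{\gamma_-}{\gamma_+}-\tfrac{\gamma_-}{4\gamma_+}-\tn{\bar\qb}^2)\Iden\succeq\tfrac{3\gamma_-}{4\gamma_+}\Iden$, free of any $\theta$ factor, while $\tn{\bmu+\bar\qb}\leq\theta+\tn{\bar\qb}$ gives $\tfrac{\Qb^T\Qb}{\NK}\preceq(1+\tfrac{\gamma_-}{4\gamma_+}+(\theta+\tn{\bar\qb})^2)\Iden$, which sits strictly inside $(\theta+\sqrt2)^2\Iden$ because the linear term $2\theta\tn{\bar\qb}$ is dominated by the $2\sqrt2\,\theta$ built into the target and the residual constant stays below $2$.

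Finally I would discharge the perturbation deterministically. On the above event $s_{\min}(\Qb)/\sqrt\NK\geq\sqrt{3\gamma_-/(4\gamma_+)}$ and $s_{\max}(\Qb)/\sqrt\NK$ lies strictly below $\theta+\sqrt2$. For any admissible $\M$, $\|\Qbt-\Qb\|=\gamma_+^{-1/2}\|\M-\Hbb\|\leq\tfrac1{10}\sqrt{\gamma_-/\gamma_+}\,\sqrt\NK$, so Weyl's inequality for singular values gives $|s_k(\Qbt)-s_k(\Qb)|\leq\tfrac1{10}\sqrt{\gamma_-/\gamma_+}\,\sqrt\NK$ for every $k$. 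Squaring the resulting bounds on $s_{\min}(\Qbt)$ and $s_{\max}(\Qbt)$ and dividing by $\NK$ produces \eqref{dmc eq}: the lower eigenvalue exceeds $(\tfrac{\sqrt3}{2}-\tfrac1{10})^2\tfrac{\gamma_-}{\gamma_+}>\tfrac{\gamma_-}{2\gamma_+}$ (using $\sqrt{\gamma_-/\gamma_+}\leq1$), and the upper eigenvalue remains $\leq(\theta+\sqrt2)^2$ since the $\tfrac1{10}$ margin was reserved in the $\Qb$-bound. The only constant-chasing is in this last transfer, but the $\sqrt2$ slack in the target and the $\tfrac1{10}$ perturbation radius leave comfortable room throughout.
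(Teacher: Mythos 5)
Your proposal is correct in substance and follows the same two-stage architecture as the paper's proof: establish the spectral bounds for the unperturbed matrix $\Qb$ on a single high-probability event, then transfer them to every admissible $\Qbt$ by a deterministic singular-value perturbation (the paper uses exactly the triangle inequality $\lmn{\Qb}-\gamma_+^{-1/2}\|\M-\Hbb\|\leq \lmn{\Qbt}\leq \|\Qbt\|\leq\|\Qb\|+\gamma_+^{-1/2}\|\Eb\|$, which is your Weyl step, with the same $1/10$ margin absorbed by the slack between $\sqrt{3/2}$ and $\sqrt{2}$). Where you genuinely diverge is in how the nonzero mean $\bmu$ with $\tn{\bmu}\leq\theta\leq 3\sqrt{n}$ is prevented from destroying the lower bound. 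The paper delegates this to Theorem \ref{H1 bound}, whose proof works in \emph{sample space}: it projects the data matrix onto the orthogonal complement of the all-ones vector via $\Pb=\Iden-\frac{1}{\NK}\onebb\onebb^T$, kills the (identical) row means exactly since $\Pb\Eb=0$, and then controls the residual $\frac{1}{\NK}\tn{\Aba^T\onebb}$. Your complete-the-square identity works instead in \emph{parameter space}, isolating the mean into a positive-semidefinite rank-one term $(\bmu+\bar\qb)(\bmu+\bar\qb)^T$ that can simply be dropped for the lower bound, leaving only the $-\bar\qb\bar\qb^T$ correction of size $\order{(n+p)/\NK}$. The two devices are mathematically equivalent in effect, but yours is more elementary and self-contained given a black-box covariance concentration bound for centered rows; the paper's version is packaged so that Theorem \ref{H1 bound} can be reused elsewhere (e.g.\ in Theorem \ref{unstab sys}). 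Your bookkeeping of constants in both the lower bound ($(\tfrac{\sqrt{3}}{2}-\tfrac{1}{10})^2>\tfrac12$) and the upper bound checks out.

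One small but real oversight: the rows of $\Qb$ are independent but \emph{not} identically distributed. For $i>1$ the truncated state $\hb{i}$ is indeed built from exactly $L-1$ fresh inputs starting from a zeroed state and so has the law of $\h_{L-1}$, but $\hb{1}=\hbr_{\tao,L-1}$ with $\tao\leq L$ reduces to $\h_{\tao}$ whenever $\tao\leq L-1$, and Assumption \ref{ass well} only guarantees the covariance \emph{lower} bound for $\h_{L-1}$, not for earlier times. This is why the paper applies Theorem \ref{H1 bound} to the submatrix $\bar{\Qb}$ of rows $2,\dots,\NK$ for the lower bound, bounds $\lmn{\Qb}\geq\lmn{\bar{\Qb}}$, and only uses the full matrix (via Corollary \ref{H1 corr}) for the upper bound. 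Your argument needs the same one-row surgery—dropping the first row costs only a factor $(\NK-1)/\NK$ in the lower bound, which your constants easily absorb—but as written the claim that the $\qb_i$ are i.i.d.\ with $\bSio{\qb_i}\succeq\frac{\gamma_-}{\gamma_+}\Iden_{n+p}$ for all $i$ is not justified by the assumption.
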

\begin{proof} This result is a direct application of Theorem \ref{H1 bound} after determining minimum/maximum eigenvalues of population covariance. The cross covariance obeys $\E[\Hbb^T\Uc]=0$ due to independence. Also, for $i>1$, the truncated state vector $\hb{i}$ is statistically identical to $\h_{L-1}$ hence $\bSio{\hb{i}}\succeq \gamma_-\Iden_n$. Consequently, $\bSio{\uu{i}}=\Iden_p$, $\frac{1}{\gamma_+}\bSio{\hb{i}}\preceq\Iden_n$ for all $i$ and $\frac{\gamma_-}{\gamma_+}\Iden_n\preceq \frac{1}{\gamma_+}\bSio{\hb{i}}$ for all $i>1$. Hence, setting $\qb_i=\begin{bmatrix}\frac{1}{\sqrt{\gamma_+}}{\hb{i}}\\{\uu{i}}\end{bmatrix}$, for all $i>1$
\[
\frac{\gamma_-}{\gamma_+}\Iden_n\preceq\bSio{\qb_i}\preceq \Iden_n.
\]
Set the matrix $\bar{\Qb}=[\qb_2~\dots~\qb_{\NK}]^T$ and note that $\Qb=[\qb_1~\bar{\Qb}^T]^T$. Applying Theorem \ref{H1 bound} on $\bar{\Qb}$ and Corollary \ref{H1 corr} on $\Qb$, we find that, with the desired probability,
\[
\theta+\sqrt{3/2}\geq \frac{1}{\sqrt{\NK}}\|{\Qb}\|\geq  \frac{1}{\sqrt{\NK}}\lmn{\Qb}\geq  \frac{1}{\sqrt{\NK}}\lmn{\bar{\Qb}}\geq\sqrt{\frac{N-1}{N}}\sqrt{\frac{2\gamma_-}{3\gamma_+}}\geq 0.99\times \sqrt{\frac{2\gamma_-}{3\gamma_+}}.
\]
Setting $\Eb=\M-\Hbb$ and observing $\Qbt=\Qb+[\gamma_+^{-1/2}\Eb~0]$, the impact of the perturbation $\Eb$ can be bounded naively via $\lmn{\Qb}-\gamma_+^{-1/2}\|\Eb\|\leq \lmn{\Qbt}\leq \|\Qbt\|\leq \|\Qb\|+\gamma_+^{-1/2}\|\Eb\|$. Using the assumed bound on $\|\Eb\|$, this yields
\[
\theta+\sqrt{2}\geq \frac{1}{\sqrt{\NK}}\|\tilde{\Qb}\|\geq  \frac{1}{\sqrt{\NK}}\lmn{\tilde{\Qb}}\geq \sqrt{\frac{\gamma_-}{2\gamma_+}}.
\]
This final inequality is identical to the desired bound \eqref{dmc eq}.
\end{proof}

\begin{theorem}[Data matrix condition]\label{main cond thm} Consider the nonlinear state-equation \eqref{main rel}. Given $\gamma_+\geq \gamma_->0$, define the condition number $\rho=\frac{\gamma_+}{\gamma_-}$. For some absolute constants $c,C>0$, pick a trajectory length $N$ where 
\[
L=\lceil 1-\frac{\log{{(cn\rho)}{}}}{\log \|\A\|}\rceil\quad,\quad N_0=\lfloor \frac{N}{L}\rfloor\geq C\rho^2(n+p),
\]
and pick scaling $\mu=\frac{1}{\sqrt{\gamma_+}}$. Suppose $\|\A\|<1$, $\phi$ is $\beta$-increasing, $\ub_t\distas\Nn(0,\Iden_p)$, and Assumption \ref{ass well} holds with $\gamma_+,\gamma_-,\theta,L$. Matrix $\X=[\x_1~\dots~\x_N]^T$ of \eqref{matxxx} satisfies the following with probability $1-4N\exp(-100n)-8L\exp(-\order{N_0/\rho^2})$.
\begin{itemize}
\item Each row of $\X$ has $\ell_2$ norm at most $c_0\sqrt{p+n}$ where $c_0$ is an absolute constant.
\item $\X^T\X$ obeys the bound 
\begin{align}
(\Theta+\sqrt{2})^2\Iden_{n+p}\succeq \frac{\X^T\X}{N}\succeq \rho^{-1}\Iden_{n+p}/2.\label{upplow}
\end{align}
\end{itemize}
\end{theorem}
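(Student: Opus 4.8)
The plan is to break the temporal dependence among the rows of $\X$ by decomposing the trajectory into $L$ interleaved sub-trajectories and then recombining them with Lemma \ref{lem merge}. For each offset $\tao\in\{1,\dots,L\}$ I would form the sub-sampled matrix $\Xc^{(\tao)}=[\mu\Hc^{(\tao)}~\Uc^{(\tao)}]$ of Definition \ref{subsmat}, whose rows are exactly the samples $\x_{(i-1)L+\tao}$. Since the map $t\mapsto(i,\tao)$ determined by $t=(i-1)L+\tao$ with $1\leq\tao\leq L$ is a bijection on $\{1,\dots,N\}$, the family $\{\Xc^{(\tao)}\}_{\tao=1}^L$ is merely a row-rearrangement of $\X$, so that $\X^T\X=\sum_{\tao=1}^L(\Xc^{(\tao)})^T\Xc^{(\tao)}$. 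Consequently, once I establish the two-sided bound \eqref{dmc eq} for each individual $\Xc^{(\tao)}$ with the sub-count $\NK\geq N_0\geq C\rho^2(n+p)$, Lemma \ref{lem merge} (applied with the merge-level constants $(\Theta+\sqrt2)^2$ and $\rho^{-1}/2$) immediately upgrades it to the advertised bound \eqref{upplow} for the full matrix.

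\textbf{Per-sub-trajectory bound.} For a fixed offset I would invoke the perturbation lemma \ref{dmc thm} with $\M$ set to the \emph{true} state matrix $\Hc^{(\tao)}$. The point is that the truncated matrix $\Hbb^{(\tao)}$ has independent rows (Lemma \ref{lem all indep}), which is precisely what lets Lemma \ref{dmc thm} certify \eqref{dmc eq} for every $\M$ lying within spectral radius $\sqrt{\gamma_-\NK}/10$ of $\Hbb^{(\tao)}$; the remaining job is to check that $\Hc^{(\tao)}$ sits inside that ball. By Lemma \ref{trun lem rand} each row obeys $\tn{\hi{i}-\hb{i}}\leq c\sqrt{n}\|\A\|^{\TK-1}\sqrt{\gamma_+}$, hence
\[
\|\Hc^{(\tao)}-\Hbb^{(\tao)}\|\leq\tf{\Hc^{(\tao)}-\Hbb^{(\tao)}}\leq c\sqrt{n\NK}\,\|\A\|^{\TK-1}\sqrt{\gamma_+}.
\]
Demanding this be at most $\sqrt{\gamma_-\NK}/10$ reduces to $\|\A\|^{\TK-1}\lesssim 1/\sqrt{n\rho}$, which is exactly what the choice $L=\lceil 1-\log(cn\rho)/\log\|\A\|\rceil$ secures after taking logarithms and flipping the inequality (using $\log\|\A\|<0$). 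With the perturbation controlled, Lemma \ref{dmc thm} delivers $(\Theta+\sqrt2)^2\Iden\succeq(\Xc^{(\tao)})^T\Xc^{(\tao)}/\NK\succeq\rho^{-1}\Iden/2$, since $\mu=\gamma_+^{-1/2}$ makes $\Xc^{(\tao)}$ coincide with the $\Qbt$ of \eqref{q mat}.

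\textbf{Row norms and probability accounting.} For the $\ell_2$-norm claim I would split $\tn{\x_t}^2=\gamma_+^{-1}\tn{\h_t}^2+\tn{\ub_t}^2$. Using Assumption \ref{ass well} (subgaussian norm $\leq C\sqrt{\gamma_+}$, mean norm $\leq\theta\sqrt{\gamma_+}$ with $\theta\leq 3\sqrt n$) together with a concentration bound for the norm of a subgaussian vector (Lemma \ref{lemma sublen}) gives $\tn{\h_t}\lesssim\sqrt{n\gamma_+}$, so $\gamma_+^{-1}\tn{\h_t}^2\lesssim n$; standard $\chi^2$ concentration gives $\tn{\ub_t}^2\lesssim p$; together $\tn{\x_t}\leq c_0\sqrt{n+p}$. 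Finally I collect the failure events: the row-norm bounds over all $N$ rows cost $\order{N\exp(-100n)}$, the truncation event of Lemma \ref{trun lem rand} summed over the $L$ offsets costs $\order{N\exp(-100n)}$ (as $L\NK\leq N$), and each of the $L$ invocations of Lemma \ref{dmc thm} costs $8\exp(-c\rho^{-2}\NK)$, giving $8L\exp(-\order{N_0/\rho^2})$ in total, which matches the stated probability.

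\textbf{Main obstacle.} The conceptual crux is the interplay in the second step: one must pick $L$ large enough that $\|\A\|^{\TK-1}$ annihilates the $\sqrt{n\rho}$ loss from the truncation estimate, yet keep $L$ only logarithmic in $n\rho$ so that $N_0=\lfloor N/L\rfloor$ still clears the $C\rho^2(n+p)$ threshold required by Lemma \ref{dmc thm}. This balancing of the truncation radius against the sample budget is where the real work lies; the tiling bijection, the constant propagation through the merge, and the union bound are then routine bookkeeping.
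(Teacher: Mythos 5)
Your proposal is correct and follows essentially the same route as the paper's proof: split the trajectory into $L$ offset sub-trajectories, apply the perturbation result of Lemma \ref{dmc thm} around the independent-row truncated matrices, control $\|\Hc^{(\tao)}-\Hbb^{(\tao)}\|$ via Lemma \ref{trun lem rand} and the logarithmic choice of $L$, and recombine with Lemma \ref{lem merge}; the row-norm bound and union-bound accounting also match (the paper packages the former as Lemma \ref{l2normbound}). The only cosmetic difference is that you pass through the Frobenius norm where the paper bounds the spectral norm directly by $\sqrt{2N_0}$ times the maximum row perturbation, which is the same estimate.
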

\begin{proof} The first statement on $\ell_2$-norm bound can be concluded from Lemma \ref{l2normbound} and holds with probability $1-2N\exp(-100(n+p))$. To show the second statement, for a fixed offset $1\leq \tao\leq L$, consider Definition \ref{subsmat} and the matrices $\Hc^{(\tao)},\Uc^{(\tao)},\Xc^{(\tao)}$. Observe that $\X$ is obtained by merging multiple sub-trajectory matrices $\{\Xc^{(\tao)}\}_{\tao=1}^L$. We will first show the advertised bound for an individual $\Xc^{(\tao)}$ by applying Lemma \ref{dmc thm} and then apply Lemma \ref{lem merge} to obtain the bound on the combined matrix $\X$.

Recall that $\NK_{\tao}$ is the length of the $\tao$th sub-trajectory i.e.~number of rows of $\Xc^{(\tao)}$. By construction $2N_0\geq \NK_{\tao}\geq N_0$ for all $1\leq \tao\leq L$. Given $1\leq \tao\leq L$ and triple $\Hbb^{(\tao)},\Hc^{(\tao)},\Uc^{(\tao)}$, set $\Qb=[\mu\Hbb^{(\tao)}~\Uc^{(\tao)}]$. Since $N_0$ is chosen to be large enough, applying Theorem \ref{dmc thm} with $\mu=1/\sqrt{\gamma_+}$ choice, and noting $\rho=\gamma_+/\gamma_-$, we find that, with probability $1-4\exp(-c_1N_0/\rho^2)$, all matrices $\M$ satisfying $\|\M-\Hbb^{(\tao)}\|\leq \sqrt{\gamma_-N_0}/10$ and $\Qbt$ as in \eqref{q mat} obeys
\begin{align}
(\Theta+\sqrt{2})^2\succeq \frac{\Qbt^T\Qbt}{N}\succeq \rho^{-1}/2.\label{qbt indiv}
\end{align}
Let us call this Event 1. To proceed, we will argue that with high probability $\|\Hc^{(\tao)}-\Hbb^{(\tao)}\|$ is small so that the bound above is applicable with $\M=\Hc^{(\tao)}$ choice; which sets $\Qbt=\Xc^{(\tao)}$ in \eqref{qbt indiv}. Applying Lemma \ref{trun lem rand}, we find that, with probability $1-2\NK_{\tao}\exp(-100n)$,
\[
\|\Hbb^{(\tao)}-\Hc^{(\tao)}\|\leq \sqrt{2N_0}\max\{\tn{\hi{i}-\hb{i}}\}\leq c_0\sqrt{2N_0}\sqrt{n\gamma_+}\|\A\|^{L-1}.
\]
Let us call this Event 2. We will show that our choice of $L$  ensures right hand side is small enough and guarantees $\|\Hbb^{(\tao)}-\Hc^{(\tao)}\|\leq \sqrt{\gamma_-N_0}/10$. Set $c=\max\{200c_0^2,1\}$. Desired claim follows by taking logarithms of upper/lower bounds and cancelling out $\sqrt{N_0}$ terms as follows
\begin{align}
 c_0\sqrt{n}\|\A\|^{L-1}\sqrt{\gamma_+}\leq \sqrt{\gamma_-}/10\sqrt{2}&\iff (L-1)\log \|\A\|+\log \sqrt{cn\rho}\leq 0\\
 &\iff -\frac{\log cn\rho}{2\log \|\A\|} \leq L-1\\
 &\impliedby L=\lceil1-\frac{\log{{(cn\rho )}{}}}{\log \|\A\|}\rceil.
\end{align}
Here we use the fact that $\log \|\A\|<0$ since $\|\A\|<1$ and $cn\rho\geq 0$. Consequently, both Event 1 and Event 2 hold with probability $1-4\exp(-c_1N_0/\rho^2)-2\NK_{\tao}\exp(-100n)$, implying \eqref{qbt indiv} holds with $\Qbt=\Xc^{(\tao)}$. Union bounding this over $1\leq\tao\leq L$, \eqref{qbt indiv} uniformly holds with $\Qbt=\Xc^{(\tao)}$ and all rows of $\X$ are $\ell_2$-bounded with probability $1-4N\exp(-100n)-8L\exp(-c_1N_0/\rho^2)$. Applying Lemma \ref{lem merge} on $(\Xc^{(\tao)})_{\tao=1}^L$, we conclude with the bound \eqref{upplow} on the merged matrix $\X$.
\end{proof}
\begin{lemma} [$\ell_2$-bound on rows]\label{l2normbound}Consider the setup of Theorem \ref{main cond thm}. With probability $1-2N\exp(-100(n+p))$, each row of $\X$ has $\ell_2$-norm at most $c\sqrt{p+n}$ for some constant $c>0$.
\end{lemma}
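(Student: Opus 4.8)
The plan is to bound each row $\x_t = [\mu\h_t^T~\ub_t^T]^T$ of $\X$ individually and then union bound over $t=1,\dots,N$. Since the scaling is $\mu = 1/\sqrt{\gamma_+}$, the squared row norm splits cleanly as $\tn{\x_t}^2 = \gamma_+^{-1}\tn{\h_t}^2 + \tn{\ub_t}^2$, so it suffices to show that each of the two summands is $\order{n+p}$ with probability at least $1-\exp(-100(n+p))$. The factor of $N$ and the factor of $2$ in the advertised probability will then come from the union over rows and over the two per-row failure events.

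For the input term, $\ub_t\sim\Nn(0,\Iden_p)$ makes $\tn{\ub_t}^2$ a chi-squared variable with $p$ degrees of freedom, and a standard Laurent--Massart (equivalently, Gaussian Lipschitz) tail bound gives $\tn{\ub_t}^2 \leq c_1(n+p)$ with probability $1-\exp(-100(n+p))$; the cross term is absorbed using $\sqrt{p(n+p)}\leq n+p$. For the state term I would first peel off the mean via $\tn{\h_t} \leq \tn{\E[\h_t]} + \tn{\h_t-\E[\h_t]}$. The deterministic part is controlled purely by Assumption \ref{ass well}, which gives $\tn{\E[\h_t]}\leq \theta\sqrt{\gamma_+}\leq 3\sqrt{n}\sqrt{\gamma_+}$, hence $\gamma_+^{-1}\tn{\E[\h_t]}^2 \leq 9n$ at no probabilistic cost. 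This is precisely where the hypothesis $\theta\leq 3\sqrt{n}$ of Assumption \ref{ass well} is needed; without a bound on the mean the row norm could be uncontrolled.

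For the fluctuation $\z = \h_t-\E[\h_t]$, Assumption \ref{ass well} supplies the marginal subgaussian bound $\tsub{\z}\leq C\sqrt{\gamma_+}$, and I would upgrade this to a bound on the full Euclidean norm through a standard $1/2$-net $\Nc$ of the sphere $S^{n-1}$ with $|\Nc|\leq 5^n$, writing $\tn{\z}\leq 2\max_{\vb\in\Nc}\vb^T\z$, applying the one-dimensional subgaussian tail to each $\vb^T\z$, and union bounding over $\Nc$. Choosing the threshold proportional to $\sqrt{\gamma_+}\sqrt{n+p}$ yields $\tn{\z}\leq c\sqrt{\gamma_+}\sqrt{n+p}$, so $\gamma_+^{-1}\tn{\z}^2\leq c^2(n+p)$, with probability $1-\exp(-100(n+p))$; this is exactly the subgaussian vector concentration already invoked in the proof of Lemma \ref{trun lem rand}.

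Combining through $\tn{\x_t}^2 \leq 2\gamma_+^{-1}(\tn{\E[\h_t]}^2+\tn{\z}^2)+\tn{\ub_t}^2 \leq c_0^2(n+p)$ gives the per-row bound on an event of probability $1-2\exp(-100(n+p))$, and a union bound over the $N$ rows produces the claimed $1-2N\exp(-100(n+p))$. The only genuinely delicate point is the probability bookkeeping in the net step: the threshold constant must be taken large enough that the cardinality factor $5^n = \exp(n\log 5)$ is dominated by the tail $\exp(-\order{n+p})$, leaving the clean exponent $100(n+p)$; everything else reduces to routine scalar concentration.
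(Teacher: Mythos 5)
Your proof is correct and takes essentially the same route as the paper: the paper simply applies its Lemma \ref{lemma sublen} (the same net-based subgaussian norm bound you re-derive) to the concatenated centered vector $\x_t-\E[\x_t]$, whose subgaussian norm is $\order{1}$ after the $\mu=1/\sqrt{\gamma_+}$ scaling, and then adds back the mean via $\tn{\E[\x_t]}\leq\theta\leq 3\sqrt{n}$ exactly as you do. Your only deviation --- splitting the row into its state and input blocks and handling the Gaussian block by chi-squared concentration --- is a cosmetic repackaging of the same estimate and the same probability bookkeeping.
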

\begin{proof} The $t$th row of $\X$ is equal to $\x_t=[\frac{\h_t^T}{\sqrt{\gamma_+}}~{\ub_t^T}{}]^T$. Since $\tsub{\h_t-\E[\h_t]}\leq \order{\sqrt{\gamma_+}}$ and $\tsub{\ub_t}\leq \order{1}$, we have that $\tsub{\x_t-\E[\x_t]}\leq \order{1}$. Now, applying Lemma \ref{lemma sublen} on all rows $\{\x_t\}_{t=1}^N$, and using a union bound, with probability at least $1-2N\exp(-100(n+p))$, we have that $\tn{\x_t-\E[\x_t]}\leq c\sqrt{n+p}$ for all $t$. To conclude, note that $\tn{\E[\x_t]}=\tn{\E[\h_t]}/\sqrt{\gamma_+}\leq\theta\leq3\sqrt{n}$ via Assumption \ref{ass well}.
\end{proof}

\section{Proofs of Main Results}
\subsection{Proof of Lemma \ref{stcov}}
The statement follows from upper bound Lemma \ref{upp bound} and lower bound Lemma \ref{lwbnd1}.
\subsection{Proof of Theorem \ref{MAIN}}
\begin{proof} To prove this theorem, we combine Theorem \ref{main cond thm} with deterministic SGD convergence result of Theorem \ref{det conv}. Applying Theorem \ref{main cond thm}, with the desired probability, inequality \eqref{upplow} holds and for all $i$, input data satisfies the bound $\tn{\x_i}\leq \sqrt{(n+p)/(2c_0)}$ for a sufficiently small constant $c_0>0$. As the next step, we will argue that these two events imply the convergence of SGD.

Let $\bteta^{(i)},\cb^{(i)}\in\R^{n+p}$ denote the $i$th rows of $\bTeta,\Cb$ respectively. Observe that the square-loss is separable along the rows of $\Cb$ via $\tf{\bTeta-\Cb}^2=\sum_{i=1}^n\tn{\bteta^{(i)}-\cb^{(i)}}^2$. Hence, SGD updates each row $\cb^{(i)}$ via its own state equation
\[
\y_{t,i}=\phi(\li\cb^{(i)},\x_t\ri),
\]
where $\y_{t,i}$ is the $i$th entry of $\y_t$. Consequently, we can establish the convergence result for an individual row of $\Cb$. Convergence of all individual rows will imply the convergence of the overall matrix $\bTeta_{\tau}$ to the ground truth $\Cb$. Pick a row index $i$ ($1\leq i\leq n$), set $\cb=\cb^{(i)}$ and denote $i$th row of $\bTeta_\tau$ by $\bteta_\tau$. Also denote the label corresponding to $i$th row by $y_t=\y_{t,i}$. With this notation, SGD over \eqref{sgd loss} runs SGD over the $i$th row with equations $y_t=\phi(\li\cb,\x_t\ri)$ and with loss functions 
\[
\Lc(\bteta)=N^{-1}\sum_{t=1}^N\Lc_t(\bteta),~\Lc_t(\bteta)=\frac{1}{2}(y_t-\phi(\li\bteta,\x_t\ri))^2.
\]
Substituting our high-probability bounds on $\x_t$ (e.g.~\eqref{upplow}) into Theorem \ref{det conv}, we can set $B=(n+p)/(2c_0)$, $\gamma_+=(\theta+\sqrt{2})^2$, and $\gamma_-=\rho^{-1}/2$. Consequently, using the learning rate $\eta=c_0\frac{\beta^2\rho^{-1}}{(\theta+\sqrt{2})^2(n+p)}$, for all $\tau\geq 0$, the $\tau$th SGD iteration $\bteta_\tau$ obeys
\begin{align}
\E[\tn{\bteta_\tau-\cb}^2]\leq \tn{\bteta_0-\cb}^2(1-c_0\frac{\beta^4\rho^{-2}}{2(\theta+\sqrt{2})^2(n+p)})^\tau,\label{sgd ineqs}
\end{align}
where the expectation is over the random selection of SGD updates. This establishes the convergence for a particular row of $\Cb$. Summing up these inequalities \eqref{sgd ineqs} over all rows $\bteta^{(1)}_{\tau},\dots,\bteta^{(n)}_{\tau}$ (which converge to $\cb^{(1)},\dots,\cb^{(n)}$ respectively) yields the targeted bound \eqref{conv bound}.
\end{proof}

\subsection{Proofs of main results on stable systems}

\subsubsection{Proof of Theorem \ref{main thm}}
\begin{proof}Applying Lemmas \ref{upp bound} and \ref{stcov}, independent of $L$, Assumption \ref{ass well} holds with parameters
\[
\gamma_+=B_\infty^2\quad,\quad\gamma_-=\beta^2\lmn{\B}^2\quad,\quad \theta=\sqrt{6n}-\sqrt{2}\geq \sqrt{n}.
\] This yields $(\theta+\sqrt{2})^2=6n$. Hence, we can apply Theorem \ref{MAIN} with the learning rate $\eta=c_0\frac{\beta^2}{6\rho n(n+p)}$ where 
\begin{align}
\rho=\frac{B_\infty^2}{\beta^2\lmn{\B}^2}=\frac{\gamma_+}{\gamma_-},\label{rho def}
\end{align} and convergence rate $1-\frac{\beta^2\eta}{2\rho}$. To conclude with the stated result, we use the change of variable $c_0/6\rightarrow c_0$.
\end{proof}

\subsubsection{Proof of Theorem \ref{thm odd}}
\begin{proof} The proof is similar to that of Theorem \ref{main thm}. Applying Lemmas \ref{upp bound}, \ref{lem odd}, and \ref{stcov}, independent of $L$, Assumption \ref{ass well} holds with parameters
\[
\gamma_+=B_\infty^2\quad,\quad\gamma_-=\lmn{\B}^2\quad,\quad \theta=0.
\] 
Hence, we again apply Theorem \ref{MAIN} with the learning rate $\eta=c_0\frac{\beta^2}{2\rho (n+p)}$ where $\rho$ is given by \eqref{rho def}. Use the change of variable $c_0/2\rightarrow c_0$ to conclude with the stated result.
\end{proof}

\subsection{Learning unstable systems}
In a similar fashion to Section \ref{gen strat}, we provide a more general result on unstable systems that makes a parametric assumption on the statistical properties of the state vector.
\begin{assumption}[Well-behaved state vector -- single timestamp] \label{ass well2} Given timestamp $T_0>0$, there exists positive scalars $\gamma_+,\gamma_-,\theta$ and an absolute constant $C>0$ such that $\theta\leq 3\sqrt{n}$ and the following holds
\begin{align}
\gamma_+\Iden_n\succeq \bSio{\h_{T_0}}\succeq \gamma_-\Iden_n\quad\text{,}\quad\tsub{\h_{T_0}-\E[\h_{T_0}]}\leq C\sqrt{\gamma_+}\quad\text{and}\quad\tn{\E[\h_t]}\leq \theta\sqrt{\gamma_+ }.\label{assupp bound2}
\end{align}
\end{assumption}
The next theorem provides the parametrized result on unstable systems based on this assumption.
\begin{theorem} [Unstable system - general] \label{unstab sys}Suppose we are given $N$ independent trajectories $(\h^{(i)}_{t},\ub^{(i)}_{t})_{t\geq 0}$ for $1\leq i\leq N$. Sample each trajectory at time $T_0$ to obtain $N$ samples $(\y_{i},\h_{i},\ub_{i})_{i=1}^N$ where $i$th sample is
\[
(\y_{i},\h_{i},\ub_{i})=(\h^{(i)}_{T_0+1},\h^{(i)}_{T_0},\ub^{(i)}_{T_0}).
\]
Let $C,c_0>0$ be absolute constants. Suppose Assumption \ref{ass well} holds with $T_0$ and sample size satisfies $N\geq C \rho^2 (n+p)$ where $\rho=\gamma_+/\gamma_-$. Assume $\phi$ is $\beta$-increasing, zero initial state conditions, and $\ub_t\distas\Nn(0,\Iden_p)$. Set scaling to be $\mu=1/\sqrt{\gamma_+}$ and learning rate to be $\eta=c_0\frac{\beta^2}{\rho(\theta+\sqrt{2})^2(n+p)}$. Starting from $\bTeta_0$, we run SGD over the equations described in \eqref{reparam} and \eqref{sgd loss}. With probability $1-2N\exp(-100(n+p))-4\exp(-\order{\frac{N}{\rho^2}})$, all iterates satisfy
\[
\E[\tf{\Theta_{i}-\Cb}^2]\leq (1-c_0\frac{\beta^4}{2\rho^2 (\theta+\sqrt{2})^2(n+p)})^{\tau}\tf{\Theta_{0}-\Cb}^2,
\]
where the expectation is over the randomness of the SGD updates.
\end{theorem}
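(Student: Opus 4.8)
The plan is to reuse the two-stage architecture of the proof of Theorem \ref{MAIN}: first show that the data matrix $\X=[\x_1~\dots~\x_N]^T$ with rows $\x_i=[\mu\h_i^T~\ub_i^T]^T$ is well-conditioned and has $\ell_2$-bounded rows with high probability, then feed these two deterministic facts into the row-wise SGD contraction of Theorem \ref{det conv}. The one structural simplification relative to the stable single-trajectory setting is decisive: because the samples $(\y_i,\h_i,\ub_i)_{i=1}^N$ come from \emph{independent} trajectories all sampled at the same time $T_0$, the rows $\x_i$ are genuinely i.i.d. Consequently I can discard the entire truncation and sub-trajectory merging apparatus of Section \ref{sec trunc} (Lemmas \ref{trunc det}, \ref{trun lem rand}, \ref{lem merge}) and the perturbation step of Lemma \ref{dmc thm}, and apply the covariance concentration estimate directly to the $N$ independent rows with no approximation error, since here $\h_i=\h^{(i)}_{T_0}$ is the exact sample rather than a truncated surrogate.

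First I would record the population statistics of a single scaled row. Since $\h_i=\h^{(i)}_{T_0}$ is a deterministic function of $\{\ub^{(i)}_\tau\}_{\tau=0}^{T_0-1}$ while $\ub_i=\ub^{(i)}_{T_0}$, the two blocks are independent, so the cross-covariance vanishes and $\bSio{\x_i}$ is block diagonal. With the prescribed scaling $\mu=1/\sqrt{\gamma_+}$, the single-timestamp assumption \ref{ass well2} gives $\rho^{-1}\Iden_n\preceq\mu^2\bSio{\h_i}\preceq\Iden_n$ and $\bSio{\ub_i}=\Iden_p$, so (using $\rho\geq 1$)
\[
\rho^{-1}\Iden_{n+p}\preceq\bSio{\x_i}\preceq\Iden_{n+p}.
\]
The same scaling yields $\tsub{\x_i-\E[\x_i]}=\order{1}$ and $\tn{\E[\x_i]}=\mu\tn{\E[\h_i]}\leq\theta$. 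Since every row is identically distributed with these bounds, there is no distinguished ``bad first row'' as in the stable case, so I apply Theorem \ref{H1 bound} directly to all $N$ rows (its built-in mean handling is what produces the $\theta$-dependent upper bound). Using $N\geq C\rho^2(n+p)$, this gives, with probability $1-4\exp(-\order{N/\rho^2})$,
\[
(\theta+\sqrt{2})^2\Iden_{n+p}\succeq\frac{\X^T\X}{N}\succeq\frac{\rho^{-1}}{2}\Iden_{n+p}.
\]

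Next I would bound the row norms exactly as in Lemma \ref{l2normbound}: from $\tsub{\x_i-\E[\x_i]}=\order{1}$, $\tn{\E[\x_i]}\leq\theta\leq 3\sqrt{n}$, and the subgaussian length bound of Lemma \ref{lemma sublen}, a union bound over the $N$ rows gives $\tn{\x_i}\leq c_0\sqrt{n+p}$ for all $i$ with probability $1-2N\exp(-100(n+p))$. Conditioned on both events, I would invoke Theorem \ref{det conv} on each of the $n$ rows of $\Cb$ separately, using that the loss \eqref{sgd loss} is separable as $\tf{\bTeta-\Cb}^2=\sum_{i=1}^n\tn{\bteta^{(i)}-\cb^{(i)}}^2$ and each row obeys a scalar model $y_t=\phi(\li\cb^{(i)},\x_t\ri)$. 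Substituting $B=(n+p)/(2c_0)$, $\gamma_+=(\theta+\sqrt{2})^2$, and $\gamma_-=\rho^{-1}/2$ into Theorem \ref{det conv} reproduces exactly the stated learning rate $\eta=c_0\frac{\beta^2}{\rho(\theta+\sqrt{2})^2(n+p)}$ and the per-row contraction factor $1-c_0\frac{\beta^4}{2\rho^2(\theta+\sqrt{2})^2(n+p)}$; summing these inequalities over the rows gives the advertised matrix bound, and a union bound over the two events gives the failure probability $1-2N\exp(-100(n+p))-4\exp(-\order{N/\rho^2})$.

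Because independence eliminates the truncation/merging argument, there is no serious obstacle of the kind present in the stable analysis, and the proof is essentially a streamlined version of Theorem \ref{MAIN}. The only point requiring care is that $\h_{T_0}$ need not be zero-mean, which degrades the covariance upper bound from $1$ to $(\theta+\sqrt{2})^2$; this is precisely what Theorem \ref{H1 bound} absorbs and what forces the $(\theta+\sqrt{2})^2$ factor into both the learning rate and the contraction rate. To then deduce the clean statement of Theorem \ref{thm unstab}, one substitutes $\gamma_+=B_{T_0}^2$, $\gamma_-=\beta^2\lmn{\B}^2$, $\theta=\sqrt{n}$; the sharper scalar ($n=1$) condition number in \eqref{uns rho} comes entirely from the improved covariance lower bound of Theorem \ref{miso lem} and does not alter the optimization argument above.
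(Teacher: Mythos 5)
Your proposal is correct and follows essentially the same route as the paper's own (much terser) proof: exploit the i.i.d.\ structure of the rows to apply Theorem \ref{H1 bound} and Lemma \ref{lemma sublen} directly to the data matrix, then feed the resulting bounds $B=(n+p)/(2c_0)$, $\gamma_+=(\theta+\sqrt{2})^2$, $\gamma_-=\rho^{-1}/2$ into the row-wise contraction of Theorem \ref{det conv}. The additional details you supply (the block-diagonal population covariance, the observation that no truncation/merging or perturbation step is needed, and the downstream substitutions for Theorem \ref{thm unstab}) are all consistent with what the paper does.
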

\begin{proof} Set $\x_i=[\gamma_+^{-1/2}\h_i^T~\ub_i^T]^T$ and $\X=[\x_1~\dots~\x_N]^T$. Since $\X$ has i.i.d.~rows, we can apply Theorem \ref{H1 bound} and Lemma \ref{lemma sublen} to find with the desired probability that
\begin{itemize}
\item Rows of $\x_i$ satisfy $\tsub{\x_i-\E[\x_i]}\leq \order{1}$ and $\E[\tn{\x_i}]\leq 3\sqrt{n}$, hence all rows of $\X$ obeys $\tn{\x_i}\leq \sqrt{(n+p)/(2c_0)}$,
\item $\X$ satisfies
\[
(\theta+\sqrt{2})^2\succeq \frac{\X^T\X}{N} \succeq \rho^{-1}/2.
\]
\end{itemize}
To proceed, using $\gamma_-=\rho^{-1}/2$, $B=(n+p)/(2c_0)$, and $\gamma_+=(\theta+\sqrt{2})^2$, we apply Theorem \ref{det conv} on the loss function \eqref{sgd loss}; which yields the desired result.
\end{proof}
\subsection{Proof of Theorem \ref{thm unstab}}

\begin{proof} The proof is a corollary of Theorem \ref{unstab sys}. We need to substitute the proper values in Assumption \ref{ass well2}. Applying Lemma \ref{upp bound}, we can substitute $\gamma_+=B_{T_0}^2$ and $\theta=\sqrt{6n}-\sqrt{2}\geq \sqrt{n}$. Next, we need to find a lower bound. Applying Lemma \ref{stcov} for $n>1$ and Lemma \ref{miso lem} for $n=1$, we can substitute $\gamma_-=\gamma_+/\rho$ with the $\rho$ definition of \eqref{uns rho}. With these, the result follows as an immediate corollary of Theorem \ref{unstab sys}.
\end{proof}

\section{Supplementary Statistical Results}

The following theorem bounds the empirical covariance of matrices with independent subgaussian rows.
\begin{theorem}\label{H1 bound} Let $\A\in\R^{n\times d}$ be a matrix with independent $\{\ab_i\}_{i=1}^n$ subgaussian rows satisfying $\tsub{\zm{\ab_i}}\leq \order{K}$ and $\bSio{\ab_i}\preceq K^2\Iden_d$ for some $K>0$ and $\tn{\E[\ab_i]}\leq \theta$. Suppose $\bSio{\ab_i}\succeq\la \Iden_d$. Suppose $n\geq \order{K^4d/\la^2}$. Then, with probability at least $1-4\exp(-cK^{-4}\la^2n)$,
\[
\theta+\sqrt{3/2}K\geq \frac{1}{\sqrt{n}}\|\A\|\geq \frac{1}{\sqrt{n}}\lmn{\A}\geq \sqrt{2\la/3}.
\]
\end{theorem}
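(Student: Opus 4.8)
The plan is to derive both singular-value bounds from the empirical second-moment matrix $\frac1n\A^T\A=\frac1n\sum_{i=1}^n\ab_i\ab_i^T$, splitting each row into its mean and fluctuation. Write $\ab_i=\bar{\ab}_i+\bmu_i$ with $\bmu_i=\E[\ab_i]$, so that $\A=\Ab+\M$, where $\Ab$ has the centered rows $\bar{\ab}_i$ and $\M$ stacks the mean rows $\bmu_i^T$. The covariance hypotheses are properties of the centered rows, $\la\Iden_d\preceq\E[\bar{\ab}_i\bar{\ab}_i^T]\preceq K^2\Iden_d$ and $\tsub{\bar{\ab}_i}\leq\order K$, while $\tn{\bmu_i}\leq\theta$ controls the mean. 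In particular $\|\M\|\leq\tf{\M}=(\sum_i\tn{\bmu_i}^2)^{1/2}\leq\sqrt n\,\theta$.

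For the upper bound I would simply use the triangle inequality $\tfrac1{\sqrt n}\|\A\|\leq\tfrac1{\sqrt n}\|\Ab\|+\tfrac1{\sqrt n}\|\M\|\leq\tfrac1{\sqrt n}\|\Ab\|+\theta$ and reduce to showing $\|\tfrac1n\Ab^T\Ab\|\leq\tfrac32K^2$. This is the standard concentration of an empirical covariance with independent zero-mean subgaussian rows: a fixed $1/4$-net over the sphere reduces it to controlling, for each net vector $\vb$, the sum $\frac1n\sum_i\langle\bar{\ab}_i,\vb\rangle^2$ whose summands are subexponential with $\te{\langle\bar{\ab}_i,\vb\rangle^2}\leq\order{K^2}$; Bernstein's inequality gives a deviation $\order{K^2\sqrt{d/n}}$ around a mean that is at most $K^2$, so $n\geq\order{K^4d/\la^2}$ forces the deviation below $\tfrac12K^2$ with probability $1-2\exp(-\order n)$. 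This yields $\tfrac1{\sqrt n}\|\Ab\|\leq\sqrt{3/2}\,K$ and hence the advertised upper bound $\theta+\sqrt{3/2}\,K$.

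The lower bound is the crux, since $\lmn{\A}\geq\lmn{\Ab}-\|\M\|$ is vacuous once $\theta$ is large, and splitting operator norms loses too much: the cross term $\tfrac2n(\Ab^T\M+\M^T\Ab)$ has operator norm of order $K\theta$. The key is to argue \emph{per direction}, where the cross term is paired with the positive quadratic contribution of the mean. Fix a unit $\vb$, set $m_i=\langle\bmu_i,\vb\rangle$ and $s=\frac1n\sum_i m_i^2=\frac1n\|\M\vb\|^2$; then $\E[\frac1n\|\A\vb\|^2]=s+\frac1n\sum_i\vb^T\bSio{\ab_i}\vb\geq s+\la$, and the fluctuation of $\frac1n\|\A\vb\|^2$ is the sum of a centered-square part of subexponential scale $\order{K^2}$ and a cross part $\frac2n\sum_i m_i\langle\bar{\ab}_i,\vb\rangle$ that is subgaussian with variance $\order{K^2s/n}$. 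To force $\frac1n\|\A\vb\|^2\geq\frac23\la$ it suffices that the square part exceed $-\frac\la6$ and the cross part exceed $-(\frac\la6+s)$; applying $(\frac\la6+s)^2/s\geq\frac23\la$ (AM--GM) the cross-part tail is $\exp(-\order{n\la/K^2})$ and the square-part tail is $\exp(-\order{n\la^2/K^4})$, both dominated by $\exp(-\order{n\la^2/K^4})$ since $\la\leq K^2$. Crucially the bound is free of $\theta$, because the growth of $s$ enlarges the mean $s+\la$ by exactly enough to absorb the cross term it creates.

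Finally I would promote this pointwise estimate to a uniform lower bound over the sphere by a standard $\eps$-net argument (chaining if one wants to shave logarithmic factors), transferring between vectors with the already-established upper bound on $\frac1{\sqrt n}\|\A\|$, after which $\lmn{\frac1n\A^T\A}\geq\frac23\la$, i.e. $\frac1{\sqrt n}\lmn{\A}\geq\sqrt{2\la/3}$, holds on an event of probability $1-4\exp(-\order{K^{-4}\la^2 n})$; the hypothesis $n\geq\order{K^4d/\la^2}$ is precisely what lets the exponent $\order{n\la^2/K^4}$ beat the net cardinality. The main obstacle throughout is the mean: it must be kept coupled to its own quadratic contribution in the per-direction tail (rather than bounded separately in operator norm) so that the failure probability remains $\theta$-free and matches the stated $cK^{-4}\la^2 n$.
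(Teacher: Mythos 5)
Your pointwise estimate is sound, and its key insight --- that the mean's quadratic contribution $s=\frac1n\tn{\M\vb}^2$ must stay coupled to the cross term it generates, rather than bounding the mean matrix separately in operator norm --- is exactly the mechanism the paper exploits. The gap is in the final uniformity step. Your per-direction tail is $\theta$-free, but the transfer between net points uses the upper bound $\frac1{\sqrt n}\|\A\|\leq\theta+\sqrt{3/2}K$, so the net resolution must be of order $\sqrt{\la}/(\theta+K)$ and the net cardinality becomes $\exp\big(\order{d\log\frac{\theta+K}{\sqrt{\la}}}\big)$. The hypothesis $n\geq\order{K^4d/\la^2}$ only supplies an exponent $\order{n\la^2/K^4}\gtrsim d$, which does not beat this cardinality unless the sample size is inflated by a $\log\theta$ factor (and in the paper's application $\theta$ can be as large as $3\sqrt n$). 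So the $\theta$-freeness you carefully preserved pointwise is lost in the union bound, and the stated conclusion --- probability $1-4\exp(-cK^{-4}\la^2 n)$ under a $\theta$-free sample-size condition --- is not reached by the argument as written.

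The paper closes this hole with a structural fact your proof does not invoke: in its setting all rows share the same mean, so $\Eb=\E[\A]=\onebb_n\bmu^T$ is rank one and supported on the all-ones direction. Projecting via $\Pb=\Iden_n-\frac1n\onebb_n\onebb_n^T$ annihilates $\Eb$ exactly, and $\tn{\A\vb}\geq\tn{\Pb\Aba\vb}\geq\tn{\Aba\vb}-\frac1{\sqrt n}|\onebb_n^T\Aba\vb|$; the correction is controlled uniformly in $\vb$ by the single vector norm $\frac1{\sqrt n}\tn{\Aba^T\onebb_n}$, whose tail is subgaussian and $\theta$-free. Equivalently, in your notation the cross term with equal means is $\frac2n\li\bmu,\vb\ri\li\Aba^T\onebb_n,\vb\ri$, and completing the square against $s=\li\bmu,\vb\ri^2$ reduces the entire absorption step to the single event $\frac1{n^2}\tn{\Aba^T\onebb_n}^2\leq\la/6$ --- no $\theta$-dependent net needed. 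Your version is more general in that it allows rows with distinct means, but then you need a genuinely different uniformity device; under the equal-means assumption the paper actually uses (see its remark ``the rows of $\Eb$ are equal''), your absorption idea pushed one step further coincides with the paper's projection trick. The remaining ingredients are interchangeable: your Bernstein-plus-constant-resolution-net control of $\frac1n\Aba^T\Aba$ plays the role of the paper's Talagrand mixed-tail chaining bound.
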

\begin{proof} 
Let $\Eb=\E[\A],~\Aba=\A-\E[\A],~\abb_i=\zm{\ab_i}$. Observe that
\[
\tn{\A\vb}^2-\E[\tn{\A\vb}^2]=\tn{\Aba\vb}^2+2\vb^T\Aba\Eb\vb+\tn{\Eb\vb}^2-\E[\tn{\A\vb}^2]=\tn{\Aba\vb}^2-\E[\tn{\Aba\vb}^2]+2\vb^T\Aba^T\Eb\vb.
\]
Define the random process $X_{\vb}=\tn{\Aba\vb}^2$ and $Y_{\vb}=X_{\vb}-\E[X_{\vb}]$. First, we provide a deviation bound for the quantity $\sup_{\vb\in \Sc^{d-1}}|Y_{\vb}|$. To achieve this, we will utilize Talagrand's mixed tail bound and show that increments of $Y_{\vb}$ are subexpoential. Pick two unit vectors $\vb,\ub\in\R^d$. Write $\x=\ub+\vb,\y=\ub-\vb$. We have that
\[
X_{\ub}-X_{\vb}=\tn{\Aba\ub}^2-\tn{\Aba\vb}^2=\tn{\Aba(\x+\y)/2}^2-\tn{\Aba(\x-\y)/2}^2=\x^T\Aba^T\Aba\y= \sum_{i=1}^n (\abb_i^T\x)(\abb_i^T\y).
\]
Letting $\xh=\x/\tn{\x},\yh=\y/\tn{\y}$, observe that, multiplication of subgaussians $\x^T\abb_i,\y^T\abb_i$ obey
\[
\te{(\x^T\abb_i)(\y^T\abb_i)}\leq \order{\tn{\x}\tn{\y}K^2}\leq \order{K^2\tn{\y}}.
\]
Centering this subexponential variable around zero introduces a factor of $2$ when bounding subexponential norm and yields $\te{(\x^T\abb_i)(\y^T\abb_i)-\E[(\x^T\abb_i)(\y^T\abb_i)]}\leq \order{K^2\tn{\y}}$.
Now, using the fact that $Y_{\ub}-Y_{\vb}$ is sum of $n$ independent zero-mean subexponential random variables, we have the tail bound
\[
\Pro(n^{-1}|Y_{\ub}-Y_{\vb}|\geq t) \leq 2\exp(-c'n\min\{\frac{t^2}{K^4\tn{\y}^2},\frac{t}{K^2\tn{\y}}\}).
\]
 Applying Talagrand's chaining bound for mixed tail processes with distance metrics $\rho_2=\frac{K^2\tn{\cdot}}{\sqrt{n}},\rho_1=\frac{K^2\tn{\cdot}}{n}$, (Theorem $3.5$ of \cite{dirksen2013tail} or Theorem $2.2.23$ of \cite{talagrand2014gaussian}) and using the fact that for unit sphere $\Sc^{d-1}$, Talagrand's $\gamma$ functionals (see \cite{talagrand2014gaussian}) obey $\gamma_1(\Sc^{d-1}),\gamma_2^2(\Sc^{d-1})\leq \order{d}$,
 \begin{align}
  n^{-1}\sup_{\vb\in \Sc^{d-1}} |Y_{\vb}|&\leq cK^2( \sqrt{d/n}+d/n+t/\sqrt{n}),\label{bbound 1}
 \end{align}
 with probability $1-2\exp(-\min\{t^2 ,\sqrt{n}t\})$. Since $n\geq C\la^{-2}K^{4}d$ for sufficiently large $C>0$, picking $t=\frac{1}{16c}K^{-2}\la\sqrt{n}$, with probability $1-2\exp(-\order{K^{-4}\la^2n})$, we ensure that right hand side of \eqref{bbound 1} is less than $\la /8$. This leads to the following inequalities
 \begin{align}
 \frac{1}{n}\|\Aba^T\Aba-\E[\Aba^T\Aba]\|\leq \frac{\la}{8}&\implies\frac{9K^2}{8}\Iden_d\succeq\frac{1}{n}\Aba^T\Aba\succeq \frac{7\la}{8}\Iden_d.\label{my line}\\
 &\implies \frac{9}{8}K\geq \frac{1}{\sqrt{n}}\|\Aba\|\geq \lmn{\Aba}\geq  \sqrt{\frac{7}{8}\la}.\nn
 \end{align}
 Denote the size $n$ all ones vector by $\onebb_n$. Next, we define the process $Z_{\vb}=\frac{1}{\sqrt{n}}\onebb_n^T\Aba\vb$. Observe that $\Aba^T\onebb_n=\sum_{i=1}^n\abb_i\in\R^d$ is a vector satisfying $\tsub{\Aba^T\onebb_n/\sqrt{n}}\leq \order{K}$. Hence, again using $n\geq C{K^4\la^{-2}d}$ for sufficiently large $C>0$, applying Lemma \ref{lemma sublen} with $m=c_0K^{-4}\la^2{n}>d$ by picking a sufficiently small constant $c_0>1/C$, with probability at least $1-2\exp(-100c_0K^{-4}\la^2n)$
\[
\frac{1}{\sqrt{n}}\sup_{\tn{\vb}=1}|Z_{\vb}|=\frac{1}{n}\tn{\Aba^T\onebb_n}\leq \frac{1}{12}{KK^{-2}\la}\leq \frac{\sqrt{\la}}{12}.
\]
Let $\Pb=\Iden_n-\frac{1}{{n}}\onebb_n\onebb_n^T$ be the projection onto the orthogonal complement of the all ones vector. Note that $\Pb\Eb\vb=0$ as the rows of $\Eb$ are equal. With this observation, with desired probability, for any unit length $\vb$, 
\begin{align}
\tn{\A\vb}\geq \tn{\Pb\A\vb}=\tn{\Pb\Aba\vb}\geq \tn{\Aba\vb}-|Z_{\vb}|\geq \lmn{\Aba}-\sup_{\vb\in\Sc^{d-1}}|Z_{\vb}|\geq (\sqrt{7/8}-1/12)\sqrt{\la n},
\end{align}
which implies $\lmn{\A}/\sqrt{n}\geq \sqrt{2\la/3}$. For spectral norm of $\A$, we use the naive bound
\[
\frac{1}{\sqrt{n}}\|\A\|\leq \frac{1}{\sqrt{n}}(\|\Eb\|+\|\Aba\|)\leq \max_{1\leq i\leq n}\tn{\E[\ab_i]}+9K/8\leq \theta+\sqrt{3/2}K.
\]
\end{proof}
The corollary below is obtained by slightly modifying the proof above by using $ \frac{1}{n}\|\Aba^T\Aba-\E[\Aba^T\Aba]\|\leq \frac{K^2}{8}$ in line \eqref{my line} and only focusing on the spectral norm bound.
\begin{corollary}\label{H1 corr} Let $\A\in\R^{n\times d}$ be a matrix with independent $\{\ab_i\}_{i=1}^n$ subgaussian rows satisfying $\tsub{\zm{\ab_i}}\leq \order{K}$ and $\bSio{\ab_i}\preceq K^2\Iden_d$ for some $K>0$ and $\tn{\E[\ab_i]}\leq \theta$. Suppose $\bSio{\ab_i}\succeq\la \Iden_d$. Suppose $n\geq \order{K^2d}$. Then, with probability at least $1-4\exp(-cK^{-2}n)$,
\[
\theta+\sqrt{3/2}K\geq \frac{1}{\sqrt{n}}\|\A\|.
\]
\end{corollary}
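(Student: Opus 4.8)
The plan is to recognize Corollary \ref{H1 corr} as the spectral-norm (upper-bound) half of Theorem \ref{H1 bound}, proved under a relaxed sample budget, and therefore to recycle the argument of that theorem while discarding everything that served only the lower bound on $\lmn{\A}$. Writing $\Eb=\E[\A]$ and $\Aba=\A-\Eb$, I would begin from the triangle inequality $\frac{1}{\sqrt n}\|\A\|\leq \frac{1}{\sqrt n}\|\Eb\|+\frac{1}{\sqrt n}\|\Aba\|$ and control the two pieces separately. The mean part is deterministic: the rows of $\Eb$ are the vectors $\E[\ab_i]$, each of norm at most $\theta$, so $\|\Eb\|\leq \|\Eb\|_F\leq \sqrt{n}\,\theta$, i.e. $\frac{1}{\sqrt n}\|\Eb\|\leq \theta$. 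Only the fluctuation term $\Aba$ requires probabilistic work.

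For $\|\Aba\|$ I would reuse verbatim the chaining machinery already developed inside the proof of Theorem \ref{H1 bound}. Setting $Y_{\vb}=\tn{\Aba\vb}^2-\E[\tn{\Aba\vb}^2]$, the increments $Y_{\ub}-Y_{\vb}$ are sums of $n$ independent centered subexponential variables, and Talagrand's mixed-tail chaining over $\Sc^{d-1}$ yields inequality \eqref{bbound 1}, namely $n^{-1}\sup_{\vb}|Y_{\vb}|=\frac1n\|\Aba^T\Aba-\E[\Aba^T\Aba]\|\leq cK^2(\sqrt{d/n}+d/n+t/\sqrt n)$ with probability $1-2\exp(-\min\{t^2,\sqrt n t\})$. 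Crucially this bound is available for every value of the deviation parameter $t$, so no new estimate is needed; the entire difference from the theorem lies in how $t$ is chosen.

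The single modification, as indicated in the remark preceding the statement, is to drive the right-hand side of \eqref{bbound 1} below $K^2/8$ rather than below $\la/8$. Since $\frac1n\E[\Aba^T\Aba]=\frac1n\sum_i\bSio{\ab_i}\preceq K^2\Iden_d$, this looser target already forces $\frac1n\Aba^T\Aba\preceq \frac{9K^2}{8}\Iden_d$, hence $\frac{1}{\sqrt n}\|\Aba\|\leq \sqrt{3/2}K$ once the slack is absorbed. Because the allowed deviation is now of order $K^2$ rather than $\la$, the parameter $t$ no longer carries the factor $\la/K^2$ that appears in Theorem \ref{H1 bound}; selecting $t$ to meet the new target and reading it back through both the smallness of the first two terms of \eqref{bbound 1} and the mixed-tail exponent $\min\{t^2,\sqrt n t\}$ reproduces the relaxed sample size $n\geq\order{K^2 d}$ and the improved failure probability $1-4\exp(-cK^{-2}n)$ recorded in the statement (the $K$-powers being tracked conservatively through the chaining constants). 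Combining with the bound on $\Eb$ gives $\frac{1}{\sqrt n}\|\A\|\leq \theta+\sqrt{3/2}K$.

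Since only an upper bound is sought, I would discard the cross term $2\vb^T\Aba^T\Eb\vb$, the auxiliary process $Z_{\vb}=\frac{1}{\sqrt n}\onebb_n^T\Aba\vb$, and the projection $\Pb$ used in Theorem \ref{H1 bound}, all of which appeared there solely to extract the matching lower bound $\lmn{\A}\geq\sqrt{2\la/3}\,\sqrt n$. The substantive content thus lives entirely in the already-proven theorem, and the only real obstacle is bookkeeping: confirming that the relaxed target $K^2/8$ is consistent with \eqref{bbound 1}, that the constants still close up to the clean bound $\sqrt{3/2}K$, and that the corresponding choice of $t$ delivers the advertised sample and probability rates.
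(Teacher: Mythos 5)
Your proposal is correct and follows essentially the same route as the paper, which obtains the corollary by rerunning the proof of Theorem \ref{H1 bound} with the relaxed target $\frac{1}{n}\|\Aba^T\Aba-\E[\Aba^T\Aba]\|\leq K^2/8$ in place of $\la/8$ and keeping only the spectral-norm half (triangle inequality for the mean part, chaining bound \eqref{bbound 1} for the fluctuation). Your explicit accounting of which pieces of the theorem's proof can be discarded (the cross term, the process $Z_{\vb}$, and the projection $\Pb$) matches what the paper leaves implicit.
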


The following lemma is fairly standard and is proved for the sake of completeness.
\begin{lemma} [Subgaussian vector length] \label{lemma sublen}Let $\ab\in\R^n$ be a zero-mean subgaussian vector with $\tsub{\ab}\leq L$. Then, for any $m\geq n$, there exists $C>0$ such that 
\[
\Pro(\tn{\ab}\leq CL\sqrt{m})\geq 1-2\exp(-100m).
\]
\end{lemma}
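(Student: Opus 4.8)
The plan is to bound $\tn{\ab}$ through a standard covering argument on the unit sphere, which converts the vector-length tail into a union bound over scalar subgaussian tails. The starting point is the definition of the subgaussian norm of a vector (Definition \ref{ornorm}): for every fixed unit vector $\vb\in\Sc^{n-1}$, the scalar $\vb^T\ab$ is zero-mean with $\tsub{\vb^T\ab}\leq L$, and hence obeys the subgaussian tail bound $\Pro(|\vb^T\ab|\geq t)\leq 2\exp(-ct^2/L^2)$ for an absolute constant $c>0$.

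First I would fix a $1/2$-net $\Nc$ of the sphere $\Sc^{n-1}$, whose cardinality satisfies the standard volumetric estimate $|\Nc|\leq 5^n$. The key deterministic inequality is
\[
\tn{\ab}=\sup_{\vb\in\Sc^{n-1}}\vb^T\ab\leq 2\max_{\vb\in\Nc}\vb^T\ab,
\]
obtained by approximating any norm-achieving unit vector by its nearest net point (within distance $1/2$) and absorbing the residual inner product back into the supremum. Next I would union bound the scalar tail over the net to get
\[
\Pro\Big(\max_{\vb\in\Nc}|\vb^T\ab|\geq t\Big)\leq 2|\Nc|\exp(-ct^2/L^2)\leq 2\exp\big(n\log 5-ct^2/L^2\big).
\]

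Finally, combining the two displays and setting $t=C'L\sqrt{m}$ so that $2t=CL\sqrt{m}$, the exponent becomes $n\log 5-cC'^2 m$; since $m\geq n$ we may replace $n\log 5$ by $m\log 5$, yielding $\Pro(\tn{\ab}\geq CL\sqrt{m})\leq 2\exp(-(cC'^2-\log 5)m)$. Choosing the constant $C'$ (and hence $C$) large enough that $cC'^2-\log 5\geq 100$ delivers the advertised bound. The argument is routine; the only thing requiring attention is the constant bookkeeping needed to hit the precise exponent $100m$ used elsewhere in the paper, and observing that it is exactly the hypothesis $m\geq n$ that lets the net entropy term $n\log 5$ be dominated by the Gaussian-type decay.
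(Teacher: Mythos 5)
Your proposal is correct and follows essentially the same route as the paper's own proof: a $1/2$-net of the unit sphere (with the same volumetric cardinality bound, $\log 5^n \leq 2n$), a union bound over scalar subgaussian tails, the factor-of-$2$ net approximation $\tn{\ab}\leq 2\max_{\vb}\vb^T\ab$, and the observation that $m\geq n$ lets the entropy term be absorbed into the exponent. No substantive differences.
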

\begin{proof} We can pick a $1/2$ cover $\Cc$ of the unit $\ell_2$-sphere with size $\log |\Cc|\leq 2n$. For any $\vb\in \Cc$, subgaussianity implies, $\Pro(|\vb^T\ab|\geq t)\leq 2\exp(-\frac{ct^2}{2L^2})$. Setting $t=CL\sqrt{m}$ for sufficiently large constant $C>0$, and union bounding over all $\vb\in\Cc$, we find 
\[
\Pro(\bigcap_{\vb\in\Cc}\tn{\vb}\leq CL\sqrt{m})\geq 1-2\exp(2n-\frac{cC^2L^2m}{2L^2})\leq 1-2\exp(-100m).
\]
To conclude, let $\vb(\ab)\in\Cc$ be $\ab$'s neighbor satisfying $\tn{\vb-\frac{\ab}{\tn{\ab}}}\leq 1/2$. Hence, we have
\[
\tn{\ab}\leq \tn{(\ab-\vb(\ab))^T\ab}+\tn{\vb^T\ab}\leq \tn{\ab}/2+CL\sqrt{m}\implies \tn{\ab}\leq 2CL\sqrt{m}.
\]
To conclude, use the change of variable $C\rightarrow C/2$.
\end{proof}

\end{document}